\documentclass[twoside,11pt]{article}

% Any additional packages needed should be included after jmlr2e.
% Note that jmlr2e.sty includes epsfig, amssymb, natbib and graphicx,
% and defines many common macros, such as 'proof' and 'example'.
%
% It also sets the bibliographystyle to plainnat; for more information on
% natbib citation styles, see the natbib documentation, a copy of which
% is archived at http://www.jmlr.org/format/natbib.pdf

% Available options for package jmlr2e are:
%
%   - abbrvbib : use abbrvnat for the bibliography style
%   - nohyperref : do not load the hyperref package
%   - preprint : remove JMLR specific information from the template,
%         useful for example for posting to preprint servers.
%
% Example of using the package with custom options:
%
% \usepackage[abbrvbib, preprint]{jmlr2e}

\usepackage[abbrvbib, preprint]{jmlr2e}

\usepackage{commath,mathtools}
\usepackage{algorithm,algorithmic}
\usepackage{subcaption}
\usepackage{color}
\usepackage{rotating}

\usepackage{commath,mathtools}
\usepackage{algorithm,algorithmic}
\usepackage{subcaption}
\usepackage{rotating}

\usepackage{booktabs}
\usepackage{siunitx}

% Definitions of handy macros can go here

% \newcommand{\dataset}{{\cal D}}
% \newcommand{\fracpartial}[2]{\frac{\partial #1}{\partial  #2}}

% Heading arguments are {volume}{year}{pages}{date submitted}{date published}{paper id}{author-full-names}

\jmlrheading{1}{2021}{1-48}{4/00}{10/00}{meila00a}{Shaojun Ma, Haodong Sun, Xiaojing Ye, Hongyuan Zha and Haomin Zhou}

% Short headings should be running head and authors last names

\ShortHeadings{Learning Cost Function for Optimal Transport}{Ma, Sun, Ye, Zha, and Zhou}
\firstpageno{1}

% \newtheorem{theorem}{Theorem}
% \newtheorem{corollary}{Corollary}[theorem]
% \newtheorem{definition}{Definition}[theorem]
% \newtheorem{lemma}[theorem]{Lemma}
% \newtheorem*{remark}{Remarks}

% Definitions of handy macros can go here

\DeclareMathOperator*{\argmin}{\mathrm{arg\,min}}

\DeclareMathOperator{\prox}{\mathrm{prox}}

\newcommand{\Ehat}{\hat{\mathbb{E}}}
\newcommand{\Etd}{\widetilde{E}}
\newcommand{\Dcal}{\mathcal{D}}
\newcommand{\Wcal}{\mathcal{W}}
\newcommand{\Wcaltd}{\widetilde{\mathcal{W}}}
\newcommand{\pidata}{\hat{\pi}}
\newcommand{\KL}{\text{KL}}

\begin{document}

\title{Learning Cost Function for Optimal Transport}

\author{\name Shaojun Ma \email shaojunma@gatech.edu \\
       \addr School of Mathematics\\
       Georgia Institute of Technology\\
       Atlanta, GA 30332, USA
       \AND
       \name Haodong Sun \email hdsun@gatech.edu \\
       \addr School of Mathematics\\
       Georgia Institute of Technology\\
       Atlanta, GA 30332, USA
       \AND
       \name Xiaojing Ye \email xye@gsu.edu \\
       \addr Corresponding author. Department of Mathematics and Statistics\\
       Georgia State University\\
       Atlanta, GA 30303, USA
       \AND
       \name Hongyuan Zha \email zhahy@cuhk.edu.cn \\
       \addr School of Data Science, Shenzhen Research Institute of Big Data\\
       The Chinese University of Hong Kong\\
       Shenzhen, Guangdong 518172, China
       \AND
       \name Haomin Zhou \email hmzhou@gatech.edu \\
       \addr School of Mathematics\\
       Georgia Institute of Technology\\
       Atlanta, GA 30332, USA}

\editor{}

\maketitle

\begin{abstract}
Inverse optimal transport (OT) refers to the problem of learning the cost function for OT from observed transport plan or its samples. In this paper, we derive an unconstrained convex optimization formulation of the inverse OT problem, which can be further augmented by any customizable regularization. We provide a comprehensive characterization of the properties of inverse OT, including uniqueness of solutions. We also develop two numerical algorithms, one is a fast matrix scaling method based on the Sinkhorn-Knopp algorithm for discrete OT, and the other one is a learning based algorithm that parameterizes the cost function as a deep neural network for continuous OT. The novel framework proposed in the work avoids repeatedly solving a forward OT in each iteration which has been a thorny computational bottleneck for the bi-level optimization in existing inverse OT approaches. Numerical results demonstrate promising efficiency and accuracy advantages of the proposed algorithms over existing state-of-the-art methods.
\end{abstract}

\begin{keywords}
    Inverse Problem, Optimal Transport, Cost Function, Bi-level Optimization, Deep Neural Networks.
\end{keywords}

\section{Introduction}
\label{sec:intro}

\paragraph{Optimal transport} Optimal transport (OT) is a mathematical subject involving a wide range of areas, including differential geometry, partial differential equations, optimization, and probability theory \citep{ambrosio2003optimal,villani2008optimal}. It has observed many applications in machine learning in recent years \citep{peyre2019computational}.
Let $X$ and $Y$ be two measure spaces, $\mu \in P(X)$ be a probability measure on $X$ and $\nu \in P(Y)$ a probability on $Y$, and $c(x,y): X \times Y \to \mathbb{R}$ be a cost function (also known as the ground metric, although it needs not be a metric in the standard sense) that quantifies the effort of moving one unit of mass from location $x$ to location $y$.
Then the problem of optimal transport aims at finding the joint probability measure $\pi^* \in P(X \times Y)$, with marginal distributions $\mu$ and $\nu$, that minimizes the total cost. More precisely, $\pi^*$ solves the following constrained minimization problem:
\begin{equation}\label{eq:ot}
\min_{\pi \in \Pi(\mu,\nu)}  \int_{X \times Y} c(x,y) \; d \pi(x,y),
\end{equation}
where $\Pi(\mu, \nu)$ is the set of probability measures $\pi \in P(X\times Y)$ such that
\begin{equation}
    \label{eq:Pi}
    \pi(A \times Y) = \mu(A) \quad \mbox{and}\quad \pi(X\times B) = \nu(B)
\end{equation}
for any measurable set $A$ of $X$ and $B$ of $Y$.
A solution $\pi^*$ to \eqref{eq:ot} is called an optimal transport plan between $\mu$ and $\nu$ under $c$.
%
%\begin{remark}
%{It is worth noting that we consider the case where the probability measure $\pi$ (resp.~$\mu$ and $\nu$) is absolutely continuous with respect to the Lebesgue measure on $X \times Y$ (resp.~$X$ and $Y$) and represents the corresponding Radon-Nikodym derivative, which reduces to the joint density function $\pi$ (resp.~marginal density function $\mu$ and $\nu$) here. This setting is sufficiently general for analysis and computation of the continuous case in this work. For the discrete inverse OT case, $\mu$ and $\nu$ are vectors in the finite-dimensional probability simplex.}
%\end{remark}
%
The optimal function value of \eqref{eq:ot} is also called the earth mover's distance (EMD) between the two distributions $\mu$ and $\nu$, suggesting the minimal total effort to move a pile of earth shaped as $\mu$ to form the pile shaped as $\nu$.

The discrete version of \eqref{eq:ot} reduces to a linear program. In this case, $\mu \in \Delta^{m-1}$ and $\nu \in \Delta^{n-1}$ become two probability vectors, where $\Delta^{n-1}:=\{x = (x_1,\dots,x_n) \in \mathbb{R}^n: x_i\ge 0,\ \sum_{i=1}^n x_i = 1\}$ stands for the standard probability simplex in $\mathbb{R}^n$, and the cost $c$ and transport plan $\pi$ each renders an $m \times n$ matrix.
Then the discrete OT problem reads
\begin{equation}\label{eq:dis_ot}
\min_{\pi \in \mathbb{R}^{m\times n}} \cbr[2]{\langle c,\pi \rangle \, : \, \pi\ge 0,\ \pi 1_n = \mu, \ \pi^{\top}1_m = \nu},
\end{equation}
where $\langle c,\pi \rangle:=\text{tr}(c^{\top}\pi)=\sum_{i,j}c_{ij}\pi_{ij}$ is the discretized total cost under transport plan $\pi$, and $1_n:=[1,\dots,1]^{\top}\in\mathbb{R}^n$. We will use these matrix and vector notations exclusively in Sections \ref{subsec:dis_iot}, \ref{subsec:dis_iot_synthetic}, and \ref{subsec:dis_iot_marriage} during the discussions on discrete inverse OT.

It is important to note that a brute-force discretization of \eqref{eq:ot} yields in general a computationally intractable problem \eqref{eq:dis_ot} when $X$ and $Y$ are in higher dimensional space:
%dimensionality $d \ge 3$:
$m$ represents the number of grid points (or bins) for discretizing the space $X \subset \mathbb{R}^d$, and hence it grows exponentially fast in $d$, i.e., $m=N^d$ where $N$ is the discretization points (resolution) in each dimension. Similar for $n$ and $Y$. This is well known as the ``curse of dimensionality'' in the literature. Therefore, continuous OT and its inverse problem require a vastly different approach in analysis and computation.

The past two decades have witnessed substantial developments in the theory and computation of OT.
Moreover, OT gained significant interests in the machine learning community in the past few years, and has been employed in a variety of applications, such as domain adaption \citep{courty2016optimal}, regularization \citep{shafieezadeh2017regularization}, parameter estimation \citep{dessein2017parameter}, dictionary learning \citep{rolet2016fast}, Kalman filtering \citep{abadeh2018wasserstein}, image processing \citep{papadakis2015optimal}, graph partition \citep{abrishami2019geometry}, information geometry \citep{amari2018information}, among many others.

\paragraph{Motivation} 
In this paper, we consider the inverse problem of OT, i.e., learning the cost function $c$ from observations of the joint distribution $\pi^*$ or its samples.
This work is motivated by a critical issue of OT in all real-world applications: the solution to OT heavily depends on the cost function $c$, which is the sole latent variable in the OT problem \eqref{eq:ot} to deduce the optimal transport plan for any give pair of marginal distributions $(\mu,\nu)$. Therefore, the cost function is of paramount importance in shaping the optimal transport plan $\pi^*$ for further analysis and inference.

In most existing applications of OT, the cost function is simply chosen as a distance-like functions, such as $c(x,y) = \|x-y\|^p$ where $p>0$, according to users' preferences.
However, there are infinitely many choices of $p$, and cost functions need not be even distance-like in practice. Therefore, a user-chosen cost function may incorrectly estimate the cost incurred to transfer probability masses and thus fail to capture the actual underlying structures and properties of the data. Eventually, a mis-specified cost function results in severely biased optimal transport plan given new marginal distribution pairs, leading to false claims and invalid inferences. 

To address the aforementioned issue, we propose to leverage observed pairing data available in practice, which are samples or realizations of the optimal transport plan, to reconstruct the underlying cost function \citep{cuturi2014ground,dupuy2014personality,galichon2015cupids,li2019learning,stuart2019inverse}. 
The reconstructed cost function can be used to study the underlying mechanism of transferring probability/population mass in the application of interests. It can also be used to estimate optimal pairings given new marginal distributions $\mu$ and $\nu$ where modeling and computation of OT are involved.
%
%The purpose of this work is to conduct a comprehensive study of the theory and computation of inverse OT.
%, i.e., recovering the cost function from given optimal transport plan or observed samples from the plan. 
%

\paragraph{Approach}
We consider the inverse problem of entropy regularized OT (the reason of using regularized OT over the unregularized counterpart is explained in Section \ref{subsec:entropy_iot}). 
We propose a variational formulation for learning the cost function such that its induced optimal transport plan is close to the observed transport plan or its samples.
This variational formulation yields a bi-level optimization problem, which can be challenging to solve in general.
However, by leveraging the dual form of entropy regularized OT, we show that this bi-level optimization can be reformulated as an unconstrained and convex problem in the cost function before adding any customizable regularization.

Based on our new formulation, we develop two efficient numerical algorithms for inverse OT: one for the discrete case and the other for continuous case.
In the discrete case, we can realize the observed transport plan $\pidata$ as a probability matrix, which is either directly given or can be readily summarized using samples. 
Then we show that the cost $c$ can be computed using a fast matrix scaling algorithm.
In the continuous case, $\pidata$ is often presented by a number of i.i.d.~samples in the form of $(x,y) \sim \pidata$.
In this case, we parameterize the cost function as a deep neural network and develop a learning algorithm that is completely mesh-free and thus capable of handling high dimensional continuous inverse OT problems.

A significant advantage of our approach over existing ones is that we can avoid solving a standard OT problem in each iteration in bi-level optimization. To better distinguish from the inverse OT, we hereafter use the term \emph{forward OT} to refer the standard OT problem which solves for the optimal transport plan $\pi^*$ given cost $c$.
Thus, \emph{the computational complexity our method is comparable to that of a forward OT problem}, which is only a small fraction of complexities of existing bi-level optimization based methods.
We will demonstrate this substantial improvement in efficiency in Section \ref{sec:experiment}.

\paragraph{Novelty and contributions}
Existing approaches to the inverse problem of OT aim at recovering the cost function but vary in specific problem formulations and applications domains. These approaches will be discussed in more details in Section \ref{sec:literature}. Compared to existing ones, our approach is novel and advantageous in the following aspects:
\begin{itemize}
    \item
    All existing methods formulate the cost learning as bi-level optimization or its variants, which require \emph{solving the forward OT problem in each outer iteration}. In sharp contrast, our approach yields a convex optimization with customizable regularization, and the complexity of our method is \emph{comparable to the complexity of one forward OT}.
    \item 
    To the best of our knowledge, the present work is the first in the literature that provides a \emph{comprehensive characterization of the solution(s) of inverse OT}. Moreover, we show that the ill-posedness of inverse OT, particularly the ambiguity issue of unknown cost functions, can be rectified and the ground truth cost function can be recovered robustly under mild conditions.
    \item
    Our framework can be applied to \emph{both discrete and continuous settings}. To our best knowledge, the present work is the \emph{first to tackle cost function learning for continuous OT}. This enables the application of inverse OT in a large variety of real-world problems involving high-dimensional data.
\end{itemize}
%
% (i) Existing methods formulate the cost learning as bi-level optimization or its variations, which require solving forward OT problem repeatedly for hundreds and even thousands of times to recover the cost. In contrast, we show that our inverse OT formulation can be converted to a standard convex optimization with customizable regularization, the complexity of which is comparable to that of one forward OT;
% %
% (ii) In contrast to existing work, we provide detailed characterization of the solution and its uniqueness in our inverse OT formulation. We show cases where our framework does not suffer the common ambiguity issue of underdetermined inverse problems and can recover the desired ground truth cost robustly;
% %
% (iii) Existing work are all carried out in the discrete setting, which essentially renders a cost matrix to be learned. Our framework can be applied to both discrete and continuous settings. To the best of our knowledge, our work is the first in the literature that tackles cost function learning in continuous case, which enables the application of OT with adaptive cost in problems involving high-dimensional data.

\paragraph{Organization} 
The remainder of this paper is organized as follows.
We first provide an overview of existing cost learning approaches in OT and the relations to other metric learning problems in Section \ref{sec:literature}.
In Section \ref{sec:framework}, we propose an inverse optimal transport approach for cost function learning, and derive a novel framework based on the dual of the inverse OT formulation.
In Section \ref{sec:algorithm}, we develop two prototype algorithms to recover the cost matrix in the discrete setting and the cost function in continuous setting, and discuss their properties and variations.
Numerical experiments and comparisons are provided in Section \ref{sec:experiment}. 
Section \ref{sec:conclusion} concludes this paper.

\section{Related Work}
\label{sec:literature}
In this section, we provide an overview of OT, inverse OT, and several closely related topics. We also show the relations between cost learning for OT and general metric learning, and contrast our approach to the existing methods.

\paragraph{Computational OT}
The computation of OT has been a long standing challenge and is still under active research.
Most existing work focus on the discrete setting \eqref{eq:dis_ot}, which is a special type of linear program (LP).
However, the cubic computation complexity for general LP solvers prohibits fast numerical solution for large $m$ and $n$.
In \citep{cuturi2013sinkhorn}, a modification of \eqref{eq:dis_ot} with an additional entropy regularization term in the objective function is proposed:
\begin{equation}
\label{eq:dis_rot}
\min_{\pi \in \mathbb{R}^{m\times n}} \cbr[2]{\langle c,\pi \rangle - \varepsilon H(\pi)\, :\, \pi 1_n = \nu, \ \pi^{\top}1_m = \nu},
\end{equation}
where $H(\pi):=-\langle \pi, \log \pi - 1 \rangle = -\sum_{i,j}\pi_{ij}(\log \pi_{ij} - 1)$ is the (normalized) Shannon entropy of $\pi$, and $\varepsilon>0$ is a prescribed weight of the entropy regularization.
Due to the entropy term, the troublesome inequality constraint in the original OT \eqref{eq:dis_ot} is eliminated, and the objective function in \eqref{eq:dis_rot} becomes strictly convex which admits unique solution.
Moreover, the dual problem of \eqref{eq:dis_rot} is unconstrained, which can be solved by a fast matrix scaling algorithm called the Sinkhorn (or Sinkhorn-Knopp) algorithm \citep{cuturi2013sinkhorn}.
Sinkhorn algorithm has been the common approach to solve (regularized) OT \eqref{eq:dis_rot} numerically in the discrete setting since then.
Its property, convergence, and relation to the original OT \eqref{eq:dis_ot} are also extensively studied, for instance, in \citep{dessein2018regularized,schmitzer2019stabilized}.
A more comprehensive treatment of computational OT in discrete setting, especially in the regularized form, can be found in \citep{peyre2019computational}.
The continuous OT problem is considered where the dual variables are parameterized as deep neural networks \citep{seguy2018large}.
% \citep{seguy2018large}.
%
The sample complexity of OT is also studied in \citep{genevay2018sample}.

\paragraph{Cost Learning for OT} 
The problem of cost learning for optimal transport has received considerable attention in the past few years.
In \citep{dupuy2014personality,galichon2015cupids}, the cost matrix is parameterized as a bilinear function of the feature vectors of the two sides in optimal transport.
The parameter of the bilinear function, i.e., the interaction matrix, is recovered from the observed matchings, which are hypothesized to be samples drawn from the optimal transport plan that maximizes the total social surpluses \citep{dupuy2014personality}.
The interaction matrix quantifies coupling surplus that are important in the study of quantitative economics.
In \citep{li2019learning}, a primal-dual matrix learning algorithm is proposed to allow more flexible parametrization of the cost matrix and also takes into account inaccurate marginal information for robust learning.
In \citep{cuturi2014ground}, a set of distributions are given where each pair is also associated with a weight coefficient, and the cost matrix is learned by minimizing the weighted sum of EMD %the earth mover's distances 
between these pairs induced by this cost.
Given class labels of documents which are represented as histograms of words, the cost matrix is parameterized as Mahalanobis distance between feature vectors of the words and learned such that the induced EMD %earth mover's distance 
between similar documents are small \citep{huang2016supervised}.
In \citep{wang2012supervised}, the cost matrix is learned such that the induced EMD %earth mover's distances
between histograms labeled as similar are separated from those between dissimilar histograms, which mimics the widely used metric learning setup.
In \citep{zhao2018label}, the cost matrix is induced by a kernel mapping, which is jointly learned with a feature-to-label mapping in a label distribution learning framework.
This work is extended to a multi-modal, multi-instance, and multi-label learning problem in a follow-up work \citep{yang2018complex}.
In \citep{xu2019learning}, the cost matrix is parameterized as the exponential of negative squared distance between features, where the feature map is learned such that the induced EMD is small for those with same labels and large otherwise.
A cost matrix learning method based on the Metropolis-Hasting sampling algorithm is proposed in \citep{stuart2019inverse}.
In \citep{liu2019learning}, the Sinkhorn iteration is unrolled into a deep neural network with cost matrix as unknown parameter, which is then trained using given side information.
The present work targets at the cost learning problem for OT as in the aforementioned ones, but contrasts favorably to them as explained in Section \ref{sec:intro}.

\paragraph{General Metric Learning} 
The aforementioned methods and the work presented in this paper aim at learning the cost matrix/function, which is related to but different from the standard metric learning \citep{bellet2013survey} in machine learning.
In standard metric learning, the goal is to directly learn the distance that quantifies the similarity between features or data points given in the samples.
In contrast, the learning problem in inverse OT aims at recovering the cost function (also known as the ground metric) that induces the EMD %earth mover's distance 
(or more generally the Wasserstein distance) and optimal transport plan, optimal couplings, or optimal matchings exhibited by the data.
In inverse OT, we only observe the couplings/matchings which are not labeled as similar or not.
Hence, we cannot directly assess the distance or cost between features.
Instead, we need to learn the cost based on the relative frequency of the matchings in the observed data, which is a compounded effect of the cost function and the intra-population competitions.
Moreover, the cost function is critical to reveal the underlying mechanism of optimal transport and matchings, and can be used to predict or recommend optimal matchings given new but different marginal distributions \citep{dupuy2014personality,galichon2015cupids,li2019learning}.

\paragraph{Riemann Distance Learning}
The cost function learning problem for optimal transport is also related to Riemannian metric learning on probability simplex.
In \citep{dagnew2015supervised,le2015unsupervised}, the Riemannian metric, i.e., distances between probability distributions or histograms on the manifold of probablity simplex, is directly learned.
In contrast, the goal in this work is to learn the cost function that reveals the interaction between features, which can also automatically induce a metric on the probability simplex if the cost function satisfies proper conditions.
Moreover, the learned cost function from inverse OT can be used to provide insights of observed matchings and generate interpretable predictions on new data associations, which are extremely important and useful in many real-world applications.

\vspace{-6pt}
\section{Proposed Framework}
\label{sec:framework}

\subsection{Preliminaries on Entropy Regularized OT}
\label{subsec:preliminary}
We consider the entropy regularized OT \citep{cuturi2013sinkhorn,peyre2019computational,amari2019information,aude2016stochastic,dessein2018regularized,dvurechensky2018computational,dvurechensky2020stable,genevay2019entropy,janati2020entropic,paty2020regularized}, where the objective function in \eqref{eq:ot} is supplemented by the (negative) entropy of the unknown distribution $\pi$.
Entropy regularization takes into account of the uncertainty and incompleteness of observed data, which an important advantage over the OT without regularization \citep{cuturi2014ground,dupuy2014personality,galichon2015cupids,peyre2019computational}.
The entropy regularized forward OT problem \eqref{eq:dis_rot} is given as follows:
\begin{equation}
\label{eq:rot}
\min_{\pi \in \Pi(\mu,\nu)} \cbr[3]{\int_{X \times Y} c(x,y)\,d\pi(x,y) - \varepsilon H(\pi)},
\end{equation}
where $H(\pi) := -\int  (\log (d \pi / d\lambda) + 1) \, d \pi$ denotes the normalized entropy of $\pi$ (we assume $\pi$ is absolutely continuous with respect to the Lebesgue measure $\lambda$ of $X\times Y$ and $d \pi / d \lambda $ denotes the Radon-Nikodym derivative.)
%
%where the constraint set $\Pi(\mu,\nu)$ of joint distributions with $\mu$ and $\nu$ as marginals is defined by
%%
%\begin{equation}
%\label{eq:Pi}
%\Pi(\mu,\nu) := \cbr[3]{\pi: Z \to \mathbb{R}_+ \, \Big\vert \, \int_Y \pi(x,y) \, d \nu = \mu(x),\, \int_X \pi(x,y) \, d \mu = \nu(y)}.
%\end{equation}
%%
%Note that the constraint in \eqref{eq:rot} implicitly ensures the integral unity of $\pi(x,y)$ since both $\mu$ and $\nu$ are probability distributions
%
Let $\alpha:X\to\mathbb{R}$ and $\beta:Y\to\mathbb{R}$ be the Lagrangian multipliers corresponding to the two marginal constraints in $\Pi(\mu,\nu)$ in \eqref{eq:Pi} respectively, we obtain the dual problem of \eqref{eq:rot} as follows,
\begin{equation}
\label{eq:rot_dual}
\max_{\alpha,\beta} \cbr[3]{ \int_X \alpha \,d\mu + \int_Y \beta \,d\nu  - \varepsilon \int_{X \times Y} e^{(\alpha+\beta-c)/\varepsilon} \,d\lambda }.
\end{equation}
Denote $(\alpha^c, \beta^c)$ the optimal solution of the dual problem \eqref{eq:rot_dual} for the given cost function $c$, we can readily deduce that the optimal solution $\pi^c$ to the primal problem \eqref{eq:rot} reads
\begin{equation}
\label{eq:rot_primal_var}
d\pi^c(x,y) = e^{(\alpha^c(x)+\beta^c(y)-c(x,y))/\varepsilon} \,d\lambda,
\end{equation}
which is a closed-form expression of $\pi^c$ in terms of $(\alpha^c, \beta^c)$. For notation simplicity, we omit the arguments $x$ and $y$ hereafter when there is no danger of confusion.

\subsection{Entropy Regularization in Inverse OT}
\label{subsec:entropy_iot}
Entropy regularization is particularly important to properly define the inverse problem of OT. 
To see this, we first consider the discrete OT problem \eqref{eq:dis_ot}. Notice that an observation matrix $\pidata$ containing zero entries does not provide necessary information to fully characterize the cost matrix $c$: a small $\pidata_{ij}$ suggests that $c_{ij}$ is relatively large; but if $\pidata_{ij} = \pidata_{ik}= 0$ then it is difficult to tell which of $c_{ij}$ and $c_{ik}$ is larger. Although this issue can be somewhat mitigated with additional information on $c$, it is still a severe problem when $\pidata$ contains many zeros or such information on $c$ is not available. Indeed, a reasonable inverse OT algorithm needs the relative ratios between the entries (thus better not be zeros) of $\pidata$, together with the supply distribution given by $\mu$ and $\nu$, to infer the underlying cost accurately. However, sparse $\pidata$ is very common in discrete OT problem \eqref{eq:dis_ot} without any regularization. This is because that an optimal transport plan, as a solution to the linear program \eqref{eq:dis_ot}, often occurs at an extremal point of the polytope of the contraint set, and thus the number of nonzero entries is no more than $m+n-1$ \citep{peyre2019computational}. In this case, the chance to uncover the true cost $c$ is very low.
Entropy regularization of OT overcomes this issue as it always yield a transport plan $\pidata$ with no zero entry. Moreover, as the weight $\varepsilon$ approaches $0$, the solution of entropy regularized OT tends to that of the standard unregularized OT.

Continuous inverse OT also benefits from entropy regularization. As shown later, our approach is based on the dual formulation of entropy regularized OT. This allows us to parameterize the dual variables and the cost function as deep neural networks and develop an efficient mesh-free method suitable for applications in high-dimensional continuous spaces. 

Due to the aforementioned reasons, we consider the inverse problem of entropy regularized OT in the present work. More precisely, we assume that the observation $\pidata$ was the solution of an entropy regularized OT \eqref{eq:rot} with unknown cost $c$ to be recovered.
However, it is important to note that, unlike entropy regularized OT, the inverse problem is \emph{not} sensitive to the weight $\varepsilon$ in \eqref{eq:rot}. This is because that the solution to \eqref{eq:rot} only depends on $c/\varepsilon$, rather than the actual $c$. Thus the observation $\pidata$ only contains information of this ratio $c/\varepsilon$ which is all we can recover. This ratio $c/\varepsilon$ provides all information needed no matter which OT one prefers to use for inference and prediction later: the solution to an unregularized OT is invariant to any constant scaling of $c$; and the solution to an entropy regularized OT can be freely modified by tuning another user-chosen regularization weight $\varepsilon'>0$ pretending that the given cost is just $c/\varepsilon$.

\subsection{Inverse OT and Its Dual Formulation}
\label{subsec:iot-formulation}
%
%As stated in Section \ref{sec:intro}, our goal in this paper is to learn the cost function $c$ from the observed optimal transport plan.
%
Suppose that the marginal distributions are given as $\mu$ and $\nu$, and we observed sample transport plan $\pidata \in \Pi(\mu,\nu)$ (details about the format of $\pidata$ will be provided in the next section).
Then we propose the following inverse OT problem to learn the underlying cost $c$ from observation $\pidata$:
\begin{subequations}
\label{eq:irot}
\begin{align}
\min_{c} &\quad \KL(\pidata, \pi^c ) + \varepsilon^{-1} R(c), \label{eq:irot_upper} \\
\text{s.t.} &\quad \pi^c = \argmin_{\pi \in \Pi(\mu,\nu)} \cbr[2]{\int_{X \times Y} c\,d\pi - \varepsilon H(\pi)}, \label{eq:irot_lower} 
\end{align}
\end{subequations}
where the Kullback-Leibler (KL) divergence between $\pidata$ and $\pi$ is defined by
\begin{equation}
    \KL(\pidata, \pi) := \int_{X \times Y}  \frac{d \pidata}{d \pi} \log \del[2]{\frac{d \pidata}{d \pi}} \,d\pi,
\end{equation}
$d\pidata/d\pi$ is the Radon-Nikodym derivative of $\pidata$ with respect to $\pi$, $R(c)$ represents the regularization (or constraint) on the cost function $c$, which is to be specified later, and $\pi^c$ is the optimal transport plan induced by $c$, i.e., the solution of \eqref{eq:rot} for any specific $c$. We multiplied $\varepsilon^{-1}$ to $R(c)$ in \eqref{eq:irot_upper} for notation simplicity later, but $R(c)$ is user-defined and thus can contain $\varepsilon$ for cancellation.
The model \eqref{eq:irot} is straightforward to interpret: we seek for the cost function $c$ such that the induced $\pi^c$ is close to the observed transport plan $\pidata$ in the sense of KL divergence, and meanwhile it respects the specified regularization or satisfies the constraint described by $R(c)$.

The problem \eqref{eq:irot} is a typical bi-level optimization: the upper level problem \eqref{eq:irot_upper} involves $\pi^c$ which is the solution to the minimization in the lower level problem \eqref{eq:irot_lower}.
In general, bi-level optimization problems such as \eqref{eq:irot} are considered very challenging to solve: standard bi-level optimization methods require solving the lower level problem \eqref{eq:irot_lower} during \emph{each update of the variable $c$}. Therefore, the overall computational cost is very high because the lower level problem, which is an expensive OT problem, needs to be solved for many times (i.e., the number of outer iterations to update $c$, which can be easily over hundreds or even thousands). 

However, we show that the inverse OT problem \eqref{eq:irot} possesses a very special structure. Most notably, we prove that it is equivalent to an unconstrained and convex problem in $c$ before adding $R(c)$ by leveraging the dual form of the entropy regularized OT \eqref{eq:rot_dual}. The rigorous statement of this equivalency relation is given in the following theorem.
\begin{theorem}
The bi-level optimization \eqref{eq:irot} for inverse OT is equivalent to
\begin{align}
\label{eq:irot_final}
\min_{\alpha,\beta,c}\ E(\alpha,\beta,c) + R(c),
\end{align}
where the functional $E$ is defined by
\begin{align}
E(\alpha,\beta,c) := \int_{X \times Y} c \,d\pidata - \int_X \alpha \,d\mu - \int_Y \beta \,d\nu  + \varepsilon \int_{X \times Y} e^{(\alpha+\beta-c)/\varepsilon} \,d\lambda. \label{eq:def_E}
\end{align}
The equivalency relation is in the sense that $c^*$ solves \eqref{eq:irot} if and only if $(\alpha^{c^*},\beta^{c^*},c^*)$ solves \eqref{eq:irot_final}, where $(\alpha^{c^*},\beta^{c^*})$ stands for the optimal solution to the dual problem \eqref{eq:rot_dual} with cost $c^*$.
\end{theorem}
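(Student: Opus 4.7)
The plan is to eliminate the lower-level problem in \eqref{eq:irot} by substituting the closed-form dual representation \eqref{eq:rot_primal_var} for $\pi^c$ into the KL objective, and then to recognize the resulting functional as $E(\alpha,\beta,c)$ from \eqref{eq:def_E} evaluated at its partial minimizer in $(\alpha,\beta)$.

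First I would expand $\KL(\pidata,\pi^c) = \int \log(d\pidata/d\lambda)\,d\pidata - \int \log(d\pi^c/d\lambda)\,d\pidata$. The first term depends only on $\pidata$, hence is a constant with respect to $c$. By \eqref{eq:rot_primal_var}, the second term equals $\varepsilon^{-1} \int (\alpha^c + \beta^c - c)\,d\pidata$. The crucial observation is that $\pidata \in \Pi(\mu,\nu)$, so its marginals are $\mu$ and $\nu$; this lets me replace $\int \alpha^c\,d\pidata$ by $\int \alpha^c\,d\mu$ and similarly for $\beta^c$. Multiplying the upper-level objective by $\varepsilon$ then gives $\varepsilon\KL(\pidata,\pi^c) = \int c\,d\pidata - \int \alpha^c\,d\mu - \int \beta^c\,d\nu + \mathrm{const}$. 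Because $\pi^c$ is a probability measure, $\int e^{(\alpha^c+\beta^c-c)/\varepsilon}\,d\lambda = 1$, so I can harmlessly append $\varepsilon\bigl(\int e^{(\alpha^c+\beta^c-c)/\varepsilon}\,d\lambda - 1\bigr)$ to the right-hand side. What emerges is precisely $E(\alpha^c,\beta^c,c)$ up to a $c$-independent constant; hence minimizing the upper-level objective \eqref{eq:irot_upper} over $c$ is equivalent to minimizing $E(\alpha^c,\beta^c,c) + R(c)$ over $c$.

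The reduction to \eqref{eq:irot_final} then comes from recognizing that, for any fixed $c$, the map $(\alpha,\beta) \mapsto E(\alpha,\beta,c) - \int c\,d\pidata$ is exactly the negative of the dual objective \eqref{eq:rot_dual}. Consequently its minimizer in $(\alpha,\beta)$ is $(\alpha^c,\beta^c)$ with minimum value $E(\alpha^c,\beta^c,c)$, so that $\min_{c}[E(\alpha^c,\beta^c,c)+R(c)] = \min_{\alpha,\beta,c}[E(\alpha,\beta,c)+R(c)]$. This delivers the stated equivalence: $c^*$ solves \eqref{eq:irot} iff, in the joint minimization, the optimal dual pair at $c^*$ completes it to a minimizer of \eqref{eq:irot_final}. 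Joint convexity of $E$ in $(\alpha,\beta,c)$ is immediate, since the terms linear in $\pidata,\mu,\nu$ are affine and $\varepsilon\int e^{(\alpha+\beta-c)/\varepsilon}\,d\lambda$ is convex via the exponential composed with a linear map.

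The hardest part will be the measure-theoretic bookkeeping rather than any deep inequality. I need to confirm that $\pidata$ is absolutely continuous with respect to $\pi^c$ so that $\KL(\pidata,\pi^c)$ is well defined (this is automatic because \eqref{eq:rot_primal_var} gives $\pi^c$ a strictly positive density), ensure existence and uniqueness of the dual optimizer $(\alpha^c,\beta^c)$ up to the standard additive ambiguity $\alpha\mapsto\alpha+k,\ \beta\mapsto\beta-k$ so that $\pi^c$ is unambiguously defined, and keep track of the constants arising from the normalized entropy convention. The algebraic backbone is short; the work lies in making these regularity statements precise.
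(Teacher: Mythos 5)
Your proposal is correct and follows essentially the same route as the paper's proof: expand the KL divergence using the Gibbs form \eqref{eq:rot_primal_var}, use the marginal constraints on $\pidata$ to convert $\int \alpha^c\,d\pidata$ and $\int \beta^c\,d\pidata$ into integrals against $\mu$ and $\nu$, use $\int e^{(\alpha^c+\beta^c-c)/\varepsilon}\,d\lambda=1$ to identify the result with $E(\alpha^c,\beta^c,c)$ up to constants, and then absorb the dual optimality of $(\alpha^c,\beta^c)$ into a joint minimization over $(\alpha,\beta,c)$. Your closing remarks on absolute continuity and the additive ambiguity of the dual pair are sensible refinements of points the paper treats more briefly, but they do not change the argument.
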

\begin{proof}
Recall that the optimal transport plan $\pi^c$ induced by $c$ is given in \eqref{eq:rot_primal_var}, where $(\alpha^c,\beta^c)$ is the optimal solution to the dual problem \eqref{eq:rot_dual}. Therefore, \eqref{eq:rot_primal_var} implies $d\pi^c = \rho\, d\lambda$ where $\rho$ is the density function of $\pi^c$ given by
\begin{equation}
\label{eq:rho}
\rho(x,y) := e^{(\alpha^c(x)+\beta^c(y)-c(x,y))/\varepsilon}.
\end{equation}
Since $\rho>0$ everywhere, we know $\lambda \ll \pi^c$ and $d\lambda = \rho^{-1} d \pi^c$. On the other hand, since $\pidata \ll \pi^c \ll \lambda$, we know $d \pidata = \hat{\rho} \,d\lambda$ for some probability density $\hat{\rho}$. Hence, by the chain rule of measures, we have
\begin{align}
    \KL(\pidata, \pi^c)
    %& = \int_{X \times Y}  \frac{d \pidata}{d \pi} \log \del[2]{\frac{d \pidata}{d \pi}} \,d\pi \\
    & = \int_{X \times Y}  \frac{\hat{\rho}}{\rho} \log \del[2]{\frac{\hat{\rho}}{\rho}} \rho \,d\lambda \nonumber \\
    & = \int_{X \times Y}  \hat{\rho} \log \hat{\rho} \,d\lambda - \int_{X\times Y} \log\rho \,d \pidata \label{eq:KL-pidata-pic} \\
    & = H(\pidata) - \varepsilon^{-1}\int_{X \times Y} (\alpha^c+\beta^c-c)  \,d \pidata . \nonumber 
\end{align}
Since $\pidata \in \Pi(\mu,\nu)$, we know
\begin{equation}
    \label{eq:alpha-beta-red}
    \int_{X \times Y} \alpha^c \,d\pidata  = \int_X \alpha^c \,d\mu\quad \mbox{and} \quad
    \int_{X \times Y} \beta^c \,d\pidata = \int_Y \beta^c \,d\nu
\end{equation}
Plugging \eqref{eq:rho} and \eqref{eq:KL-pidata-pic} into \eqref{eq:irot_upper}, eliminating the constant $H(\pidata)$ which is independent of $c$, and multiplying the objective function by $\varepsilon>0$ which does not alter minimization, we obtain:
\begin{equation}
\label{eq:irot_equiv}
\min_{c}\cbr[3]{R(c) - \int_X \alpha^c \,d\mu - \int_Y \beta^c \,d\nu + \int_{X \times Y} c \,d\pidata }.
\end{equation}
% \[
% \int_{X \times Y} \alpha^c \pidata \, d \pidata = \int_X \alpha^c \del{\int_Y \pidata \, d \nu} \, d \mu = \int_X \alpha^c \mu \, d \mu,
% \]

In \eqref{eq:irot_equiv}, $\alpha^c$ and $\beta^c$ are the optimal solution of the dual problem of \eqref{eq:irot_lower} and hence implicitly depend on $c$. To make them independent variables, we plug $(\alpha^c, \beta^c)$ into $E$ defined in \eqref{eq:def_E} and obtain:
\begin{align}
E(\alpha^c,\beta^c,c)
& = \int_{X \times Y} c \,d\pidata - \int_X \alpha^c \,d\mu - \int_Y \beta^c \,d \nu + \varepsilon \int_{X \times Y} e^{(\alpha+\beta-c)/\varepsilon} \,d\lambda \nonumber \\
& = \int_{X \times Y} c \,d\pidata - \int_X \alpha^c \,d\mu - \int_Y \beta^c \,d \nu + \varepsilon, \label{eq:rot_dual_sub}
\end{align}
where the last equality is due to the unity property $\int_{X \times Y} d \pi^c = 1$ since $\pi^c \in \Pi(\mu,\nu)$ is a joint probability.
On the other hand, for any $c$, the optimality of $(\alpha^c, \beta^c)$ to the dual problem \ref{eq:rot_dual} implies that
\begin{equation}
\label{eq:rot_dual_oc}
    E(\alpha^c,\beta^c,c) = \min_{\alpha,\beta} E(\alpha,\beta,c).
\end{equation}
Combining \eqref{eq:rot_dual_sub} and \eqref{eq:rot_dual_oc} and plugging to \eqref{eq:irot_equiv}, merging the minimizations, and eliminating the singled-out constant $\varepsilon$, we obtain \eqref{eq:irot_final}.

To this point, we have showed that, for any fixed $c$, there is
\begin{equation*}
    E(\alpha^c,\beta^c,c)+R(c) - \varepsilon = \varepsilon (\KL(\pidata,\pi^c) + \varepsilon^{-1} R(c)),
\end{equation*}
where the left hand side is the objective function in \eqref{eq:irot_final} minus the constant $\varepsilon$, and the right hand side is $\varepsilon$ multiple of the objective function in \eqref{eq:irot_upper}. Therefore, the two minimization problems are equivalent and share the same set of solutions $c^*$.
\end{proof}

The variational model \eqref{eq:irot_final} is the foundation of our algorithmic development for cost function learning in the next section.
Compared to \eqref{eq:irot}, the optimization problem \eqref{eq:irot_final} consists of a convex functional $E$ (as shown later) and a customizable regularization (or constraint) $R$, and hence has the potential to be solved much more efficiently than posed as a bi-level optimization problem \eqref{eq:irot}.
Notice that, if $c$ is given and fixed, then the inverse OT \eqref{eq:irot_final} reduces to the dual problem of the (forward) entropy regularized OT \eqref{eq:rot_dual}.

As we will show below, the key feature of \eqref{eq:irot_final} is that the functional $E(\alpha,\beta,c)$ is \emph{jointly convex} in $(\alpha,\beta,c)$. That is, $E$ is a convex functional defined on 
\[
\Wcal := \mathcal{C}(X)\times \mathcal{C}(Y) \times \mathcal{C}(X\times Y),
\]
where $\mathcal{C}(X)$ stands for the set of all real-valued continuous functions on $X$.
However, unlike \eqref{eq:rot_dual}, $E(\alpha,\beta,c)$ is not strictly convex in its variable $(\alpha,\beta,c)$ and thus we cannot claim uniqueness of its minimizer. Indeed, we will show that there are infinitely many solutions to \eqref{eq:irot_equiv} with the same $\pidata$ when no additional regularization/constraint $R$ is imposed.
In order to characterize the solution set of \eqref{eq:irot_final}, we first need to investigate the behavior of $E$ over the quotient space induced by the following equivalence relation.
\begin{definition}
\label{def:equiv}
We say that $(\alpha,\beta,c)$ and $(\bar{\alpha},\bar{\beta},\bar{c})$ are \emph{equivalent}, denoted by $(\alpha,\beta,c) \sim (\bar{\alpha},\bar{\beta},\bar{c})$, if $\alpha(x)+\beta(y) - c(x,y) = \bar{\alpha}(x)+\bar{\beta}(y) - \bar{c}(x,y)$ for any $x \in X$ and $y\in Y$. 
\end{definition}
It is easy to verify that $\sim$ defines an \emph{equivalence relation} over $\Wcal$ in the classical sense. This equivalence relation induces a \emph{quotient space} $\Wcaltd := \Wcal/\sim$. Denote $[(\alpha,\beta,c)]\subset \Wcal$ the equivalence class of $(\alpha,\beta,c)$. Then $P: \Wcal \to \Wcaltd$ defined by $P(\alpha,\beta,c) = [(\alpha,\beta,c)]$ is called the \emph{canonical projection}.
We have the following result regarding the functional $E(\alpha,\beta,c)$ in \eqref{eq:irot_final}.
\begin{theorem}\label{thm:convex}
The following statements hold for the functional $E(\alpha,\beta,c)$ defined in \eqref{eq:irot_final}:
\begin{enumerate}
    \item[(i)] $E(\alpha,\beta,c)$ is jointly convex in $(\alpha,\beta,c)$;
    \item[(ii)] $E$ is a constant on each equivalence class $[(\alpha,\beta,c)]$;
    \item[(iii)] Let $\Etd: \Wcaltd \to \mathbb{R}$ be such that $\Etd([(\alpha,\beta,c)]) = E(\alpha,\beta,c)$, then $\Etd$ is well defined. 
    \item[(iv)] If $(\alpha^*,\beta^*,c^*)$ is a solution of \eqref{eq:irot_final}, then the corresponding optimal transport plan $\pi^{*}$ in \eqref{eq:irot} is given by $d \pi^{*} = e^{(\alpha^*(x) + \beta^* (x) - c^*(x,y))/\varepsilon} d\lambda$. Moreover, $[(\alpha^*,\beta^*,c^*)]$ is the unique minimizer of $\Etd$.
\end{enumerate}
\end{theorem}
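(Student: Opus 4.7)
The proof naturally divides along the four parts, and the unifying observation is that, after using $\pidata \in \Pi(\mu,\nu)$, the functional $E$ collapses to a function of the single combination $u := \alpha+\beta-c$. For (i), I would split $E$ into its linear terms ($\int c\, d\pidata$, $-\int \alpha\, d\mu$, $-\int \beta\, d\nu$) and the nonlinear term $\varepsilon\int e^{(\alpha+\beta-c)/\varepsilon} d\lambda$. The linear part is trivially convex, and the nonlinear part is the integral of the composition of the convex function $t\mapsto e^{t/\varepsilon}$ with the linear map $(\alpha,\beta,c)\mapsto \alpha+\beta-c$, which stays convex under integration. For (ii), I would substitute the marginal identities already used in \eqref{eq:alpha-beta-red} to rewrite
\begin{equation*}
E(\alpha,\beta,c) = -\int_{X\times Y}(\alpha+\beta-c)\,d\pidata + \varepsilon\int_{X\times Y} e^{(\alpha+\beta-c)/\varepsilon}\,d\lambda,
\end{equation*}
making it manifest that $E$ depends on $(\alpha,\beta,c)$ only through $u=\alpha+\beta-c$; by Definition \ref{def:equiv}, equivalent triples share the same $u$, so $E$ is constant on each class. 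Part (iii) is then immediate, since well-definedness of $\Etd$ on $\Wcaltd$ is exactly the statement in (ii).

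For (iv), I would proceed in two steps. First, to derive the formula for $\pi^*$: since $R$ does not depend on $(\alpha,\beta)$, restricting the minimization of $E(\alpha,\beta,c^*)+R(c^*)$ to the pair $(\alpha,\beta)$ with $c^*$ fixed shows $(\alpha^*,\beta^*)\in\argmin_{\alpha,\beta}E(\alpha,\beta,c^*)$; this coincides with the dual problem \eqref{eq:rot_dual} for cost $c^*$, as observed in \eqref{eq:rot_dual_oc}. Plugging the optimal dual pair $(\alpha^*,\beta^*)$ into the primal-dual relation \eqref{eq:rot_primal_var} gives $d\pi^{c^*} = e^{(\alpha^*+\beta^*-c^*)/\varepsilon}d\lambda$, which by Theorem~1 equals the optimal plan $\pi^*$ in \eqref{eq:irot}. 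Second, for uniqueness of the equivalence class, define
\begin{equation*}
F(u) := -\int_{X\times Y} u\, d\pidata + \varepsilon \int_{X\times Y} e^{u/\varepsilon}\,d\lambda,
\end{equation*}
so that $\Etd([(\alpha,\beta,c)])=F(\alpha+\beta-c)$. The strict convexity of $t\mapsto e^{t/\varepsilon}$ makes $F$ strictly convex, hence it admits at most one minimizer $u^*$. Any solution $(\alpha^*,\beta^*,c^*)$ of \eqref{eq:irot_final} must have $u^*=\alpha^*+\beta^*-c^*$ minimizing $F$, because holding $c^*$ fixed and varying $\alpha,\beta$ freely realizes every possible $u$. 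Since equivalence under $\sim$ is precisely equality of $u$, the class $[(\alpha^*,\beta^*,c^*)]$ is determined by $u^*$ and is therefore the unique minimizer of $\Etd$.

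The main subtlety I anticipate is making the strict convexity of $F$ (and hence the uniqueness claim) rigorous: strictness of the integral term $\int e^{u/\varepsilon}\,d\lambda$ as a functional of $u\in\mathcal{C}(X\times Y)$ requires that $\lambda$ has full support on $X\times Y$, so that distinct continuous functions $u$ cannot give rise to the same integrand $\lambda$-a.e. This is consistent with the absolute-continuity framework set up in Section \ref{subsec:preliminary}, but it should be pointed out explicitly. A related minor care-point is that the optimal dual pair $(\alpha^c,\beta^c)$ is only unique up to the familiar translation ambiguity $(\alpha+t,\beta-t)$; this does not affect the exponential $e^{(\alpha+\beta-c)/\varepsilon}$ and therefore leaves the formula for $\pi^*$ intact. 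Apart from these points, the proof is essentially a repackaging of the structure already derived in Theorem~1 through the lens of the equivalence relation.
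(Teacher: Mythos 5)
Your reduction of $E$ to a functional of the single variable $u=\alpha+\beta-c$, namely $E(\alpha,\beta,c) = -\int_{X\times Y} u\,d\pidata + \varepsilon\int_{X\times Y} e^{u/\varepsilon}\,d\lambda =: F(u)$ after applying the marginal identities, is correct and is a cleaner packaging than the paper's route: the paper proves (i) and the strict convexity needed in (iv) by computing second variations $f''(\epsilon)=\varepsilon^{-1}\int e^{(\zeta\cdot(\phi+\epsilon\psi))/\varepsilon}(\zeta\cdot\psi)^2\,d\lambda$ along directions $\psi$, and proves (ii) by a direct cancellation with $\delta_\alpha,\delta_\beta$ and Fubini, whereas your version makes (ii)--(iii) and the strict convexity of $\Etd$ essentially immediate. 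You also correctly flag that strictness requires $\lambda$ to charge every open set; this is exactly the role played by the paper's neighborhood $U$ with $\lambda(U)>0$.

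There is, however, one genuinely false step in your (iv): the claim that ``holding $c^*$ fixed and varying $\alpha,\beta$ freely realizes every possible $u$.'' It does not. With $c^*$ fixed, the reachable $u$'s form the affine set $\{\alpha(x)+\beta(y)-c^*(x,y)\}$ of separable perturbations of $-c^*$, which is a proper subset of $\mathcal{C}(X\times Y)$ whenever $X$ and $Y$ each contain more than one point (e.g.\ $u(x,y)=xy$ is not of the form $\alpha(x)+\beta(y)$). Consequently, from $(\alpha^*,\beta^*,c^*)$ minimizing $E+R$ you only obtain that $u^*$ minimizes $F$ over that affine set, not globally; the global minimizer of $F$ is characterized by $e^{u/\varepsilon}d\lambda=d\pidata$, which a minimizer constrained by a nontrivial $R$ need not attain. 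To reach every $u$ you must also vary $c$, at which point $R$ interferes. The paper sidesteps this by proving only that $\Etd$ is strictly convex (hence has at most one minimizer) and identifying $[\phi^*]$ as that minimizer explicitly ``when $\phi^*$ minimizes $E$,'' i.e.\ in the unregularized case. You should either add that hypothesis---in which case varying all three variables does realize every $u$ and your argument closes---or weaken the conclusion to ``$\Etd$ admits at most one minimizer.'' The first half of your (iv), the formula for $\pi^*$, is unaffected since $R$ does not involve $(\alpha,\beta)$.
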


\begin{proof}%%[Proof of Theorem \ref{thm:convex}]
(i) Let $\phi: X\times Y \to \mathbb{R}^3$ be $\phi(x,y) := (\alpha(x),\beta(y),c(x,y))$ for any $x \in X$ and $y \in Y$. Denote $\zeta=(1,1,-1)\in\mathbb{R}^3$. 
Then the functional $E$ in \eqref{eq:irot_final} is given by
\begin{align*}
    E(\phi) = \int_{X \times Y} \phi \cdot (-d\mu,-d\nu,d\pidata) + \varepsilon \int_{X \times Y} e^{(\zeta \cdot \phi)/\varepsilon} \, d\lambda.
\end{align*}
For any fixed $\psi: X \times Y \to \mathbb{R}^3$, we define the variation $f: I \to \mathbb{R}$, where $I \subset \mathbb{R}$ is a small open neighborhood of $0$, as follows,
\begin{equation*} 
f(\epsilon) := E(\phi + \epsilon \psi).
\end{equation*}
Then we can verify that
\begin{equation*}
f'(\epsilon) = \int_{X \times Y} \psi \cdot (-d\mu,-d\nu,d\pidata) + \int_{X \times Y} e^{(\zeta \cdot (\phi+\epsilon \psi))/\varepsilon} (\zeta\cdot \psi)\, d\lambda
\end{equation*}
and that
\begin{equation*}
f''(\epsilon) = \frac{1}{\varepsilon} \int_{X \times Y} e^{(\zeta \cdot (\phi+\epsilon \psi))/\varepsilon} (\zeta\cdot \psi)^2\, d\lambda \ge 0
\end{equation*}
for any $\epsilon \in I$. Since $\psi$ is arbitrary, we know $E$ is a convex functional of $\phi$.

(ii) To show that $E$ is constant over the equivalence class $[\phi]=[(\alpha,\beta,c)]$, we suppose $(\alpha,\beta,c)\sim (\bar{\alpha},\bar{\beta},\bar{c})$, i.e., $\alpha(x) + \beta(y) - c(x,y) = \bar{\alpha}(x) + \bar{\beta}(y) - \bar{c}(x,y)$ for all $(x,y)$. 
Denote $\delta_{\alpha}(x) := \alpha(x)-\bar{\alpha}(x)$ for every $x\in X$ and $\delta_{\beta}(y) := \beta(y) - \bar{\beta}(y)$ for every $y\in Y$, then it is obvious that
\begin{equation*}
\bar{c}(x,y) = \bar{\alpha}(x) + \bar{\beta}(y) - \alpha(x) - \beta(y) + c(x,y) = -\delta_{\alpha}(x) - \delta_{\beta}(y) + c(x,y).
\end{equation*}
Hence we have
\begin{align*}
E(\bar{\alpha},\bar{\beta},\bar{c})
=\ & \int_{X \times Y} \bar{c} \, d \pidata - \int_X  \bar{\alpha} \, d \mu - \int_Y \bar{\beta} \, d \nu + \varepsilon \int_{X \times Y} e^{(\bar{\alpha}+\bar{\beta}-\bar{c})/\varepsilon} \, d\lambda \\
=\ & \int_{X \times Y} (-\delta_{\alpha} - \delta_{\beta} + c) \, d \pidata - \int_X (\alpha-\delta_{\alpha}) \, d \mu - \int_Y (\beta-\delta_{\beta}) \, d \nu + \varepsilon \int_{X \times Y} e^{(\alpha+\beta-c)/\varepsilon} \, d\lambda \\
=\ &  E(\alpha,\beta,c) - \int_{X \times Y} (\delta_{\alpha}+\delta_{\beta}) \, d \pidata + \int_X \delta_{\alpha} \, d \mu + \int_Y  \delta_{\beta} \, d \nu \\
=\ &  E(\alpha,\beta,c),
\end{align*}
where the last equality is a result of cancellations due to 
\begin{align*}
\int_{X \times Y} \delta_{\alpha}\,d\pidata &= \int_X \delta_{\alpha} \del[2]{\int_Y \, d\pidata} = \int_X \delta_{\alpha}\, d \mu \\
\int_{X \times Y} \delta_{\beta}\,d\pidata &= \int_Y \delta_{\beta} \del[2]{\int_X \, d\pidata} = \int_Y \delta_{\beta}\, d \nu
\end{align*}
by Fubini theorem.
Therefore $E$ is constant throughout the equivalence class $[(\alpha,\beta,c)]$.

(iii) As an immediate consequence of (ii), $\Etd: \Wcaltd \to \mathbb{R}$ with $\Etd([\phi]) := E(\phi)$ for every $\phi \in \Wcal$ is well defined.

(iv) For any minimizer $(\alpha^*,\beta^*,c^*)$ of \eqref{eq:irot_final}, we know that $(\alpha^*,\beta^*)$ minimizes $E(\alpha,\beta,c)$ when $c=c^*$, i.e.,
\[
(\alpha^*,\beta^*) = \argmin_{\alpha,\beta} E(\alpha,\beta,c^*) = \argmin_{\alpha,\beta} \cbr[2]{\varepsilon \int_{X \times Y} e^{(\alpha+\beta-c^*)/\varepsilon}\, d\lambda - \int_X\alpha\, d \mu - \int_Y  \beta  \, d \nu}.
\]
Hence $(\alpha^*,\beta^*)$ is the optimal dual variable of the forward OT with cost $c^*$, and therefore the optimal transport plan (optimal primal variable) is $d \pi^*(x,y)=e^{(\alpha^*(x)+\beta^*(y)-c^*(x,y))/\varepsilon} d\lambda$ for all $(x,y)\in X \times Y$.

To show that $[\phi^*]$ is the unique minimizer of $\Etd$ over $\Wcaltd$ when $\phi^*$ minimizes $E$, we define for any nonzero $[\psi]\in \Wcaltd$ the variation $g: I \to \mathbb{R}$, where $I \subset \mathbb{R}$ is an open neighborhood of $0$, as follows,
\begin{equation*}
g(\epsilon) = \Etd([\phi] + \epsilon [\psi]) = \Etd([\phi  + \epsilon \psi]) = E(\phi + \epsilon \psi),
\end{equation*}
where we used the fact $[\phi]  + \epsilon [\psi] = [\phi  + \epsilon \psi]$ for any $\phi,\psi \in \Wcal$ and $\epsilon \in I$, which can be easily deduced from Definition \ref{def:equiv}, to obtain the second equality.
Following the same derivation as in (i), we can show that.
\[
g''(\epsilon) = \frac{1}{\varepsilon}\int_{X \times Y} e^{(\zeta \cdot (\phi + \epsilon \psi))/\varepsilon}(\zeta \cdot \psi)^2 \, d\lambda.
\]
Since $[\psi] \ne 0$, we know $\zeta \cdot \psi(x,y) \ne 0$ at some $(x,y)\in X \times Y$. Since $\psi$ is continuous, we know that there exists $\delta>0$ and an open neighborhood $U\subset X \times Y$ (with positive measure $\lambda(U)>0$) of $(x,y)$ such that $e^{(\zeta \cdot (\phi + \epsilon \psi))/\varepsilon}(\zeta \cdot \psi)^2 \ge \delta > 0$ for all $(x,y) \in U$. This implies that
\[
g''(\epsilon) \ge \frac{1}{\varepsilon } \int_{U}\delta \, d\lambda = \varepsilon^{-1}\delta \lambda(U) > 0.
\]
Hence $g$ is strictly convex at every $[\phi] \in \Wcaltd$. 
Therefore $[\phi^*]$ is the unique minimizer of $\Etd$ on the quotient space $\Wcaltd$.
\end{proof}

According to Theorem \ref{thm:convex}, our inverse OT formulation \eqref{eq:irot_final} is convex as long as the customizable regularization $R(c)$ is convex in $c$ or imposes a constraint of $c$ onto a convex set.
In this case, we can employ convex optimization schemes to solve \eqref{eq:irot_final} for the cost function, which are computationally much cheaper than solving general bi-level optimizations. 
Moreover, Theorem \ref{thm:convex} implies that \eqref{eq:irot_final} admits a unique equivalence set that minimizes the functional $E$. 
Therefore, even without $R(c)$, we can characterize the entire optimal solution set using one minimizer of $E(\alpha,\beta,c)$: if $(\alpha,\beta,c)$ minimizes $E$, then any other $(\bar{\alpha},\bar{\beta},\bar{c})$ minimizes $E$ \emph{if and only if} $(\bar{\alpha},\bar{\beta},\bar{c}) \sim (\alpha,\beta,c)$.
As we will show later, certain mild regularization $R(c)$ on $c$ can further narrow down the search of the desired cost $c$ to a single point within the equivalence set minimizing $E$.
%
%\begin{remark}
%The formulation of inverse OT \eqref{eq:irot_final} is derived based on the entropy regularized OT \eqref{eq:rot} rather than the original unregularized OT \eqref{eq:ot}. The entropy regularization yields a strictly convex problem \eqref{eq:rot} and an equivalent dual problem \eqref{eq:rot_dual} due to the min-max theorem.
%Moreover, the OT plan with entropy regularization is nonzero everywhere, and hence takes account of uncertainty and incompleteness of observed data, which can be an advantage over the OT without regularization \citep{peyre2019computational}.
%\end{remark}

\section{Algorithmic Development}
\label{sec:algorithm}
In this section, we develop prototype numerical algorithms for cost function learning based on the inverse OT formulation \eqref{eq:irot_final}. 
As mentioned in Section \ref{sec:intro}, the cost function reduces to a matrix in the discrete case while renders a real-valued function on $X \times Y$ in the continuous case. Due to this substantial difference, we consider the algorithmic developments in these two cases separately.
\subsection{Discrete Case}
\label{subsec:dis_iot}
\paragraph{Algorithm for discrete inverse OT}
In the discrete case, the marginal distributions $\mu$ and $\nu$ are two probability vectors from $\Delta^{m-1}$ and $\Delta^{n-1}$, respectively.
Therefore, the cost $c$ and transport plan $\pi$ are both $m\times n$ matrices.
Suppose that we can summarize the observed matching pairs into the matching matrix $\pidata$, e.g., $\pi_{ij}=N_{ij}/N$, where $N_{ij}$ is the number of couples of an individual from the $i$th class corresponding to $\mu_i$ and another individual from the $j$th class corresponding to $\nu_j$, and $N=\sum_{i=1}^m\sum_{j=1}^{n} N_{ij}$, then the inverse OT formulation \eqref{eq:irot_final} reduces to the following optimization problem:
\begin{align}
\min_{\alpha,\beta,c}
& \ \{ R(c) - \langle \alpha, \mu \rangle - \langle \beta, \nu \rangle + \langle c, \pidata \rangle  +  s(\alpha,\beta,c)\} \label{eq:dis_irot_final}
\end{align}
where $\alpha \in \mathbb{R}^m$, $\beta \in \mathbb{R}^n$, $c\in\mathbb{R}^{m\times n}$, and $s: \mathbb{R}^{m} \times \mathbb{R}^{n} \times \mathbb{R}^{m\times n} \to \mathbb{R}$ is defined by
\begin{equation}
    \label{eq:def_s}
s(\alpha,\beta,c):=\varepsilon\sum_{i=1}^m \sum_{j=1}^n e^{(\alpha_i + \beta_j - c_{ij})/\varepsilon}.
\end{equation}
To solve the minimization problem, we can apply a variant of matrix scaling by modifying the Sinkhorn-Knopp algorithm that alternately updates $\alpha,\beta,c$ in \eqref{eq:dis_irot_final}.
Specifically, the updates of $\alpha$ and $\beta$ are identical to that in the Sinkhorn-Knopp algorithm for the forward entropy regularized OT \citep{cuturi2013sinkhorn}.
The update of $c$ reduces to solving a regularized (or constrained, depending on $R(c)$) minimization for fixed $\alpha,\beta$. This update scheme is well known as block coordinate descent (BCD) or alternating minimization (AM). In general, convergence of BCD requires joint convexity objective function and Lipschitz continuity of its gradient \citep{beck2015convergence}. The objective function $E$ is shown to be joint convex above, but its gradient is not Lipschitz continuous. We present an alternate formulation which is equivalent to \eqref{eq:irot_final} but the objective function can be shown to have Lipschitz continuous gradient. We provide details of the BCD algorithm and its convergence for this formulation in Appendix \ref{appsec:bcd}.

We here advocate a modified algorithm which is easy-to-implement and empirically performs better than BCD in our tests. The updates of $\alpha$ and $\beta$ remain the same as before. The modification is in the update of $c$ as follows: we split the update of $c$ into two steps, where the first step is matrix scaling to compute an $m\times n$ matrix $K$ such that
\begin{equation*}
    K_{ij} = \frac{\pidata_{ij}}{e^{(\alpha_i + \beta_j)/\varepsilon}}, \quad \mbox{for}\quad i=1,\dots,m, \ \  j=1,\dots,n,
\end{equation*} 
and the second step is a proximal gradient descent to obtain the updated $c$:
\begin{equation}
\label{eq:prox-R}
    c = \prox_{\gamma R}(\hat{c}) := \argmin_{c} \cbr[2]{R(c) + \frac{1}{2\gamma} \|c - \hat{c}\|^2 },
\end{equation}
where $\hat{c} := -\varepsilon \log K$, and all exponential, logarithm, division operations mentioned here are performed component-wisely.
If $R(c)$ imposes a constraint of $c$ to a convex set $C \subset \mathbb{R}^{m\times n}$, then $\prox_{\gamma R}$ reduces to the orthogonal projection onto $C$, which has unique solution and usually can be computed very fast.
We summarize the steps of this modified scheme in Algorithm \ref{alg:dis_irot}.

It is worth noting that, as discussed in Section \ref{subsec:entropy_iot}, the value of $\varepsilon$ is \emph{unidentifiable} in inverse OT given that the data $\pidata$ only contains information $c/\varepsilon$ not $c$. Hence, we can just set $\varepsilon=1$ in Algorithm \ref{alg:dis_irot} and will recover $c/\varepsilon$, which is the same as $c$ up to a constant scaling. 

\begin{algorithm}[t]
\caption{Matrix Scaling Algorithm for Cost Learning in Discrete Inverse OT \eqref{eq:dis_irot_final}}
\label{alg:dis_irot}
\begin{algorithmic}
\STATE {\bfseries Input:} Observed matching matrix $\hat{\pi} \in \mathbb{R}^{m\times n}$ and its marginals $\mu \in\mathbb{R}^m,\nu\in\mathbb{R}^n$. 
\STATE {\bfseries Initialize:} $\alpha \in \mathbb{R}^{m\times 1}, \beta\in\mathbb{R}^{n\times 1}, u=\exp(\alpha/\varepsilon),v=\exp(\beta/\varepsilon)$, $c\in\mathbb{R}^{m\times n}$. 
\REPEAT
\STATE $K \gets e^{-c/\varepsilon}$
\STATE $u \gets \mu/(K v)$ %\COMMENT{Division is component-wise.}
\STATE $v \gets \nu/(K^{\top} u)$ %\COMMENT{Division is component-wise.}
\STATE $K \gets \hat{\pi}/(u v^{\top})$ %\COMMENT{Division is entry-wise.}
\STATE $c \gets \prox_{\gamma R} (- \varepsilon \log(K))$ 
\UNTIL{convergent}
\STATE {\bfseries Output:} $\alpha = \varepsilon \log u$, $\beta= \varepsilon \log v$, $c$
\end{algorithmic}
\end{algorithm}

% To maintain the convexity of $E(\alpha,\beta,c)$ in \eqref{eq:irot_final}, we can impose proper convex regularization or constraint to convex set through $R(c)$.

\paragraph{Uniqueness of solution in discrete inverse OT}
As inverse problems are underdetermined in general, additional information can be essential to narrow down the search to the desired solution to \eqref{eq:irot_final}.
The key is a properly designed $R(c)$ which imposes convex regularization or constraint to convex set.
To avoid overloading notations, we use the same symbols to denote the discrete counterparts of those in \ref{subsec:iot-formulation}.
For convenience, we denote $J=[1_m^{\top}\otimes I_n; I_m\otimes 1_n^{\top}; I_{mn}]$, where $I_n$ is the $n\times n$ identity matrix, and $[\cdot;\cdot]$ stacks the arguments vertically by following the standard MATLAB syntax.
Denote $\phi=[\alpha;\beta;c]\in\mathbb{R}^{m+n+mn}$ (where $c\in\mathbb{R}^{mn}$ stacks the columns of $c\in\mathbb{R}^{m\times n}$ in order vertically, we use the matrix and vector forms of $c$ interchangeably hereafter) , then $J\phi=0$ iff $\alpha_i + \beta_j = c_{ij}$ for all $i,j$. The following result characterizes a sufficient condition for unique minimizers of \eqref{eq:dis_irot_final} if $R(c)$ imposes a convex constraint of $c$ into the set $C$.
Note that, in this case, \eqref{eq:dis_irot_final} reduces to $\min_{(\alpha,\beta,c)\in \Wcal} E(\alpha,\beta,c)$ over the manifold $\Wcal=\mathbb{R}^{m} \times \mathbb{R}^n \times C$ describing the constraint on $\phi$, and the proximal operator $\prox_{\gamma R}$ in the $c$-step in Algorithm \ref{alg:dis_irot} is the projection onto $C$.
\begin{theorem}\label{thm:convex_dis}
Suppose $R(c)$ imposes the projection onto a closed convex set $C$ in $\mathbb{R}^{m\times n}$ and \eqref{eq:dis_irot_final} attains minimum at $\phi^*:=(\alpha^*,\beta^*,c^*)$. If $T_{\phi^*}\Wcal \cap \mathrm{ker}(J)=\{0\}$, where $T_{\phi^*}\Wcal$ is the tangent space of $\Wcal$ at $\phi^*$, then $\phi^*$ is the unique minimizer of \eqref{eq:dis_irot_final}.
\end{theorem}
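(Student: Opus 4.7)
The plan is to combine two observations: (a) any two minimizers of \eqref{eq:dis_irot_final} must differ by an element of $\mathrm{ker}(J)$; and (b) any feasible displacement from $\phi^*$ lies in $T_{\phi^*}\Wcal$. The hypothesis $T_{\phi^*}\Wcal \cap \mathrm{ker}(J) = \{0\}$ then collapses any such difference to zero.

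First I would transfer the strict convexity portion of Theorem \ref{thm:convex}(iv) to the discrete setting. The arguments in parts (i) and (iv) of Theorem \ref{thm:convex} carry over verbatim: along any direction $\psi = [\psi_\alpha; \psi_\beta; \psi_c]$, the second derivative of $E$ equals
\begin{equation*}
\frac{1}{\varepsilon} \sum_{i,j} e^{(\alpha_i + \beta_j - c_{ij})/\varepsilon} (\psi_{\alpha,i} + \psi_{\beta,j} - \psi_{c,ij})^2,
\end{equation*}
which vanishes if and only if $J\psi = 0$, by the defining property of $J$. Hence $\Etd$ is strictly convex on the quotient $\mathbb{R}^{m+n+mn}/\mathrm{ker}(J)$, and any two minimizers of $E$ over any convex feasible subset must project to the same equivalence class.

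Applying this to a putative second minimizer $\bar{\phi}^* \in \Wcal$: convexity of $\Wcal = \mathbb{R}^m \times \mathbb{R}^n \times C$ together with joint convexity of $E$ imply that every point on the segment $\phi_t := (1-t)\phi^* + t\bar{\phi}^*$, $t \in [0,1]$, is feasible and attains the optimal value. Projecting to the quotient, $\Etd([\phi_t])$ is constant in $t$, so strict convexity forces $[\phi_t]$ to be constant, yielding $\bar{\phi}^* - \phi^* \in \mathrm{ker}(J)$. On the other hand, convexity of $\Wcal$ immediately places the feasible displacement $\bar{\phi}^* - \phi^*$ inside the tangent cone $T_{\phi^*}\Wcal$ at $\phi^*$. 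The transversality hypothesis then delivers $\bar{\phi}^* - \phi^* = 0$, i.e.\ $\bar{\phi}^* = \phi^*$.

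The main obstacle I anticipate is cleanly identifying the equivalence relation $\sim$ with $\mathrm{ker}(J)$ in the discrete formulation---that is, verifying that the Hessian of $E$ is positive definite transverse to equivalence classes---because everything downstream reduces to a purely linear-algebraic transversality argument once this is in place. A secondary interpretive point is that when $\phi^*$ lies on the relative boundary of $C$, $T_{\phi^*}\Wcal$ has to be read as the tangent cone of the convex set $\Wcal$; since $\bar{\phi}^* - \phi^*$ points from $\phi^*$ toward another feasible point, however, it still belongs to this cone, and the argument proceeds without modification.
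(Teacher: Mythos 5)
Your proposal is correct and rests on the same core facts as the paper's proof: the Hessian of $E$ in the discrete setting is $\varepsilon^{-1}J^{\top}\mathrm{diag}(e^{J\phi/\varepsilon})J$, hence strictly positive definite exactly transverse to $\mathrm{ker}(J)$, and the hypothesis $T_{\phi^*}\Wcal\cap\mathrm{ker}(J)=\{0\}$ kills the remaining flat directions. The only difference is organizational: the paper argues locally at $\phi^*$ by restricting the Hessian to $T_{\phi^*}\Wcal$, whereas you route through the quotient-space strict convexity of $\Etd$ and the feasible segment between two putative minimizers, which makes explicit the local-to-global step (and the tangent-cone reading of $T_{\phi^*}\Wcal$) that the paper leaves implicit.
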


\begin{proof}%[Proof of Theorem \ref{thm:convex_dis}]
In the discrete case, we use the notation $\phi=[\alpha;\beta;c]\in\mathbb{R}^{m+n+mn}$ and $E(\phi) = \langle [-\mu;-\nu;c], \phi\rangle + \varepsilon \langle e^{J\phi/\varepsilon}, 1_{mn}\rangle$, where the exponential is component-wise.
Hence the Hessian of $E(\phi)$ is
\begin{equation*}
    \nabla^2 E(\phi) = \varepsilon^{-1} J^{\top} \mathrm{diag}(e^{J\phi/\varepsilon})J \succeq 0.
\end{equation*}
If $\phi^*$ is a minimizer and $T_{\phi^*}\Wcal \cap \mathrm{ker}(J) = \{0\}$, then for any nonzero $\psi \in T_{\phi^*}\Wcal$, we have $J\psi \ne 0$. Hence,
\begin{equation*}
    \psi^{\top} (\nabla^2 E) \psi = \varepsilon^{-1} (J\psi)^{\top} \mathrm{diag}(e^{J\psi^*/\varepsilon}) (J\psi) > 0,
\end{equation*}
since $\mathrm{diag}(e^{J\psi^*/\varepsilon}) \succ 0$.
Therefore $\phi^*=(\alpha^*,\beta^*,c^*)$ is the unique global minimizer.
\end{proof}

We can derive closed-form expression for several special cases of $C$ that cover many of those used in practice.
\begin{example}[Symmetric cost matrix]
\label{ex:symmetric}
If $C$ is the set of symmetric matrices with zero diagonal entries, then $\prox_{\gamma R} (\hat{c})$ ($\gamma$ does not make any difference in this case) in Algorithm \ref{alg:dis_irot}, i.e., the projection of $\hat{c}$ onto $C$, is given by
\[
c = \prox_{\gamma R}(\hat{c}) = \frac{\hat{c} + \hat{c}^{\top}}{2}
\]
and followed by setting the diagonal entries $c$ to 0. Despite of its simple projection, the constraint $C$ includes a large number of cost matrices used in practice.
In particular, any (multiple of) distance-like cost matrix, i.e., $kc$ with $c_{ij}=|i-j|^p$ for any nonzero $k,p\in\mathbb{R}$ (if $p<0$ then we require $c_{ii}=0$ separately) is included in $C$. Note that any permutation of the indices of such $c$ is still in $C$.
\end{example}

Importantly, the following corollary shows that the symmetry of $c$ in Example \ref{ex:symmetric} implies uniqueness of solution to \eqref{eq:dis_irot_final}. Note that we additional assume the diagonal entries of $c$ to be zeros, but this holds in most applications since there is usually no cost incurred when no transfer of probability mass is needed.
\begin{corollary}\label{cor:sym}
Suppose $C=\{c \in\mathbb{R}^{n\times n}:\,c=c^{\top}, c_{ii}=0,\forall i\}$ and $R(\cdot)$ is the indicator function of $C$, i.e., $R(c)=0$ if $c \in C$ and $\infty$ otherwise, then \eqref{eq:dis_irot_final} has a unique solution $c^*$.
\end{corollary}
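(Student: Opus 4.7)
The plan is to establish uniqueness of $c^*$ alone (but not of the full triple $(\alpha^*,\beta^*,c^*)$, which retains a residual scalar shift freedom) by combining the equivalence-class characterization of minimizers of $E$ with the two algebraic constraints built into $C$.

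First, I would observe that in the discrete setting $E(\alpha,\beta,c)$ depends on its arguments only through the combination $t_{ij}:=\alpha_i+\beta_j-c_{ij}$. Indeed, using $\mu=\pidata 1_n$ and $\nu=\pidata^{\top}1_m$ to eliminate $\langle\alpha,\mu\rangle$ and $\langle\beta,\nu\rangle$ in favor of $\pidata$, one gets
\[
E(\alpha,\beta,c) \;=\; -\langle t,\pidata\rangle \;+\; \varepsilon\sum_{i,j} e^{t_{ij}/\varepsilon},
\]
which is strictly convex as a function of $t\in\mathbb{R}^{n\times n}$. The first-order condition yields the unique optimizer $t^*=\varepsilon\log\pidata$. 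Consequently, any two minimizers $(\alpha^{(1)},\beta^{(1)},c^{(1)})$ and $(\alpha^{(2)},\beta^{(2)},c^{(2)})$ of \eqref{eq:dis_irot_final} must induce this same $t^*$, i.e., $\alpha^{(1)}(i)+\beta^{(1)}(j)-c^{(1)}_{ij}=\alpha^{(2)}(i)+\beta^{(2)}(j)-c^{(2)}_{ij}$. This is exactly the statement that the two triples are equivalent in the sense of Definition \ref{def:equiv} (the discrete analog of Theorem \ref{thm:convex}(iv)).

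Next, equivalence rewrites as $c^{(2)}_{ij}=c^{(1)}_{ij}-\delta_{\alpha}(i)-\delta_{\beta}(j)$ for some $\delta_{\alpha},\delta_{\beta}\in\mathbb{R}^n$. I would then activate the two defining properties of $C$ in turn. Symmetry of both $c^{(1)}$ and $c^{(2)}$ gives $\delta_{\alpha}(i)-\delta_{\beta}(i)=\delta_{\alpha}(j)-\delta_{\beta}(j)$ for all $i,j$, so $\delta_{\alpha}-\delta_{\beta}=k\,1_n$ for some scalar $k$. The zero-diagonal condition $c^{(1)}_{ii}=c^{(2)}_{ii}=0$ yields $\delta_{\alpha}(i)+\delta_{\beta}(i)=0$ for every $i$. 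Solving the two gives $\delta_{\alpha}=(k/2)\,1_n$ and $\delta_{\beta}=-(k/2)\,1_n$, so $\delta_{\alpha}(i)+\delta_{\beta}(j)\equiv 0$, which forces $c^{(1)}=c^{(2)}$.

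The main obstacle, and the reason Theorem \ref{thm:convex_dis} cannot be invoked directly, is that the vector $(k/2\cdot 1_n,\,-k/2\cdot 1_n,\,0)$ always lies in $T_{\phi^*}\mathcal{W}\cap\mathrm{ker}(J)$, so the sufficient condition there fails for this $C$. This reflects the genuine non-identifiability of the dual variables $(\alpha,\beta)$ and is also why the corollary only claims uniqueness of $c^*$. The key insight that makes the argument go through is that symmetry pins down $\delta_{\alpha}-\delta_{\beta}$ to be a \emph{constant} vector and the zero-diagonal constraint pins down $\delta_{\alpha}+\delta_{\beta}$ to be identically zero; together these force the rank-one shift $\delta_{\alpha}1_n^{\top}+1_n\delta_{\beta}^{\top}$ to vanish, even though individually $\delta_{\alpha}$ and $\delta_{\beta}$ need not be zero.
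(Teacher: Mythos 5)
Your proof is correct and follows essentially the same route as the paper's: the paper characterizes $T_{\phi^*}\Wcal\cap\ker(J)$ as $\{(\xi 1_n,-\xi 1_n,0)\}$ via exactly your two steps (the zero diagonal forces $\beta=-\alpha$, symmetry then forces $\alpha$ constant) and cites Theorem \ref{thm:convex} for the fact that all minimizers lie in a single equivalence class, which you instead re-derive directly from strict convexity of $E$ in $t_{ij}=\alpha_i+\beta_j-c_{ij}$. The one slip is immaterial: the minimizer of $t$ over the image of the feasible set need not equal $\varepsilon\log\pidata$ (that is the unconstrained optimum), but your argument only needs uniqueness of the minimizing $t$, which strict convexity over that convex image does provide.
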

\begin{proof}%[Proof of Corollary \ref{cor:sym}]
Since $R(\cdot)$ is the projection onto $C$, we know $\Wcal = \{\phi = (\alpha,\beta,c): L c = 0\}$ where $L \in \mathbb{R}^{(2n^{2}+n) \times n^{2}}$ represents the linear mapping such that $Lc \in \mathbb{R}^{2n^2+n}$ stacks vertically the $n\times n$ matrix $c$, its transpose $c^{\top}$ and the vector $\mathrm{diag}(c)$ of its diagonal. Therefore $T_{\phi}\Wcal = \Wcal$ for all $\phi \in \Wcal$. 

Suppose $\psi = (\alpha,\beta,c) \in T_{\phi^*}\Wcal \cap \mathrm{ker}(J)$, then $Lc=0$ and $c_{ij} = \alpha_i+ \beta_j$ for all $i,j$ (since $Jc = 0$). Therefore $c_{ii}=\alpha_{i} + \beta_{i}=0$, and hence $\alpha_i=-\beta_i$, for all $i$. Now we have
\begin{equation*}
c_{ij} = \alpha_i + \beta_j = \alpha_i - \alpha_j.
\end{equation*}
Similarly, $c_{ji} = \alpha_j - \alpha_i$. Since $c$ is symmetric, we know that $\alpha_i - \alpha_j = \alpha_j - \alpha_i$, which implies $\alpha_i=\alpha_j$. Therefore $c_{ij}=\alpha_i + \beta_j = \alpha_i-\alpha_j=0$ for all $i,j$, and hence $c=0$. As a consequence, $\alpha=-\beta=\xi 1_n$ for some constant $\xi\in\mathbb{R}$. So $T_{\phi}\Wcal \cap \mathrm{ker}(J)=\{(\xi 1_n,-\xi 1_n, 0)\in \mathbb{R}^{2n+n^2} :\, \xi\in\mathbb{R}\}$. 

By Theorem \ref{thm:convex}, if $\phi^*=(\alpha^*,\beta^*,c^*)$ solves \eqref{eq:dis_irot_final}, then $E$ attains minimum only at $\{\phi^*+\psi:\psi \in T_{\phi}\Wcal \cap \mathrm{ker}(J)\} \subset [\phi^*]$, which all contain the same cost matrix $c^*$.
\end{proof}

\begin{example}
[Linear affinity matrix] 
The cost matrix $c$ is parameterized as $c = G^{\top} A D$, where $G=[g_1,\dots,g_m]\in\mathbb{R}^{p \times m}$ and $D=[d_1,\dots,d_n] \in \mathbb{R}^{q \times n}$ are given. Here $G$ and $D$ stand for the feature vector matrices for the two populations in matching, and $A \in \mathbb{R}^{p \times q}$ is the so-called \emph{linear affinity matrix} (or interaction matrix) to be reconstructed. Therefore, the constraint set is $C=\{G^{\top} A D: A \in \mathbb{R}^{p\times q} \}$.

In the study of personality traits in marriage \citep{dupuy2014personality}, $g_i \in \mathbb{R}^p$ and $d_j \in \mathbb{R}^q$ are the given feature vectors of the $i$th class of men and $j$th class of women in the market for $i\in[m]$ and $j\in[n]$, and $A_{kl} \in \mathbb{R}$ is the complementary coefficient of the $k$th feature of men and $l$th feature of women for $k\in[p]$ and $l\in[q]$. 

In this case, the projection of $-\varepsilon \log K$ onto the set $C$ in \eqref{eq:prox-R} is given by 
\begin{equation*}
\prox_{R}(-\varepsilon (G^{+})^{\top} (\log K) D^{+}),
\end{equation*}
where $G^{+}$ and $D^{+}$ are the Moore-Penrose pseudoinverse of $G$ and $D$, respectively,and can be pre-computed before applying Algorithm \ref{alg:dis_irot}.
\end{example}

There are many other choices of the constraint set $C$ and the regularization $R$. These are often application-specific and thus require discussions case by case, which is beyond the scope of the present work. However, the general strategy developed in this section can be easily modified and applied to many different situations.

\subsection{Continuous Case}
\label{subsec:cts_iot}

In the continuous case, $\mu$, $\nu$, and $\pi$ represent probability density functions on $X$, $Y$, and $X \times Y$, respectively.
In this case, we are only given i.i.d.~samples of these distributions, namely, $x^{(i)} \sim \mu$, $y^{(j)}\sim \nu$, and $(x^{(i)},y^{(i)}) \sim \pidata \in \Pi(\mu,\nu)$.
The cost function $c(x,y)$ is a function defined on the continuous space $X \times Y$ where $X \subset \mathbb{R}^{d_X}$ and $Y \subset \mathbb{R}^{d_Y}$ for potentially high dimensionality $d_X$ and $d_Y$ (e.g., $d_{X},d_{Y} \ge 3$).
As mentioned in Section \ref{sec:intro}, discretization of the spaces $X$ and $Y$ renders $m$ and $n$ increasing exponentially fast in $d_{X}$ and $d_{Y}$, and then Algorithm \ref{alg:dis_irot} (or any discrete inverse OT algorithm) becomes infeasible computationally.

\paragraph{Cost function parametrization by deep neural networks}
Our approach \eqref{eq:irot_final} is an unconstrained optimization with customizable regularization $R(c)$. This allows for a natural solution to overcome the issue of discretization using deep neural networks, which is a significant advantage over bi-level optimization formulations.
More precisely, we can parameterize the cost function $c$, as well as the functions $\alpha$ and $\beta$, in \eqref{eq:irot_final} as deep neural networks.
In this case, $\alpha$, $\beta$, and $c$ are neural networks with output layer dimension $1$ and input layer dimensions $d_X$, $d_Y$, and $d_X+d_Y$, respectively.
In particular, the design of the network architecture of $c$ may take the regularization $R(c)$ into consideration.
For example, if $R(c)$ suggests that $c\ge 0$, then we can set the activation function in the output layer as the rectified linear unit (ReLU) $\sigma(x):=\max(x,0)$.
We can also introduce an encoder $h_{\eta}$ to be learned, such that the cost $c_\eta$ is the standard Euclidean distance between encoded features, e.g., $c_\eta(x,y) = |h_\eta(x) - h_\eta(y)|$.
Nevertheless, in general, the architectures of the $\alpha$, $\beta$, and $c$ are rather flexible, and can be customized adaptively according to specific applications.
We will present several numerical results with architecture specifications used in our experiments.

To formalize our deep neural net approach for solving continuous inverse OT, we let $\theta$ denote the parameters of $\alpha$ and $\beta$ (in actual implementations, $\alpha$ and $\beta$ are two separate neural networks with different parameters $\theta_a$ and $\theta_b$ respectively, but we use $\theta$ for both to avoid overloaded notations). %
In addition, we use $\eta$ to denote the parameters of $c$.
Then we can solve for the optimal $(\theta^*, \eta^*)$ by minimizing the loss function $L$ of the network parameters $(\theta,\eta)$ based on \eqref{eq:irot_final} as follows:
\begin{align}
\min_{\theta, \eta} L(\theta,\eta) := R(c_\eta) - \Ehat_{\mu}[\alpha_\theta] - \Ehat_{\nu}[\beta_\theta] + \Ehat_{\pidata}[c_\eta] \label{eq:loss}  
 + \varepsilon \int_{X \times Y} e^{(\alpha_\theta(x) + \beta_\theta(y) - c_\eta(x,y))/\varepsilon} \, d\lambda.
\end{align}
In \eqref{eq:loss}, the empirical expectations are defined by the sample averages:
\begin{align*}
\Ehat_{\mu}[\alpha_\theta] := \frac{1}{N_\mu} \sum_{i=1}^{N_\mu} \alpha_\theta(x^{(i)}),\quad
\Ehat_{\mu}[\beta_\theta] := \frac{1}{N_\nu} \sum_{i=1}^{N_\nu} \beta_\theta(y^{(i)}),\quad
\Ehat_{\pidata}[c_\eta] := \frac{1}{N_{\pidata}} \sum_{i=1}^{N_{\pidata}} c_\eta(x^{(i)}, y^{(i)}),
\end{align*}
where $\Dcal_\mu:=\{x^{(i)} : i\in[N_\mu]\}$ and $\Dcal_\nu:=\{y^{(i)} : i\in[N_\nu]\}$ are i.i.d.~samples drawn from $\mu$ and $\nu$ respectively, and $\Dcal_{\pidata}:=\{(x^{(i)},y^{(i)}) : i\in[N_{\pidata}]\}$ are observed pairings of $\pidata$.
If only $\pidata$ is available, we can also substitute the samples for $\mu$ and $\nu$ by the first and second coordinates of the samples in $\{(x^{(i)},y^{(i)}) : i\in[N_{\pidata}]\}$.
The last integral in \eqref{eq:loss} can be approximated by numerical integration methods, such as Gauss quadrature and sample-based integrations.
For example, if $X\times Y$ is bounded, we can sample $N_s$ collocation points $\{(x^{(i)}, y^{(i)})  :  i \in [N_s]\}$ from $X \times Y$ uniformly, and approximate the integral by
\begin{align}
\label{eq:int_unif}
& \int_{X \times Y} G_{\theta,\eta}(x,y) \, d \pidata
\approx \frac{1}{N_s} \sum_{i=1}^{N_s} G_{\theta,\eta}(x^{(i)}, y^{(i)}),
\end{align}
where $G_{\theta,\eta}(x,y) := e^{(\alpha_\theta(x) + \beta_\theta(y) - c_\eta(x,y))/\varepsilon}$.
A more appealing method for sample-based integration is to use an importance sampling strategy: we first estimate the mode(s) of the function $G_{\theta,\eta}(x,y)$, and draw i.i.d.~samples points $\{(x^{(i)}, y^{(i)})  :  i \in [N_s]\}$ from a Gaussian distribution $\rho((x,y); \omega,\Sigma)$ where $\omega$ and $\Sigma$ represent the mean (close to the mode) and variance of the Gaussian (or a mixture of Gaussians), and approximate the integral by \[
\int_{X \times Y} G_{\theta,\eta}(x,y) \, d \pidata
\approx \frac{1}{N_s} \sum_{i=1}^{N_s} \frac{G_{\theta,\eta}(x^{(i)}, y^{(i)})}{\rho((x^{(i)}, y^{(i)}); \omega,\Sigma)}.\]
The advantages of this importance sampling strategy include the capability of integral over unbounded domain $X \times Y$ and smaller sample approximation variance with properly chosen $\omega$ and $\Sigma$.
Other methods for approximating the integrals can also be applied. 
In our experiments, we simply used the uniform sampling shown in \eqref{eq:int_unif}.

Now we have all the ingredients in the loss function $L(\theta,\eta)$ in \eqref{eq:loss}.
We can apply (stochastic) gradient descent algorithm to $L$ and find an optimal solution $(\theta^*, \eta^*)$.
In each iteration, we can use all the samples available for the empirical expectations, or only sample a mini-batch for the computation of the gradient of $L$ with respect to $(\theta,\eta)$.
Otherwise, the optimization is standard in deep neural network training.
Moreover, we can use scaling $\alpha_\theta \leftarrow \alpha_\theta / \varepsilon$, $\beta_\theta \leftarrow \beta_\theta / \varepsilon$, and $c_\eta \leftarrow c_\eta / \varepsilon$ and hence $L(\theta,\eta)$ can be minimized with $\varepsilon=1$ in \eqref{eq:loss}.
Then we can scale $c_\eta$ back by multiplying $\varepsilon$ after $(\theta^*,\eta^*)$ is obtained.
This algorithm is summarized in Algorithm \ref{alg:cts_irot}.
\begin{algorithm}[t]
\caption{Cost Function Learning for Continuous Inverse OT by minimizing \eqref{eq:loss}}
\label{alg:cts_irot}
\begin{algorithmic}
\STATE {\bfseries Input:} Marginal distributions $\mu,\nu$ and observed pairing data $\hat{\pi}$. 
\STATE {\bfseries Initialize:} Deep nets $(\alpha_\theta,\beta_\theta)$, $c_\eta$.
\REPEAT
\STATE 1.~Draw a mini-batch from $\Dcal_{\mu}, \Dcal_{\nu}, \Dcal_{\pidata}$.
\STATE 2.~Sample $\{(x^{(i)}, y^{(i)})  :  i \in [N_s]\}\subset X \times Y$.
\STATE 3.~Form stochastic gradient $\hat{\nabla} L$ with empirical expectations and integral \eqref{eq:int_unif}.
\STATE 4.~Update $(\theta,\eta) \leftarrow (\theta,\eta) - \tau \hat{\nabla} L(\theta,\eta)$.
\UNTIL{convergent}
\STATE {\bfseries Output:} $\alpha_\theta$, $\beta_\theta$, $c_\eta$.
\end{algorithmic}
\end{algorithm}

\section{Numerical Experiments}
\label{sec:experiment}

\paragraph{Experiment setup}
We evaluate the proposed cost learning algorithms (Algorithms \ref{alg:dis_irot} and \ref{alg:cts_irot}) using several synthetic and real data sets.
Both algorithms are implemented in Python, where PyTorch is used in Algorithm \ref{alg:cts_irot} in the continuous inverse OT problem. The experiments are conducted on a machine equipped with 2.80GHz CPU, 16GB of memory.
% %
To evaluate the cost matrices/functions $c$ learned by the algorithms when the ground truth cost $c^*$ is available, we use the criterion of relative error $\|c - c^*\| / \|c^*\|$.
For discrete case, $\|\cdot\|$ is the standard Frobenius norm of matrices.
For continuous case, we evaluate the learned function $c_{\theta}$ and the ground truth cost function $c^*$ at a given finite set of grid points in $X \times Y$, so that both $c$ and $c^*$ can be treated as vectors and the standard 2-norm can be applied.
%
%Additional experiment results and details are provided in Supplementary Material \ref{app:experiment}.

\subsection{Discrete Inverse OT on Synthetic Data}
\label{subsec:dis_iot_synthetic}
We first test Algorithm \ref{alg:dis_irot} on learning cost matrix $c$ using observed transport plan matrix $\pidata$ in the discrete case.
We set $m=n$ and set ground truth $c^*$ with $c^*_{ij} = |\frac{i-j}{n}|^p$ for $i,j\in[n]$ and $p=0.5,1,2,3$.
Then we generate $\pidata$ for each $c^*$ with varying $\varepsilon=10^{1},10^{0},10^{-1},10^{-2}$ using Sinkhorn algorithm \citep{peyre2019computational}, and apply Algorithm \ref{alg:dis_irot} to $\pidata$ and see if we can recover the original $c^*$.
To this end, we set the constraint set $C=\{c\in\mathbb{R}^{n\times n}:\, c=c^{\top}, c_{ii}=0,\forall i\in[n]\}$.
%, for which the projection $\prox_{\gamma R}$ can be computed simply by $c\gets (c+c^{\top})/2$ followed by $\mathrm{diag}(c)\gets 0$.
%
We also truncate $c$ to be nonnegative values by applying $\max(\cdot,0)$, which seems to further improve efficiency for this problem.

Figure \ref{fig:dis_synthetic} shows the results of Algorithm \ref{alg:dis_irot}.
For fixed $\varepsilon=10^{-1}$, we generate $20$ random pairs of $(\mu,\nu)\in\mathbb{R}^{m} \times \mathbb{R}^{n}$ and corresponding $\pidata$ for problem size $m=n=100$, and apply Algorithm \ref{alg:dis_irot} to recover the cost matrix $c$. Figures \ref{subfig:dis_p_obj_vs_iter} and \ref{subfig:dis_p_err_vs_iter} show the average over the 20 instances of the objective function \eqref{eq:dis_irot_final} and relative error of $c$ (in logarithm) versus iteration number.
In all cases, the true $c^*$ is accurately recovered with relative error approximately $10^{-4}$ or lower after 500 iterations.
% 	
%For Algorithm \ref{alg:dis_irot}, we also conduct a series of experiments with varied parameters $p,n,\varepsilon$ to numerically analyze the inverse regularized optimal transport problem. With $\varepsilon=10^{-1}$, for each of $p=0.5,1,2,3$, we run 20 instances and plot decay of objective function value versus iteration (to 500 iterations). Here we use the mean as center and shaded band with width equal to standard deviation $\sigma$. We also plot the relative error on log scale versus iteration in a separated figure with the same settings. The results are shown in fig \ref{fig:dis_synthetic} $(a), (b)$ respectively. From the figures, we can see with $p=0.5$ we have much better relative error decay compared with others. For $p\ge 1$, larger $p$ leads to faster decay of relative error. The optimal objective function value gets larger with larger $p$ value.
% 
For fixed $p=2$, we also perform the same test of Algorithm \ref{alg:dis_irot} on $\pidata$ generated using varying entropy regularization weight $\varepsilon=10^{1},10^{0},10^{-1},10^{-2}$. We again run 20 instances and plot the relative error (in logarithm) versus iteration. The result is shown in Figure \ref{subfig:dis_eps_err_vs_iter}. With the same settings for $p=2$ and varying $\varepsilon$, we test Algorithm \ref{alg:dis_irot} for each problem size $n=128,256,512,1024,2048$, run the algorithm until the relative error of $c$ reaches $5 \times 10^{-2}$, and record the average of the CPU time for 20 instances. We plot the CPU time (in seconds) versus the problem size $n$ (in log-log) in Figure \ref{subfig:dis_eps_time_vs_size}.
We can see the algorithm run with smaller $\varepsilon$ reaches the prescribed relative error in shorter time as Figure \ref{subfig:dis_eps_time_vs_size} shows. From Figure \ref{subfig:dis_eps_err_vs_iter} we see how relative errors decrease as the iteration increases with different $\varepsilon$, all errors converge in similar patterns.
These tests evidently show the high efficiency and accuracy of Algorithm \ref{alg:dis_irot} in recovering cost matrices in discrete inverse OT.
\begin{figure}
\centering
\begin{subfigure}{.24\textwidth}
\includegraphics[width=\linewidth]{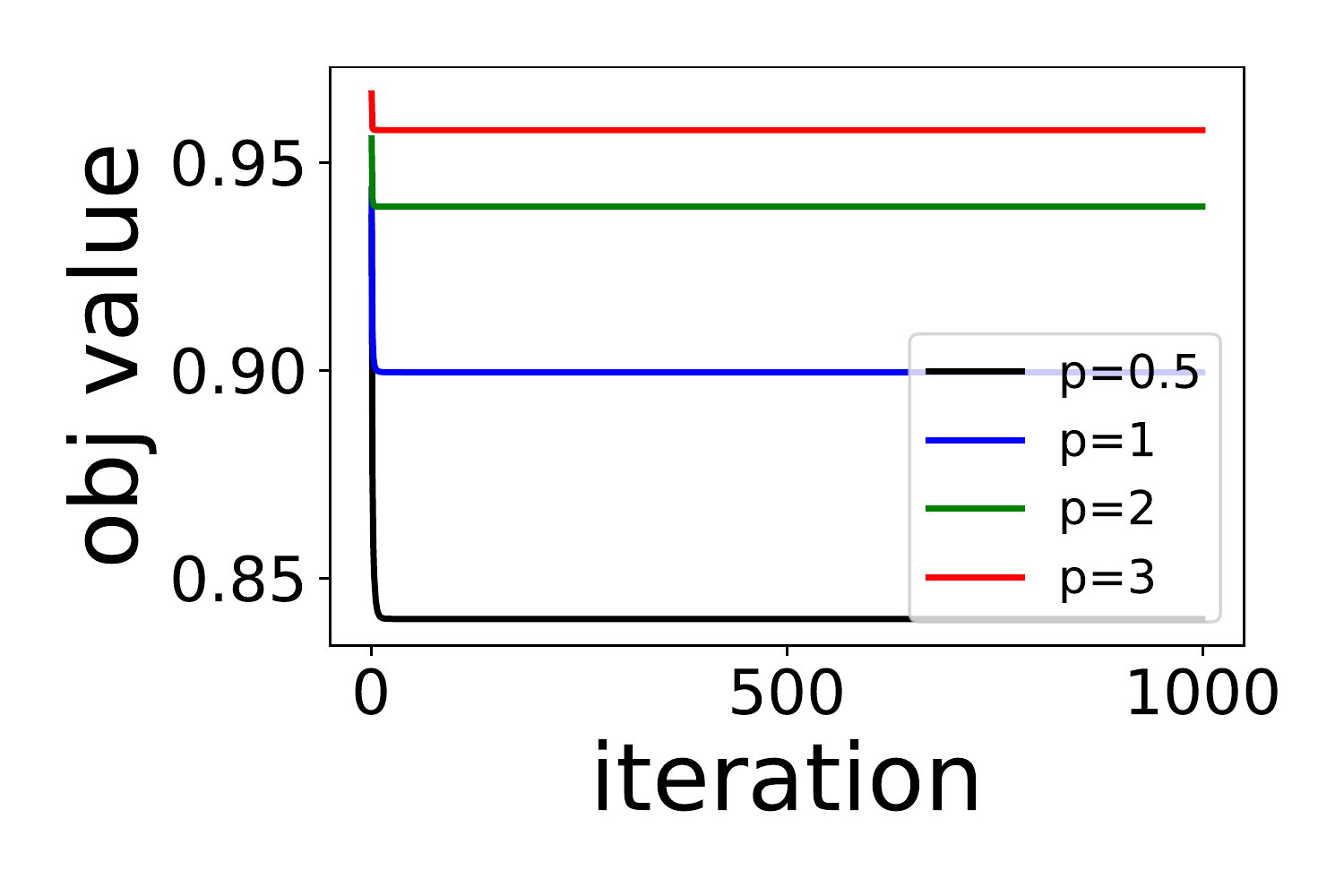}
% \caption{Obj vs iter (varying $p$)}
\caption{}
\label{subfig:dis_p_obj_vs_iter}
\end{subfigure}
\begin{subfigure}{.24\textwidth}
\includegraphics[width=\linewidth]{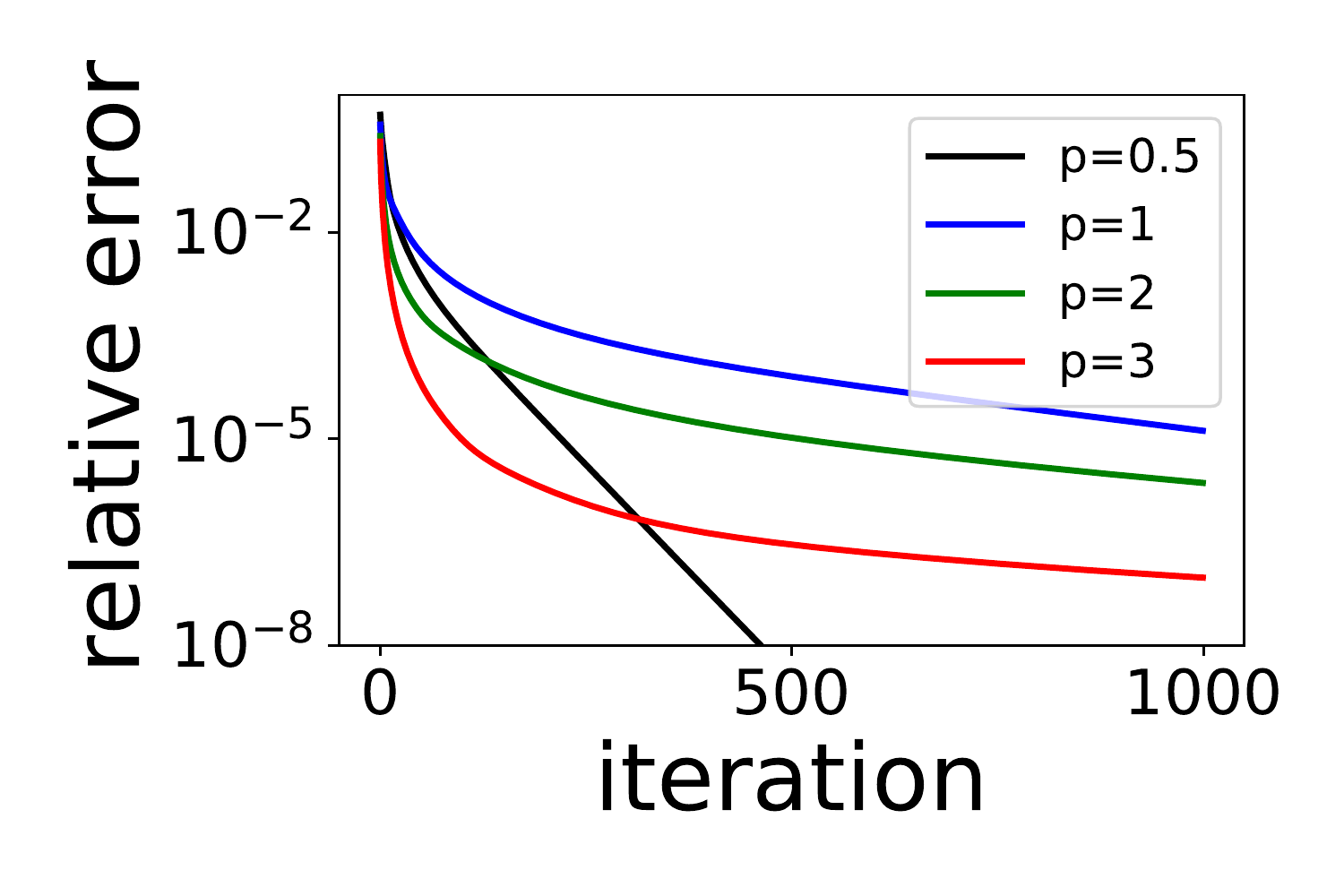}
% \caption{Error vs iter (varying $p$)}
\caption{}
\label{subfig:dis_p_err_vs_iter}
\end{subfigure}
\begin{subfigure}{.24\textwidth}
\includegraphics[width=\linewidth]{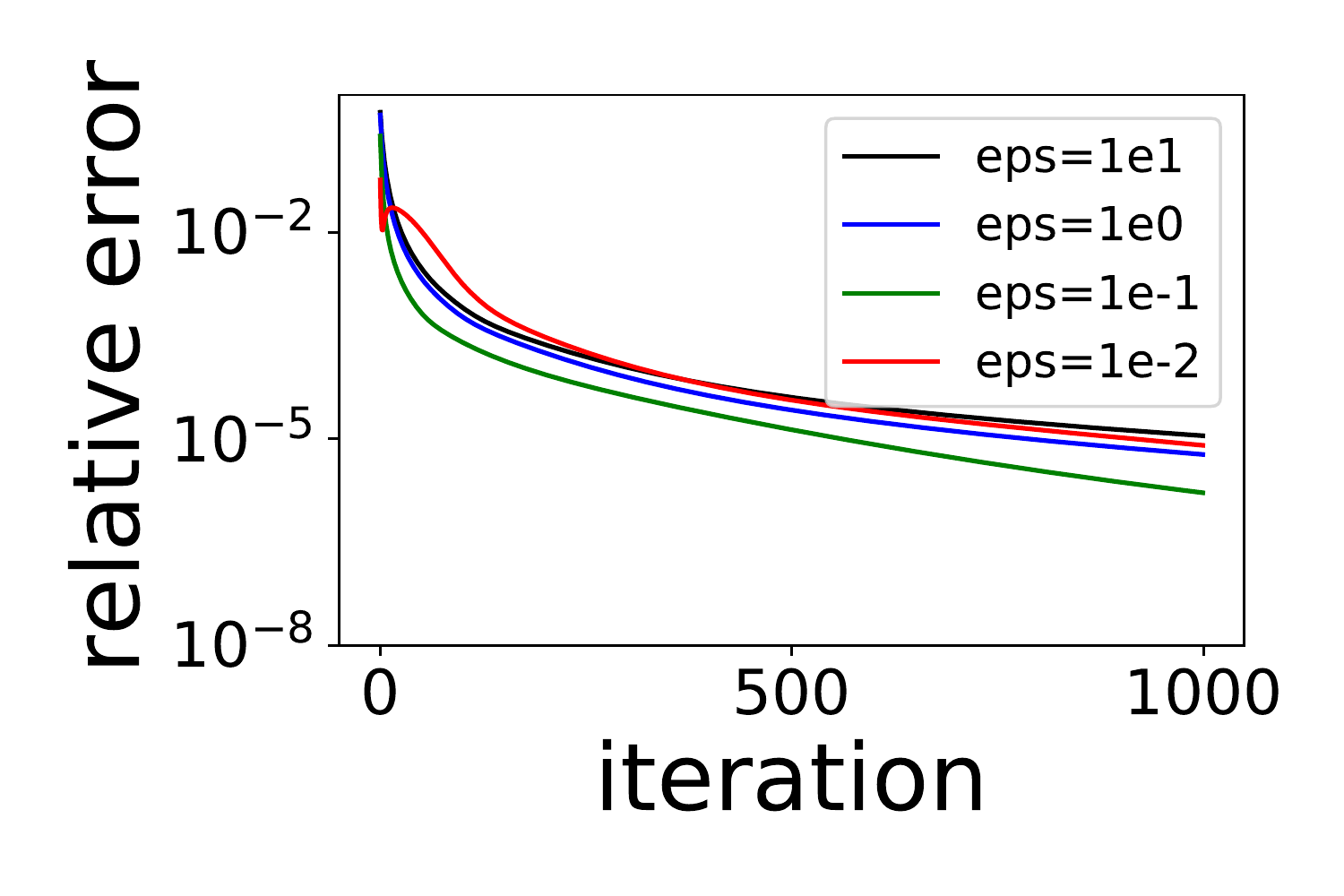} 
% \caption{Error vs iter (varying $\varepsilon$)} 
\caption{}
\label{subfig:dis_eps_err_vs_iter}
\end{subfigure}
\begin{subfigure}{.24\textwidth}
\includegraphics[width=\linewidth]{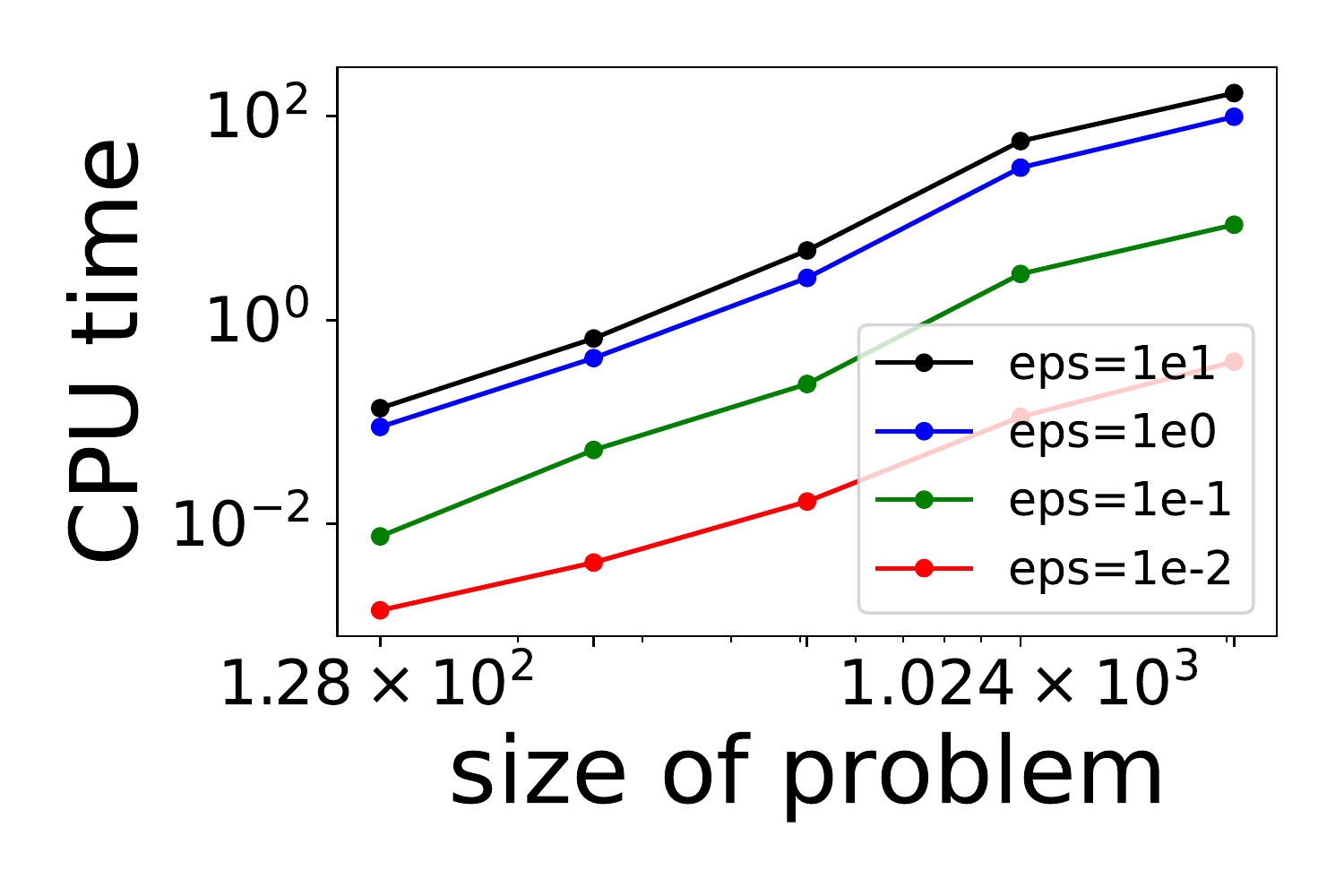} 
% \caption{CPU (s) vs $n$ (varying $\varepsilon$)}
\caption{}
\label{subfig:dis_eps_time_vs_size}
\end{subfigure}
\caption{Results of Algorithm \ref{alg:dis_irot} for cost matrix recovery on synthetic data. True $c_{ij}=|\frac{i-j}{n}|^p$ for $i,j\in[n]$. (a) Objective function value versus iteration number for varying $p$. (b) Relative error (in log scale) versus iteration number for varying $p$. (c) Relative error (in log scale) versus iteration for varying $\varepsilon$. (d) CPU time (in seconds) versus problem size $n$ in log-log scale for varying $\varepsilon$. Each curve shows the average over 20 instances.}
% \vspace{-12pt}
\label{fig:dis_synthetic}
\end{figure}

\subsection{Discrete Inverse OT on Real Marriage Data}
\label{subsec:dis_iot_marriage}
We follow the setting of \citep{galichon2015cupids} and apply Algorithm 1 to the Dutch Household Survey (DHS) data set (\url{https://www.dhsdata.nl}) to estimate the affinity matrix $A$. Here the cost $c$  has a parametric form $C=-G^{\top}A D$ where $G$ and $D$ are given feature matrices as described in Section \ref{subsec:dis_iot}.
Following \citep{galichon2015cupids}, we classify men and women into $m$ and $n$ categories respectively based on the given $G \in \mathbb{R}^{p\times m}$ and $D \in \mathbb{R}^{d\times n}$. These two matrices represent the corresponding feature vectors for men and women. The matrix $A\in\mathbb{R}^{p\times q}$ is the reward (affinity) matrix to be estimated, where the $(i,j)$ entry $A_{ij}$ measures the complementarity or substitutability between the $i$th attribute of men and the $j$th attribute of women. 

% \begin{table}[t]
% \caption{Part of the affinity matrix learned by Algorithm \ref{alg:dis_irot}. Entries show the complemenatrity coefficients of husband (H) and wife (W) for five features: education (edu), irresponsibility (irresp), disciplined (discip), ordered (order), and detail.}
% \label{tab:marriage_small}
% \centering
% \begin{tabular}{rrrrrr}
% \toprule
% H\textbackslash W & edu\ \ \ & irresp &  discip & order & detail \\
% \midrule
% edu & 0.728 & 0.029 & 0.042 & -0.128 & -0.055    \\ %\hline
% irresp    &0.081 &0.166& 0.032&-0.007 &0.001\\ %\hline
% discip    &-0.129 & 0.130& 0.193 &-0.010 &-0.012 \\ %\hline
% order    & -0.171 &0.012 &-0.054 &0.066 &0.039      \\ %\hline
% detail  &-0.137  &0.017 &-0.054& 0.058& 0.035 \\ 
% \bottomrule
% \end{tabular}
% \end{table}

Considering the data consistency, we only use the data from 2004 to 2017 (2016 is excluded due to an incompleteness issue of the data). We select 9 features from the data set, including educational-level, height, weight, health and 5 personality traits which can be briefly summarized as irresponsible, disciplined, ordered, clumsy, and detail-oriented. All the features are rescaled onto $[0,1]$ interval. The men and women are both clustered into 5 types by applying k-means algorithm, and each type of men or women is represented by the corresponding cluster center. After data cleaning, the data set contains the information about these features collected from 4,553 couples. In our experiment, we set $\varepsilon=10^{-2}$. Since the initialization of K-means algorithm still affects the values of the estimates, we run the experiments 100 times with different fixed random seeds and take the average of the resulting affinity matrix as the final estimation. The estimated affinity matrix is given in Table \ref{tab:affinity}.
%
% \begin{table}[t]
%     \centering
%     \caption{Affinity matrix estimated by Algorithm \ref{alg:dis_irot} from the marriage data. ``H'' and ``W'' stand for Husband and Wife respectively, ``Edu'' stands for Education, ``Irres'' stands for Irresponsibility, and ``Disc'' stands for Disciplined.}
%     \label{tab:affinity}
%     \begin{tabular}{crrrrrrrrr}
%     \toprule
%     {H\textbackslash}W & {Edu} & {Height} & {Weight} & {Health}  & {Irres} & {Disc} & {Order} & {Clumsy} & {Detail} \\ 
%     \midrule
%     Edu & 0.728 & -0.217 & -0.147 & -0.157 & 0.029 & 0.042 & -0.128 & -0.169 & -0.055    \\
%     Height    & -0.180 & 0.085 &0.052 &0.070 &-0.087 &-0.038 &0.015 &0.071 &0.001\\
%     Weight    &-0.120 &0.055 &0.034 &0.044 &-0.054 &-0.021 &0.010 &0.047 &0.001\\
%     Health    & -0.099 &0.057 &0.034 &0.052 &-0.079&-0.045 &0.011 &0.039 &0.002 \\
%     Irres     &0.081 & -0.065 &-0.037 &-0.063&0.166  & 0.032&-0.007& -0.036 &0.001\\ 
%     Disc      &-0.129 &-0.023 &-0.007 &-0.054& 0.130 & 0.193 &-0.010 &0.049 &-0.012 \\
%     Order      & -0.171 &0.038& 0.028 &0.034 &0.012 &-0.054 &0.066 &-0.010 &0.039      \\
%     Clumsy   & -0.060 &0.038& 0.022&0.022& -0.035 &0.036 &-0.034& 0.076 &-0.028\\
%     Detail  &-0.137 &0.030&0.022& 0.028 &0.017 &-0.054& 0.058& -0.013& 0.035 \\
%     \bottomrule
%     \end{tabular}
% \end{table}

\begin{table}[t]
    \centering
    \caption{Affinity matrix estimated using Algorithm \ref{alg:dis_irot} on the marriage data. ``H'' and ``W'' stand for Husband and Wife respectively, ``Edu'' stands for Education, ``Irres'' stands for Irresponsibility, and ``Disc'' stands for Disciplined.}
    \label{tab:affinity}
    \begin{tabular}{crrrrrrrrr}
    \toprule
    {H\textbackslash}W & {Edu} & {Height} & {Weight} & {Health}  & {Irres} & {Disc} & {Order} & {Clumsy} & {Detail} \\ 
    \midrule
    Edu & 0.065 & -0.083 & -0.052 & -0.048 & 0.015 & -0.013 & -0.043 & -0.063 & -0.040  \\
    Height    & -0.056 & -0.461 & -0.280 & -0.239 &-0.054 & 0.182 &-0.232 &-0.338 &-0.247\\
    Weight    &-0.037 &-0.301 &-0.182 &-0.156 &-0.037 &0.122 &-0.151 &-0.219 &-0.161\\
    Health    & -0.035 &-0.018 &-0.009 &-0.014 &0.050&-0.125 &-0.006 &-0.033 &-0.009 \\
    Irres     &-0.017 & -0.371 &-0.226 &-0.188 & -0.055  & 0.215 & -0.194 & -0.253 & -0.202\\ 
    Disc      &-0.002 & 0.097 & 0.059 & 0.055 & 0.022 & 0.002 & 0.050 & 0.079 & 0.052 \\
    Order      & -0.057 & -0.309 & -0.187 & -0.162 & -0.034 & 0.097 & -0.150 & -0.235 & -0.163      \\
    Clumsy   & -0.020 & -0.143 & -0.086 & -0.075 & 0.013 & 0.008 & -0.075 & -0.107 & -0.079\\
    Detail  &-0.049 & -0.407 & -0.247 & -0.210  & -0.070 & 0.204 & -0.202 & -0.295 & -0.216 \\
    \bottomrule
    \end{tabular}
\end{table}

The affinity matrix reveals several important implicit phenomena about marriage market. The education factor gives the most significant complementarity among all the other features. The trade-off between different features is revealed by the off-diagonal coefficients which are significantly different from zero. Since men and women have different preferences for these attributes, the affinity matrix is not symmetric.

%We have shown that our discrete algorithm can effectively recover the affinity matrix from the DHS marriage data set in the previous section. Once the affinity matrix is learned, we can use it to predict matching and make recommendations for new data set. 

We also compare the performance of Algorithm \ref{alg:dis_irot} with other pair matching algorithms, including the state-of-art RIOT model \citep{li2019learning}, SVD model \citep{koren2009matrix}, item-based collaborative filtering model (itemKNN) \citep{cremonesi2010performance}, probabilistic matrix factorization model (PMF) \citep{mnih2008probabilistic}, and factorization machine model (FM) \citep{rendle2012factorization}. These models have been evaluated on DHS data set in \citep{li2019learning} so we just follow the same experimental protocol with 5-fold cross-validation. Note that in \citep{liu2019learning} the authors come up with a neural network model and achieves the same performance as RIOT on DHS dataset so we just omit the repetitive evaluation here. We train all models on training data set and measure the errors on validation data set by computing the root mean square error (RMSE) and the mean absolute error (MAE). We also run the experiment for 10 times and record the running time for all models. The results are given in Table \ref{tab:marriage-comparison}. As Table \ref{tab:marriage-comparison} shows, our method (Algorithm \ref{alg:dis_irot}) significantly outperforms all these existing methods in both accuracy and efficiency. Specifically, the RMSE of Algorithm \ref{alg:dis_irot} is \textbf{$2.46\times 10^{-11}$} and MAE is \textbf{$1.90 \times 10^{-11}$}, so they are rounded as $0.0$ in Table \ref{tab:marriage-comparison}. 
%With comparable accuracy, our method is much more efficient and can be up to several orders of magnitude faster than the compared methods, and has much fewer parameters to tune.
%
% \begin{table}[t]
% \centering
% \caption{Average error of 5-fold cross-validation in RMSE and MAE ($\times 10^{-4}$) and average running time (seconds) for all matching algorithms}
% \label{tab:marriage-comparison}
% \begin{tabular}{rrrrrrrr}
% \toprule
%  & Random & PMF & SVD & itemKNN & RIOT & \bf{IROT} & FM \\
% \midrule
% RMSE & 52.8 & 109.4 & 108.1 & 2.3  & \bf{2.2} & \bf{2.2} & 9.9 \\
% MAE & 35.6 & 61.4 & 62.2 & 1.5  & \bf{1.4} & \bf{1.4} & 7.6 \\
% Runtime & N/A  & 0.78 & 0.79 & 1.01 & 8.51 & \bf{0.06} & 49.07  \\
% \bottomrule
% \end{tabular}
% \end{table}

% \begin{table}[t]
% \centering
% \caption{Average error of 5-fold cross-validation in RMSE and MAE ($\times 10^{-4}$) and average running time (seconds) for all matching algorithms}
% \label{tab:marriage-comparison}
% \begin{tabular}{rrrrrrrr}
% \toprule
%  & Random & PMF & SVD & itemKNN  & FM & RIOT & \bf{IROT}\\
% \midrule
% RMSE & 45.1 & 114.5 & 109.8 & 1.8  & 7.4 & 1.8 & \bf{0} \\
% MAE & 31.4 & 64.9 & 62.4 & 1.3  & 5.6 & 1.3 & \bf{0.81} \\
% Runtime & 1.24  & 0.67 & 0.73 & 0.97 & 48.51 & 7.32 & \bf{0.04} \\
% \bottomrule
% \end{tabular}
% \end{table}

\begin{table}[t]
\centering
\caption{Average error of 5-fold cross-validation in RMSE and MAE ($\times 10^{-4}$) and average running time (in seconds) for all compared matching algorithms.}
\label{tab:marriage-comparison}
\begin{tabular}{crrr}
\toprule
\textbf{Method}  & \textbf{RMSE}	& \textbf{MAE}	& \textbf{Runtime} \\
\midrule
Random & 45.1	& 31.4	& 1.24 \\
PMF \citep{mnih2008probabilistic} & 114.5	& 64.9	& 0.67 \\
SVD \citep{koren2009matrix} & 109.8	& 62.4	& 0.73 \\
itemKNN \citep{cremonesi2010performance} & 1.8	& 1.3	& 0.97 \\
FM \citep{rendle2012factorization} & 7.4	& 5.6	& 48.51 \\
RIOT \citep{li2019learning} & 1.8	& 1.3	& 7.32 \\
\textbf{Algorithm~\ref{alg:dis_irot}} & \textbf{0.0}	& \textbf{0.0} 	& \textbf{0.04} \\
\bottomrule
\end{tabular}
\end{table}

\subsection{Continuous Inverse OT on Synthetic Data}
\begin{figure}[t!]
\centering
{\scriptsize 
\begin{tabular}{cc}
\includegraphics[width=.36\textwidth]{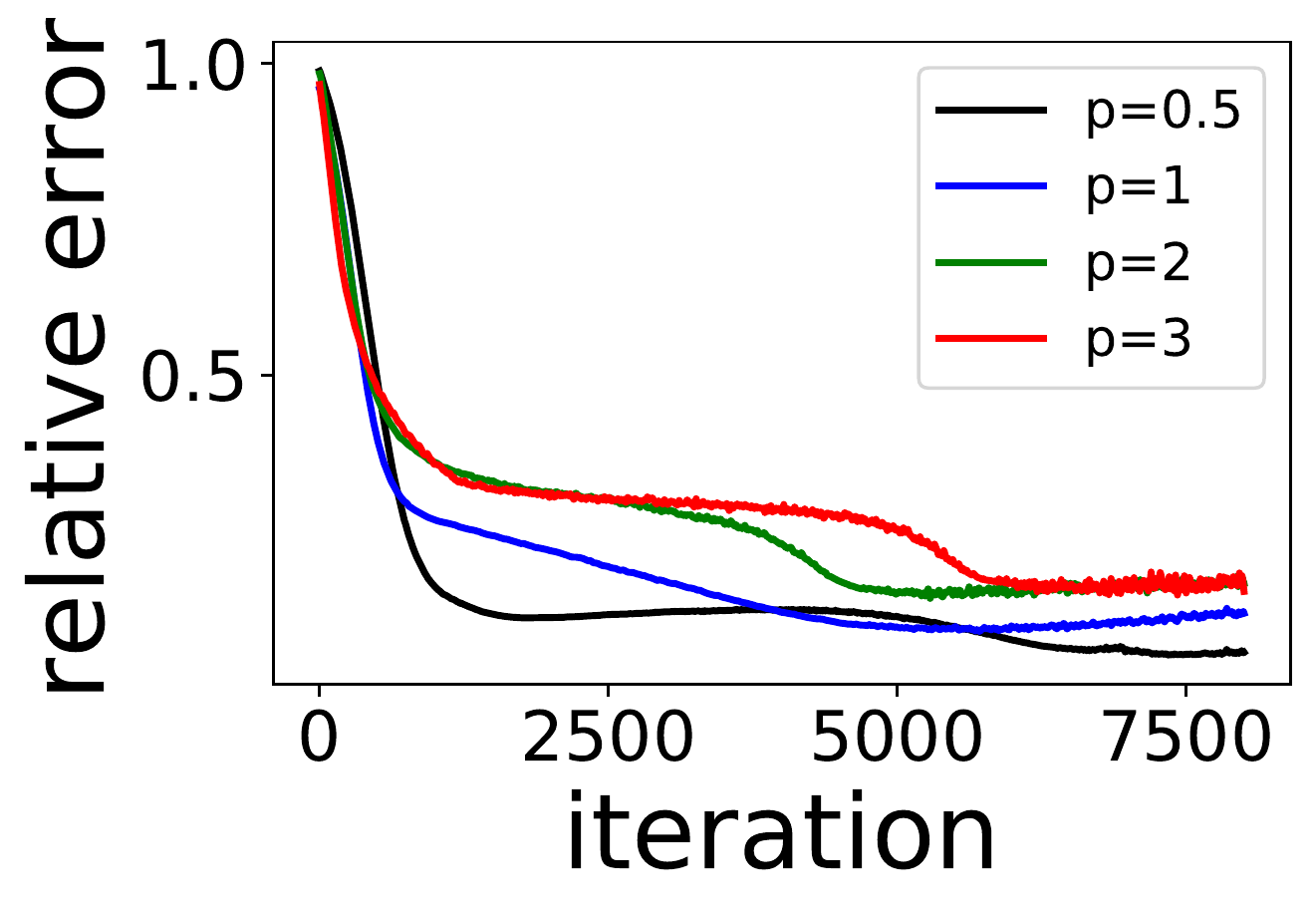} &
\includegraphics[width=.36\textwidth]{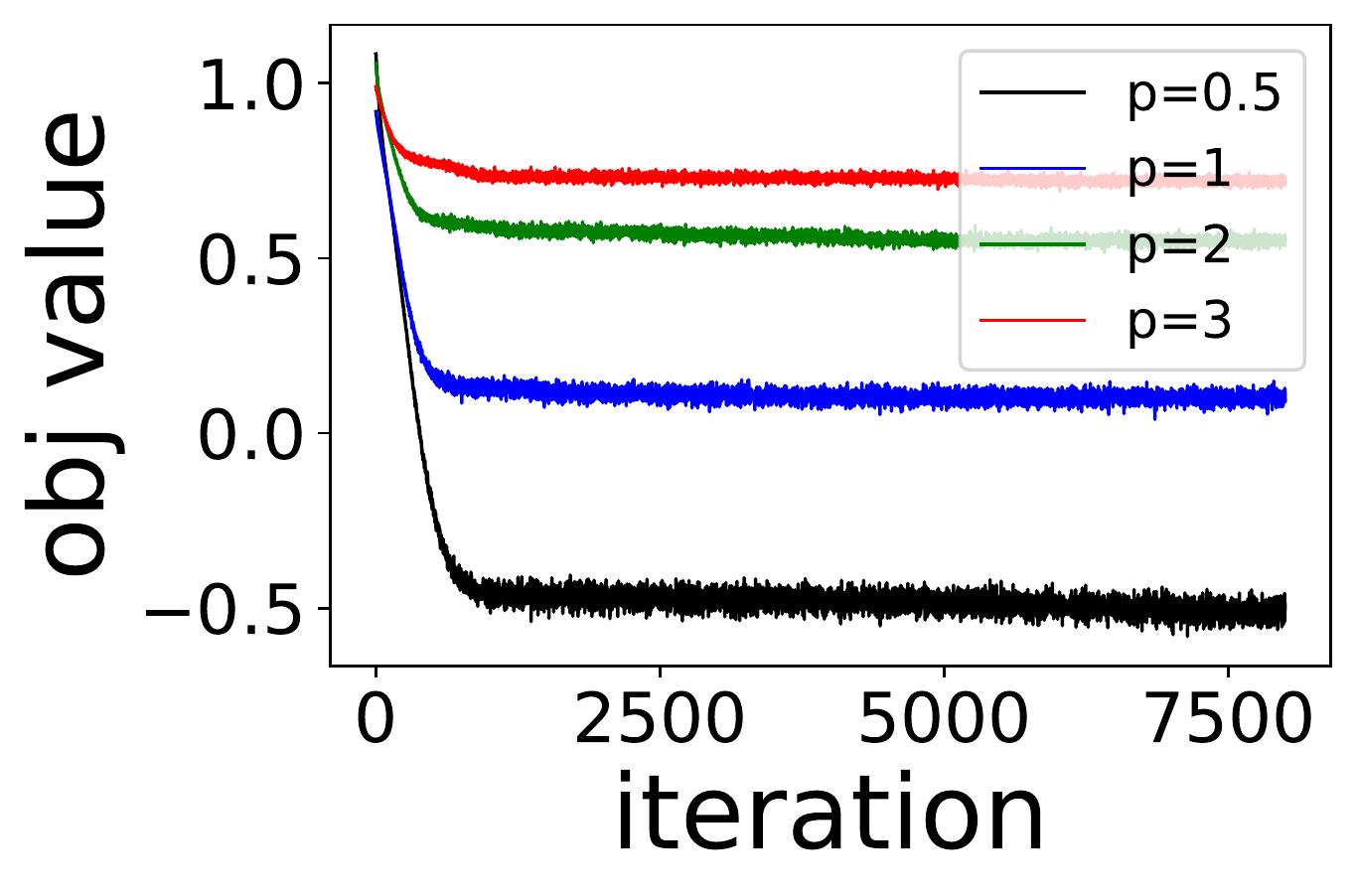} \\
\end{tabular}
}
\caption{The relative error (left) and objective function value (right) versus iteration number by  Algorithm \ref{alg:cts_irot} on the continuous inverse OT with synthetic data and varying $p$ in the symmetric case.}
\label{fig:cts_c_plot}
% \vspace{-2em}
\end{figure}

\begin{figure}[t!]
\centering
\begin{subfigure}[b]{.48\columnwidth}
\includegraphics[width=\linewidth]{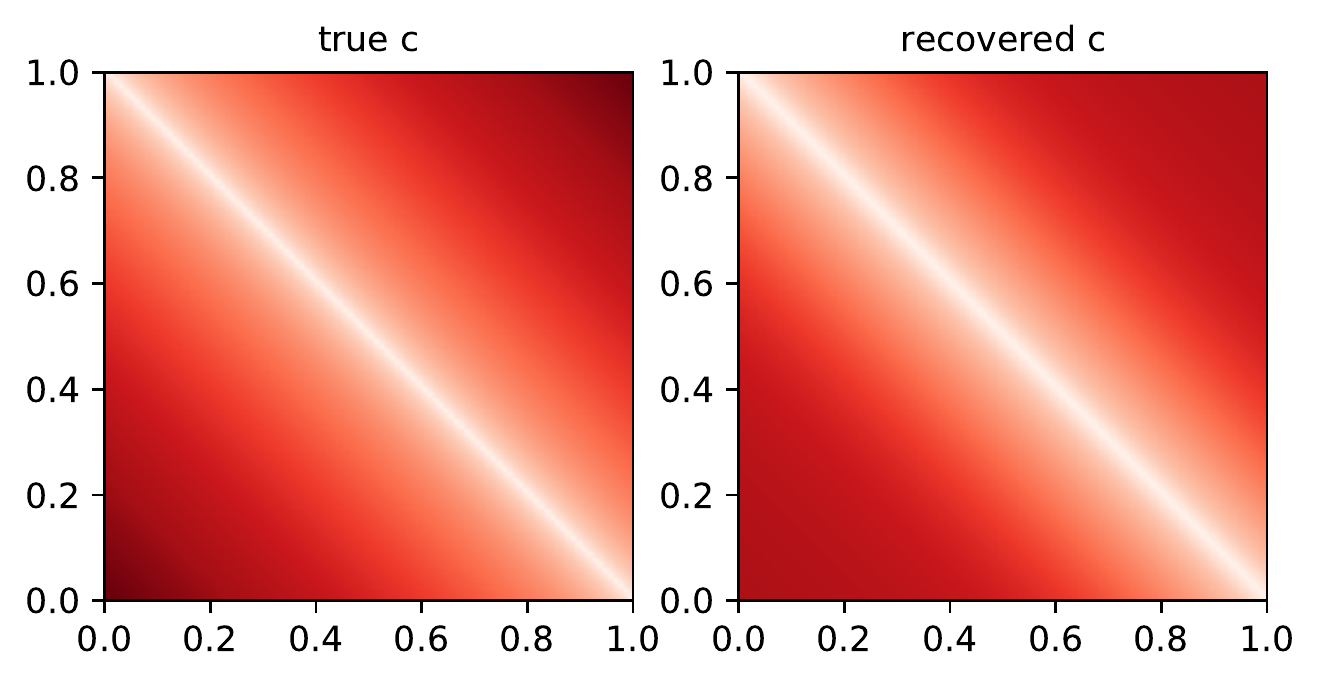} 
\caption{$|x-y|^{1/2}$}
\label{subfig:cts_c_rec_L0}
\end{subfigure}
\begin{subfigure}[b]{.48\columnwidth}
\includegraphics[width=\linewidth]{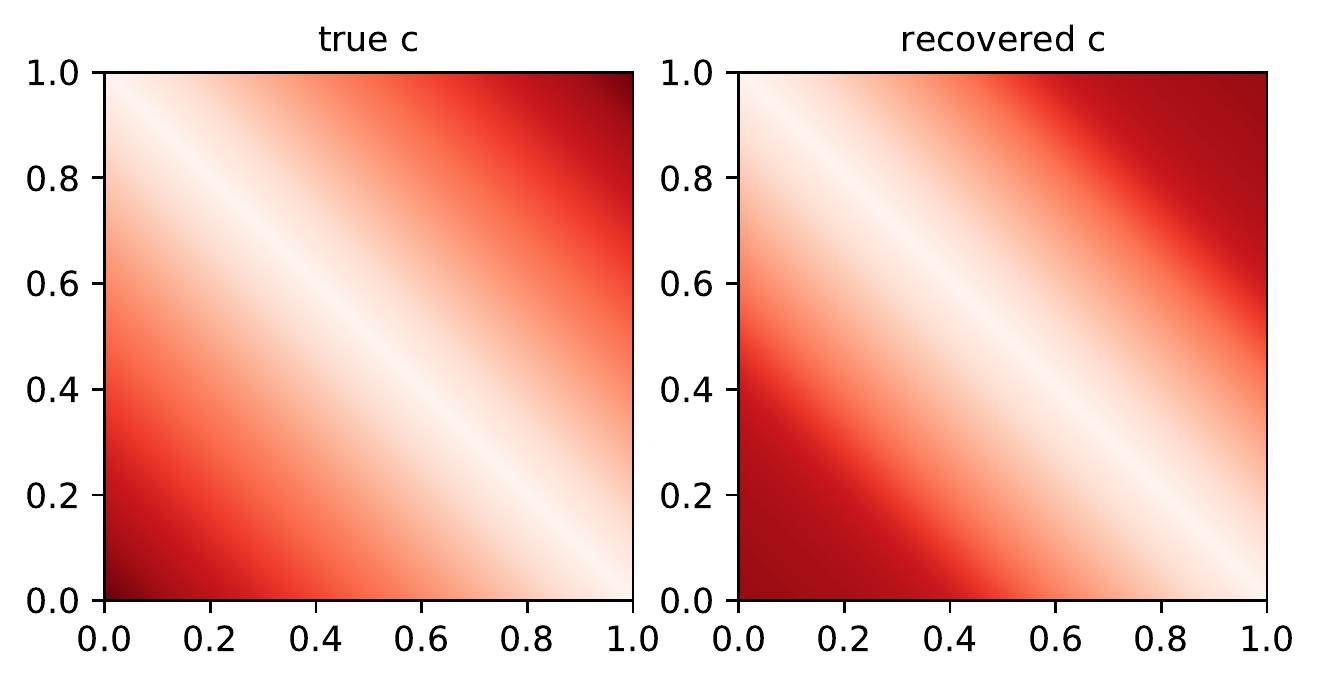} 
\caption{$|x-y|^{1}$}
\label{subfig:cts_c_rec_L1}
\end{subfigure}
\begin{subfigure}[b]{.48\columnwidth}
\includegraphics[width=\linewidth]{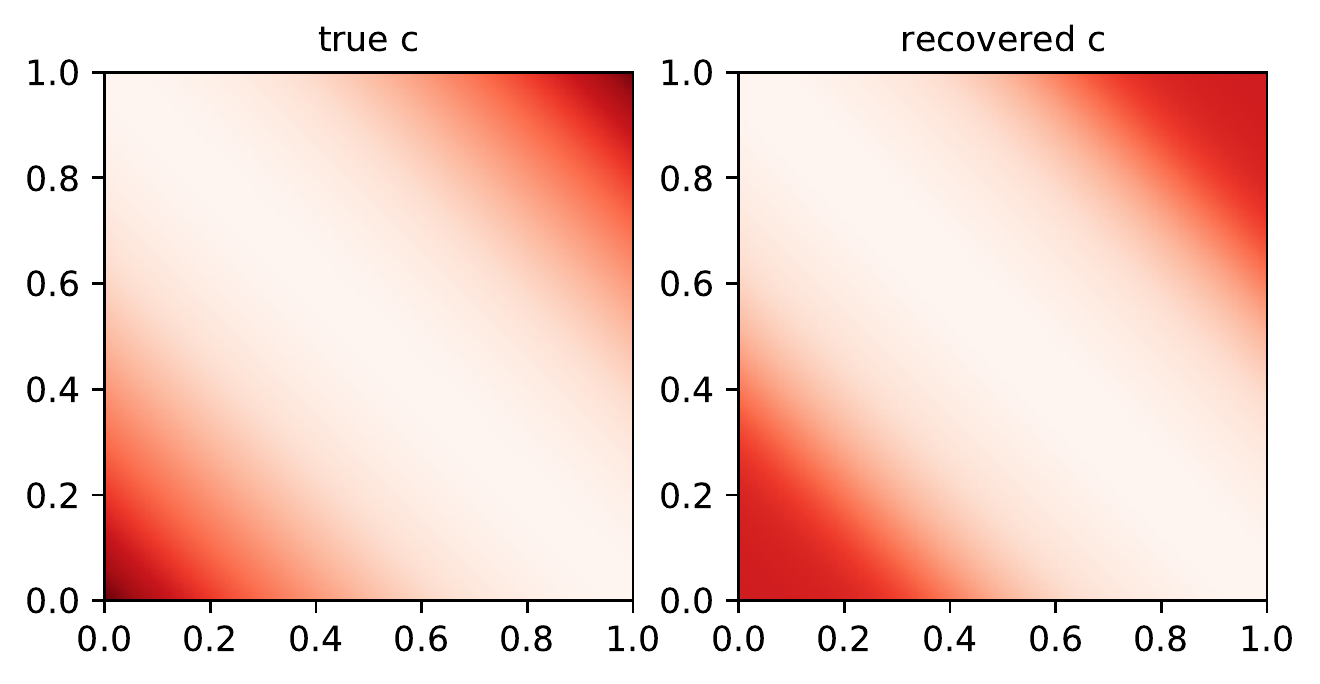} 
\caption{$|x-y|^{2}$}
\label{subfig:cts_c_rec_L2}
\end{subfigure}
\begin{subfigure}[b]{.48\columnwidth}
\includegraphics[width=\linewidth]{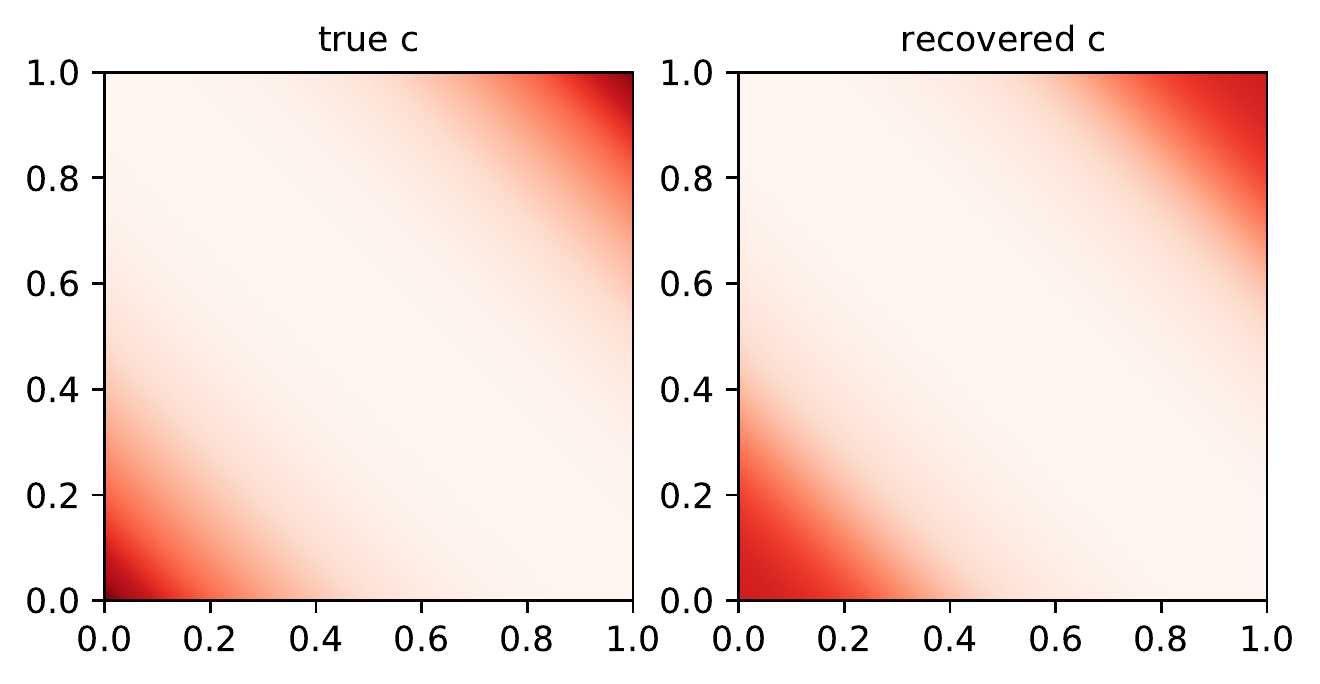} 
\caption{$|x-y|^{3}$}
\label{subfig:cts_c_rec_L3}
\end{subfigure}
\begin{subfigure}[b]{.48\columnwidth}
\includegraphics[width=\linewidth]{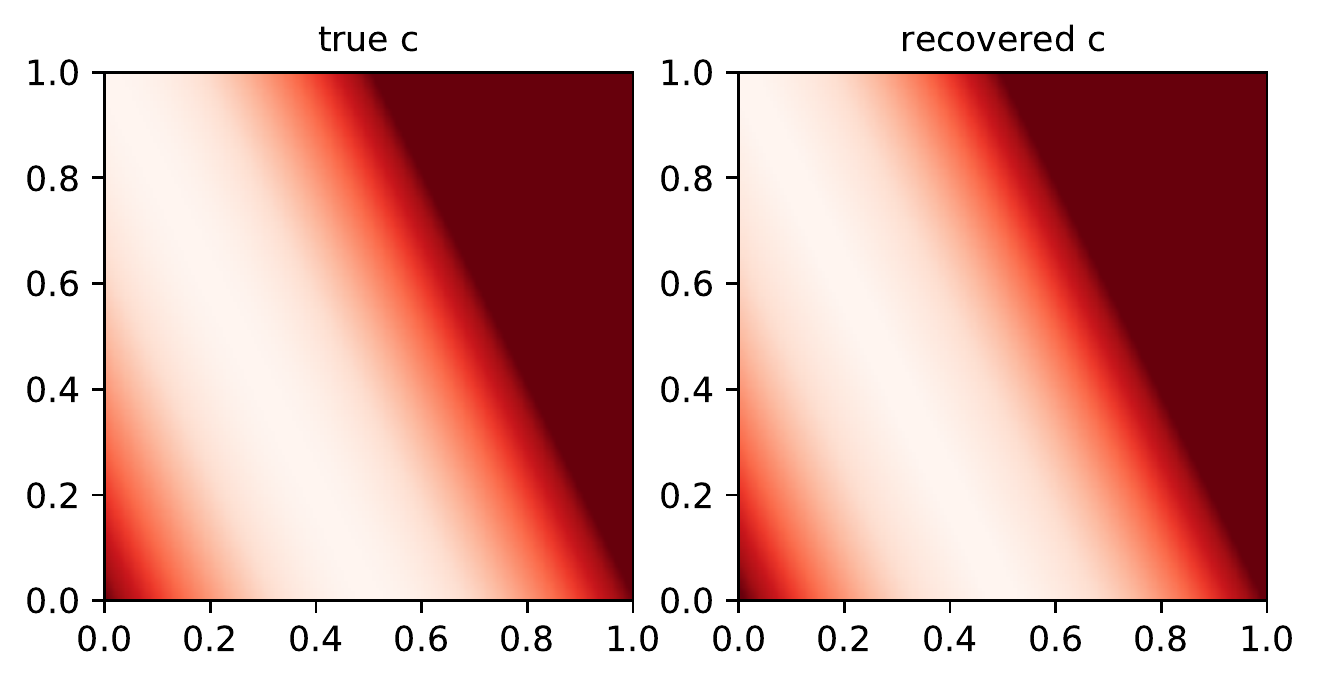} 
\caption{$|x-2y|^{2}$}
\label{subfig:cts_c_rec_L4}
\end{subfigure}
\begin{subfigure}[b]{.48\columnwidth}
\includegraphics[width=\linewidth]{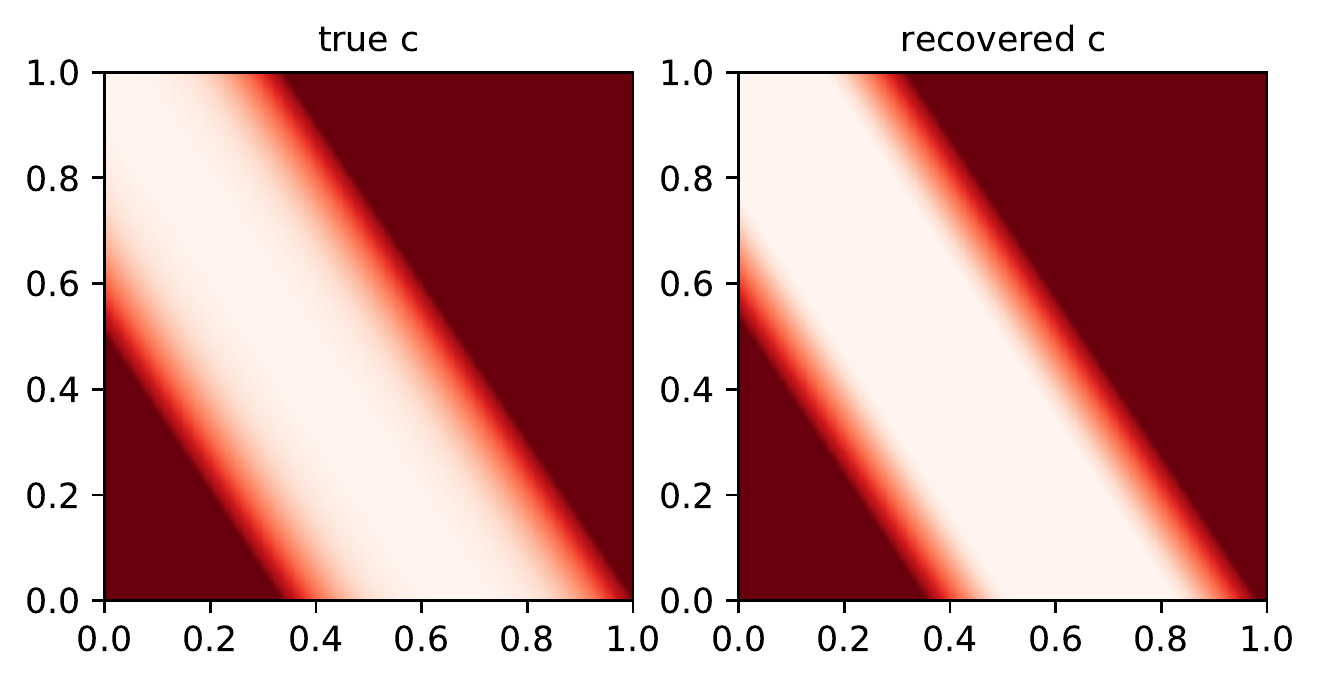} 
\caption{$|2x-3y|^{3}$}
\label{subfig:cts_c_rec_L5}
\end{subfigure}
%
% \begin{subfigure}[b]{.48\columnwidth}
% \includegraphics[width=\linewidth]{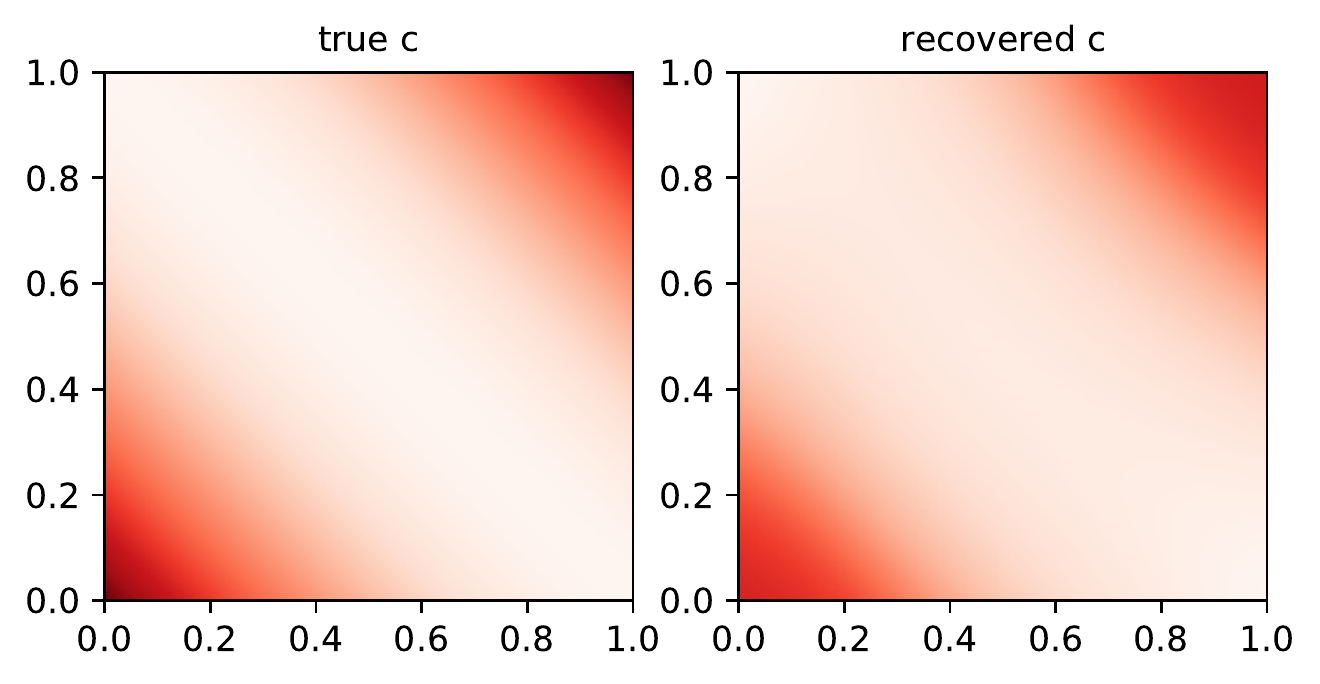} 
% \caption{$p=3$, asymmetric case}
% \label{subfig:cts_c_rec_L5}
% \end{subfigure}
% %
% \begin{subfigure}[b]{.48\columnwidth}
% \includegraphics[width=\linewidth]{} 
% \caption{$p=3$, asymmetric case}
% \label{subfig:cts_c_rec_L5}
% \end{subfigure}
% %
% \begin{subfigure}[b]{.48\columnwidth}
% \includegraphics[width=\linewidth]{}
% \caption{$p=3$, asymmetric case}
% \label{subfig:cts_c_rec_L5}
% \end{subfigure}
%
% \vspace{-6pt}
\caption{True cost function and cost function recovered by Algorithm \ref{alg:cts_irot} assuming knowledge of the linear proportion between $x$ and $y$ for continuous inverse OT on synthetic data}
\label{fig:cts_c_rec_sym}
% \vspace{-1em}
\end{figure}

We now apply Algorithm \ref{alg:cts_irot} to recover the cost function $c$ in continuous inverse OT.
The main difference from the discrete inverse OT is that, instead of learning a cost matrix, we aim at learning a parameterized function $c: X \times Y \to \mathbb{R}$ where $X \subset \mathbb{R}^{d_1}$ and $Y \subset \mathbb{R}^{d_2}$. Here $d_1$ and $d_2$ can be $3$ or even higher, which causes the issue known as the curse of dimensionality if we discretize $X$ and $Y$. 
In this case, we parameterize $c$ as a deep neural network, with input layer size $d_1+d_2$ and output layer size $1$, to overcome the issue of discretization in high-dimensional spaces. 
For simplicity, we consider the case where $d_1 = d_2$, but the method can be applied to general cases easily.

To justify the accuracy, we first consider the case with $d_1 = d_2 = 1$ so that we can discretize the problem and compute the ground truth optimal transport plan $\pidata$ accurately for sampling and evaluation purposes. 
We create a data set $\Dcal_{\pidata}$ by drawing $N=5,000$ samples from $\pidata$ and use them as the sample pairing data for cost learning in each iteration of Algorithm \ref{alg:cts_irot}.
We parameterize $c$ as a 5-layer (including one input layer, 3 hidden layers, and one output layer) deep neural network with 20 neurons per hidden layer, with tanh as the activation functions for the hidden layers and ReLU as the output layer.

In the first test, we set the cost to $c=|x-y|^p$ where $p=0.5,1,2,3$. Here we aim at learning the correct exponent function $(\cdot)^p$ and hence use $|x-y|$ instead of $(x,y)$ as the input (input layer dimension is 1 here). We use PyTorch \citep{paszke2017automatic} and the builtin ADAM optimizer \citep{kingma2014adam} with learning rate $10^{-4}$ for training the network $c$, where the parameters are initialized using Xavier initialization \citep{xvaier}.
%
%
% \ye{Add the details of your optimizer setting (e.g., TensorFlow with ADAM optimizer and its algorithm parameters, such as learning rate, $\beta$'s) and initialization (Xavier?).}
%
% Figure \ref{fig:cts_c_rec} shows the cost function $c$ recovered by Algorithm \ref{alg:cts_irot}. 
%
\begin{figure}[t!]
\centering
\begin{subfigure}[b]{.48\columnwidth}
\includegraphics[width=\linewidth]{img/cts/xy_test_p=2_2.pdf} 
\caption{$|x-y|^{2}$}
\label{subfig:cts_c_rec_L6}
\end{subfigure}
\begin{subfigure}[b]{.48\columnwidth}
\includegraphics[width=\linewidth]{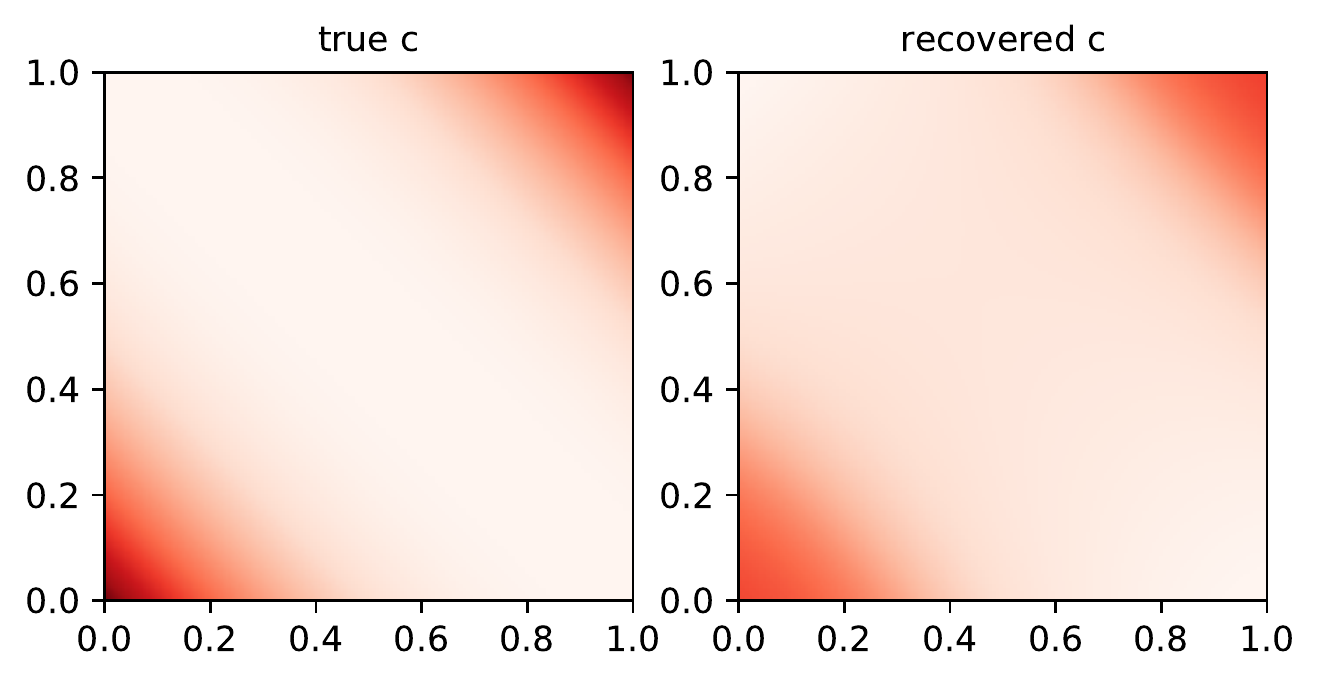} 
\caption{$|x-y|^{3}$}
\label{subfig:cts_c_rec_L7}
\end{subfigure}
%
% \begin{subfigure}[b]{.48\columnwidth}
% \includegraphics[width=\linewidth]{img/cts/x2y_test_p=2_1.pdf} 
% \caption{$|x-2y|^{2}$}
% \label{subfig:cts_c_rec_L8}
% \end{subfigure}
% %
% \begin{subfigure}[b]{.48\columnwidth}
% \includegraphics[width=\linewidth]{img/cts/x2y_pi_test_p=2_1.pdf}
% \caption{$|x-2y|^{2}$, relative error of $\pi: 0.048$ }
% \label{subfig:cts_c_rec_L9}
% \end{subfigure}
\begin{subfigure}[b]{.48\columnwidth}
\includegraphics[width=\linewidth]{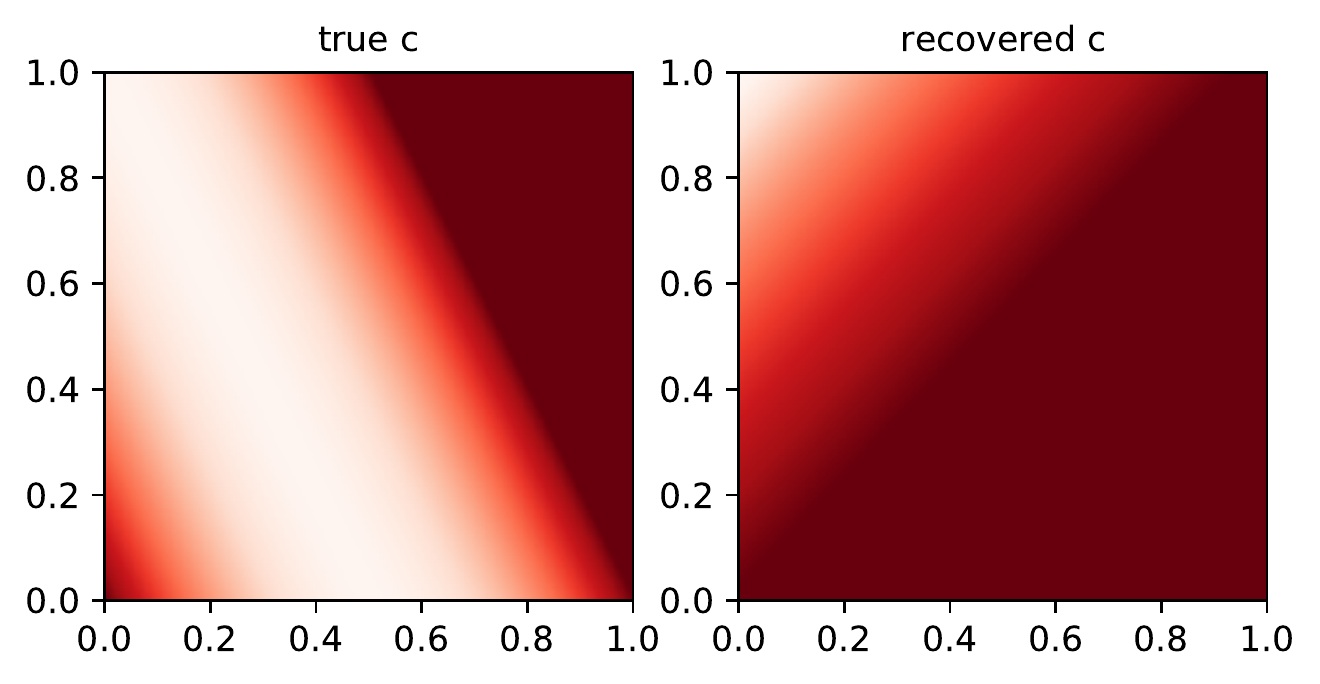} 
\caption{$|x-2y|^{2}$}
\label{subfig:cts_c_rec_L12}
\end{subfigure}
\begin{subfigure}[b]{.48\columnwidth}
\includegraphics[width=\linewidth]{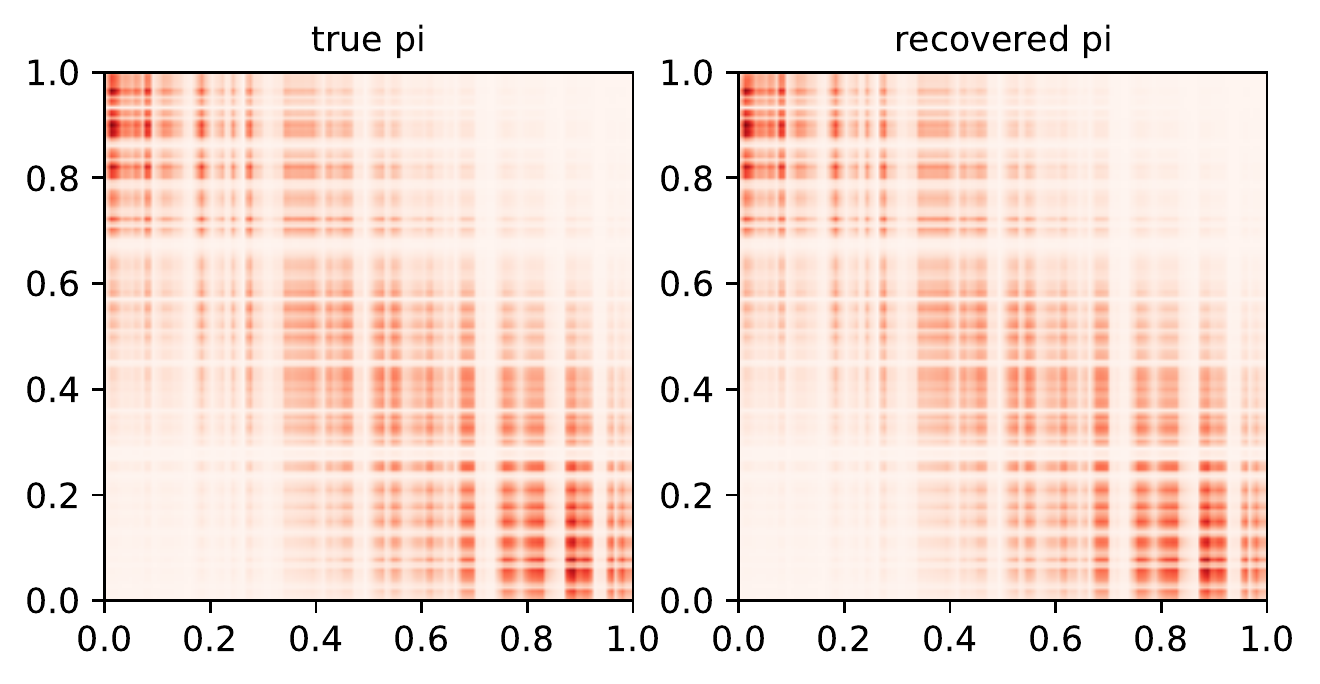}
\caption{$|x-2y|^{2}$, relative error of $\pi: 0.026$}
\label{subfig:cts_c_rec_L13}
\end{subfigure}
\caption{True cost function and cost function recovered by Algorithm \ref{alg:cts_irot} without knowledge of the proportion between $x$ and $y$ for continuous inverse OT on synthetic data. Notice that (d) shows the optimal transport plan induced by the true cost $c$ (left) and the one by the recovered cost (right) are very similar (with relative error 0.026) despite that the recovered cost differs significantly from the true cost shown in (c). This demonstrates the generic solution non-uniqueness issue of inverse OT if prior knowledge on $c$ is insufficient.}
\label{fig:cts_c_rec_sym_usual_input}
% \vspace{-1em}
\end{figure}

In Figure \ref{fig:cts_c_plot}, we plot the progress of the relative error $\|c - c^*\|_F / \|c^*\|_F$ and objective function value versus iteration number using Algorithm \ref{alg:cts_irot}. These two plots indicate that both errors of the recovered $c$ and the objective function values obtained Algorithm \ref{alg:cts_irot} decay stably.
The learned cost functions (image in the right panel) are shown in Figure \ref{fig:cts_c_rec_sym} (a)--(d) for $p=0.5,1,2,3$ respectively, from which we can see that they match the ground truth cost functions (image in the left panel) closely.

We also consider a more challenging problem of recovering asymmetric cost functions $c^*(x,y) = |x-2y|^2$ and $c^*(x,y) = |2x-3y|^3$. We set the input as $\xi=|x-2y|$ and $\xi=|2-3y|$ for the cost function $c$ and again try to recover the unknown exponent $(\cdot)^p$.
The network structure and activation functions are set identically to the symmetric case.
The ground truth cost and learned cost functions are shown in Figure \ref{subfig:cts_c_rec_L4} and \ref{subfig:cts_c_rec_L5}, which demonstrate that Algorithm \ref{alg:cts_irot} can also faithfully learn the exponents in the asymmetric case. 

Now we conduct a test of Algorithm \ref{alg:cts_irot} without any prior information about the cost function. We set the ground truth cost function $c(x,y)$ to be $|x-y|^2$ and $|x-y|^3$, and use the same generic neural network $c:\mathbb{R}\times \mathbb{R} \to \mathbb{R}$ (3 hidden layer, 20 neurons per layer, and tanh and ReLU as the hidden layer activation and output activation respectively). The recovered cost functions are plotted in Figure \ref{subfig:cts_c_rec_L6} and \ref{subfig:cts_c_rec_L7}. From \ref{subfig:cts_c_rec_L6} and \ref{subfig:cts_c_rec_L7}, we see that Algorithm \ref{alg:cts_irot} can still recover the correct cost due to the symmetry.

We again test an asymmetric cost function $c(x,y) = |x-2y|^2$. We parameterize $c(x,y) = (x - \alpha y)^p$ where both $\alpha$ and $p$ are unknown. Even with such rich prior information about the cost function $c$, it is difficult to recover the ground truth $c$ faithfully. To see this, we plot the cost function recovered by Algorithm \ref{alg:cts_irot} and compare it with the true one in Figure \ref{subfig:cts_c_rec_L12}. As we can see, the two cost functions are very different. However, when we apply forward OT using these two cost functions, we obtain very similar transport plans with a small relative error 0.026, as shown in Figure \ref{subfig:cts_c_rec_L13}. This demonstrates the genuine difficulty in the inverse problem of OT: there can be a large number of cost functions that yield the same optimal transport plan as the given one, and it is critically important to impose proper restrictions to $c$ in order to recover the true cost function. Although we proved that this issue can be completely resolved with a mild assumption on the symmetry of $c$, it still can be a challenging issue in the most general case when such assumption does not hold.

\begin{figure}[t!]
\centering
\begin{subfigure}{.23\columnwidth}
\includegraphics[width=\linewidth]{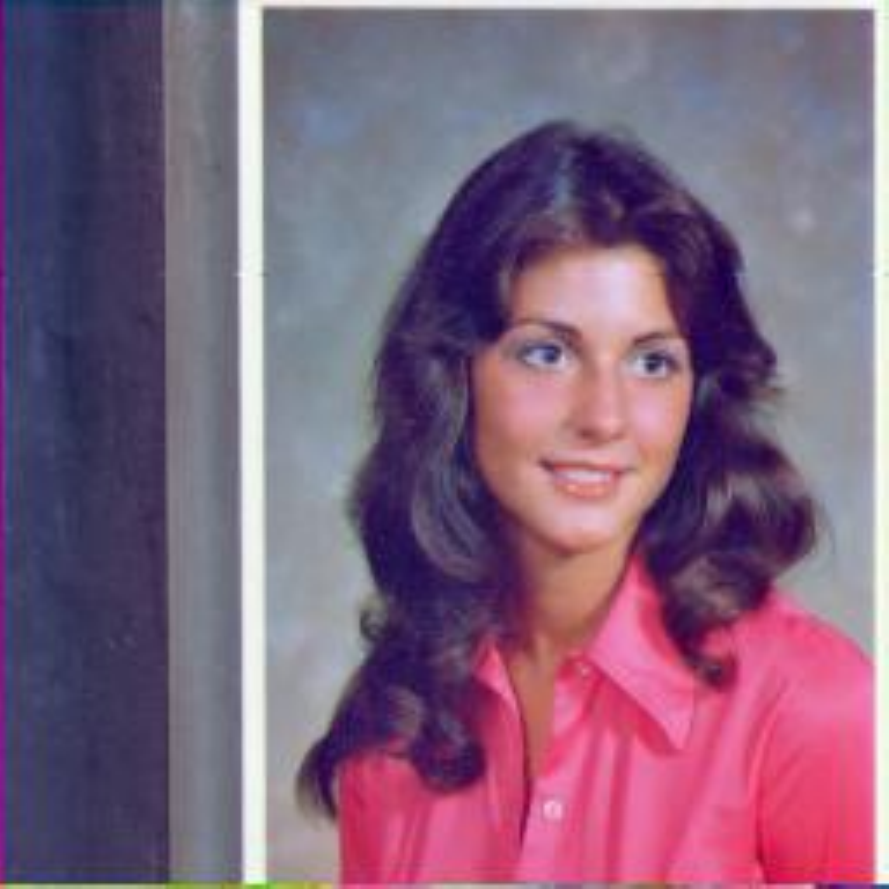} 
\caption{}
\label{subfig:cts_mar1}
\end{subfigure}
\begin{subfigure}{.23\columnwidth}
\includegraphics[width=\linewidth]{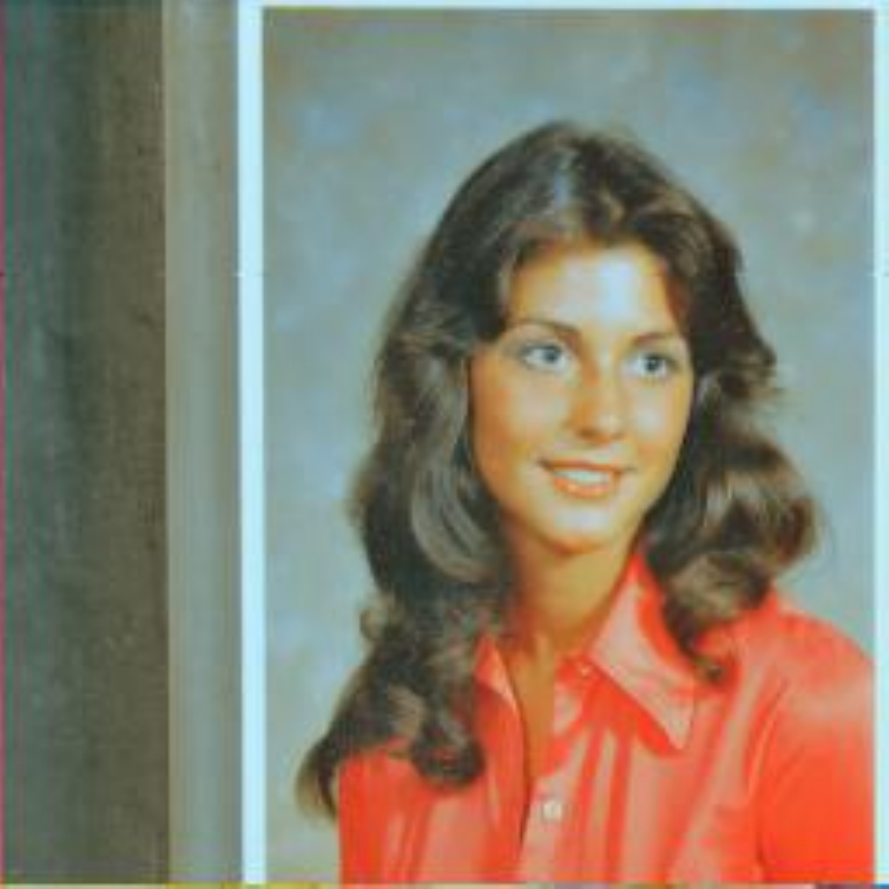} 
\caption{}
\label{subfig:cts_trans}
\end{subfigure}
\begin{subfigure}{.23\columnwidth}
\includegraphics[width=\linewidth]{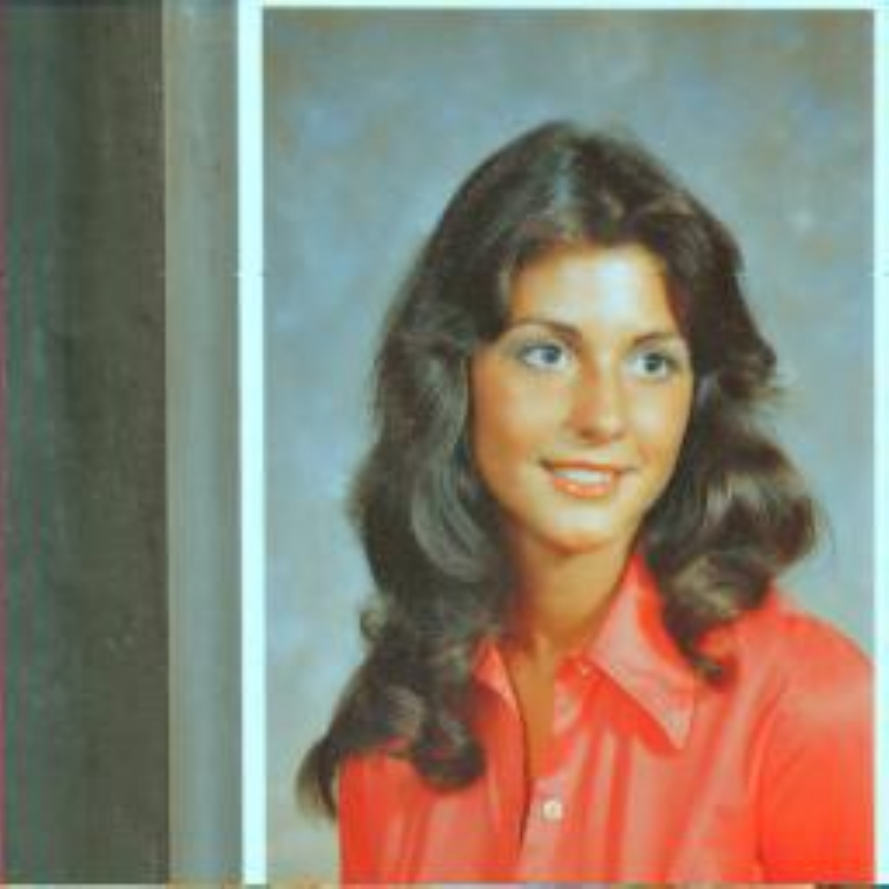} 
\caption{}
\label{subfig:cts_learned_img}
\end{subfigure}
\begin{subfigure}{.23\columnwidth}
\includegraphics[width=\linewidth]{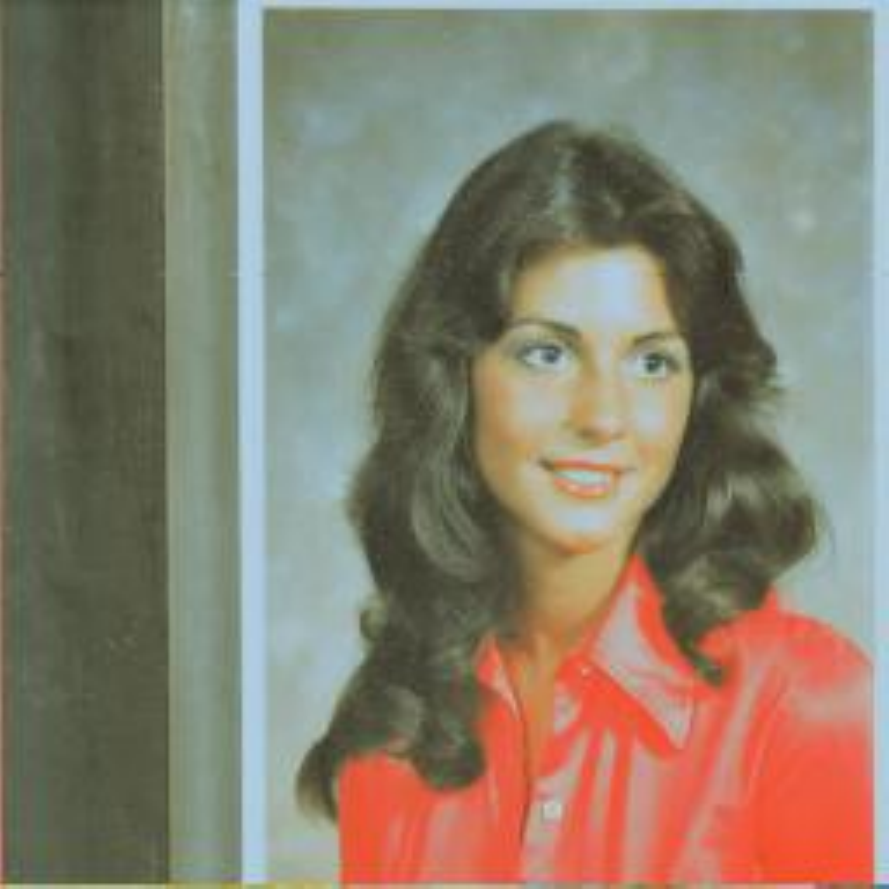} 
\caption{}
\label{subfig:cts_miss_img}
\end{subfigure}
\begin{subfigure}{.23\columnwidth}
\includegraphics[width=\linewidth]{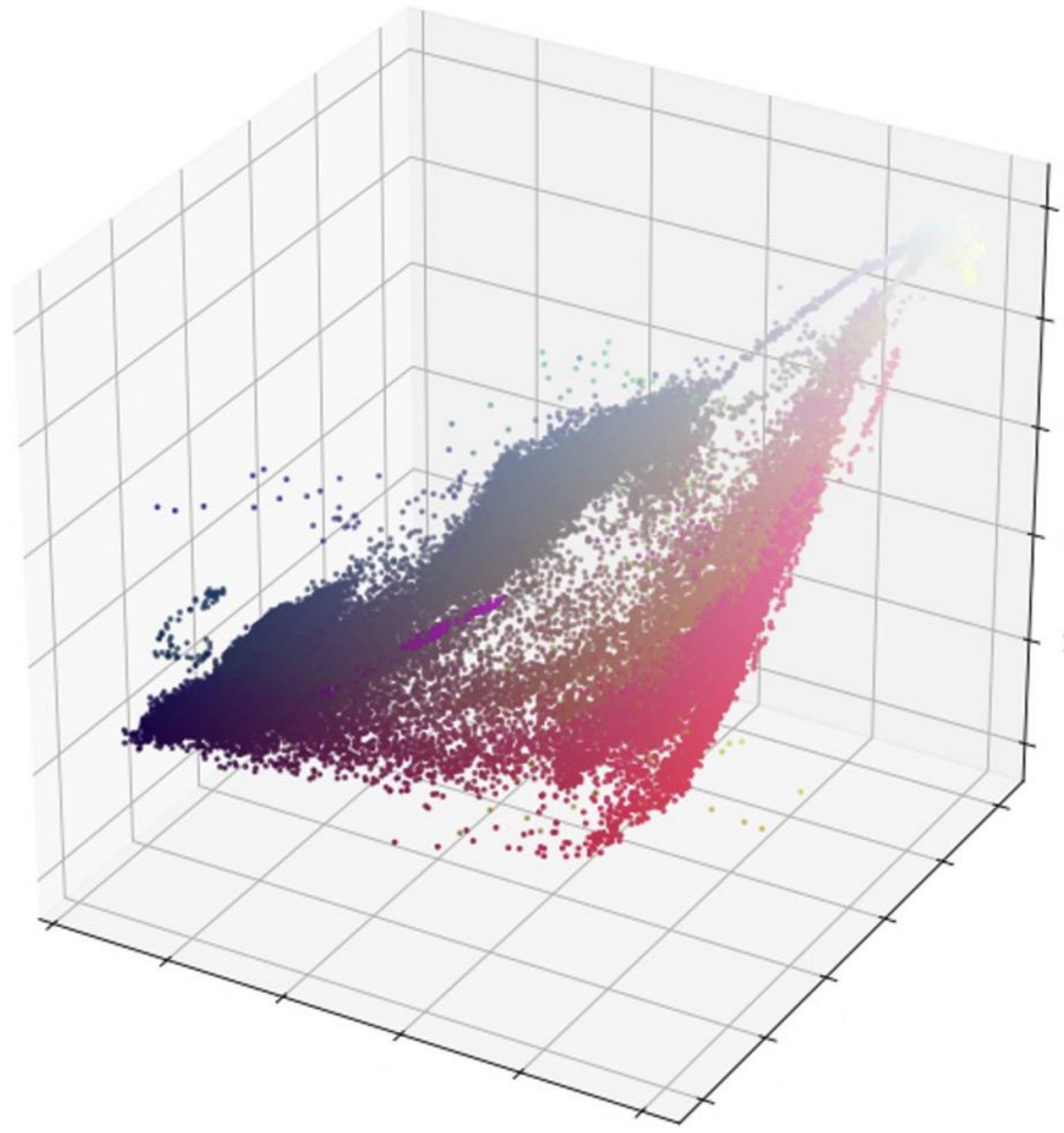} 
\caption{}
\label{subfig:cts_mar1_cl}
\end{subfigure}
\begin{subfigure}{.23\columnwidth}
\includegraphics[width=\linewidth]{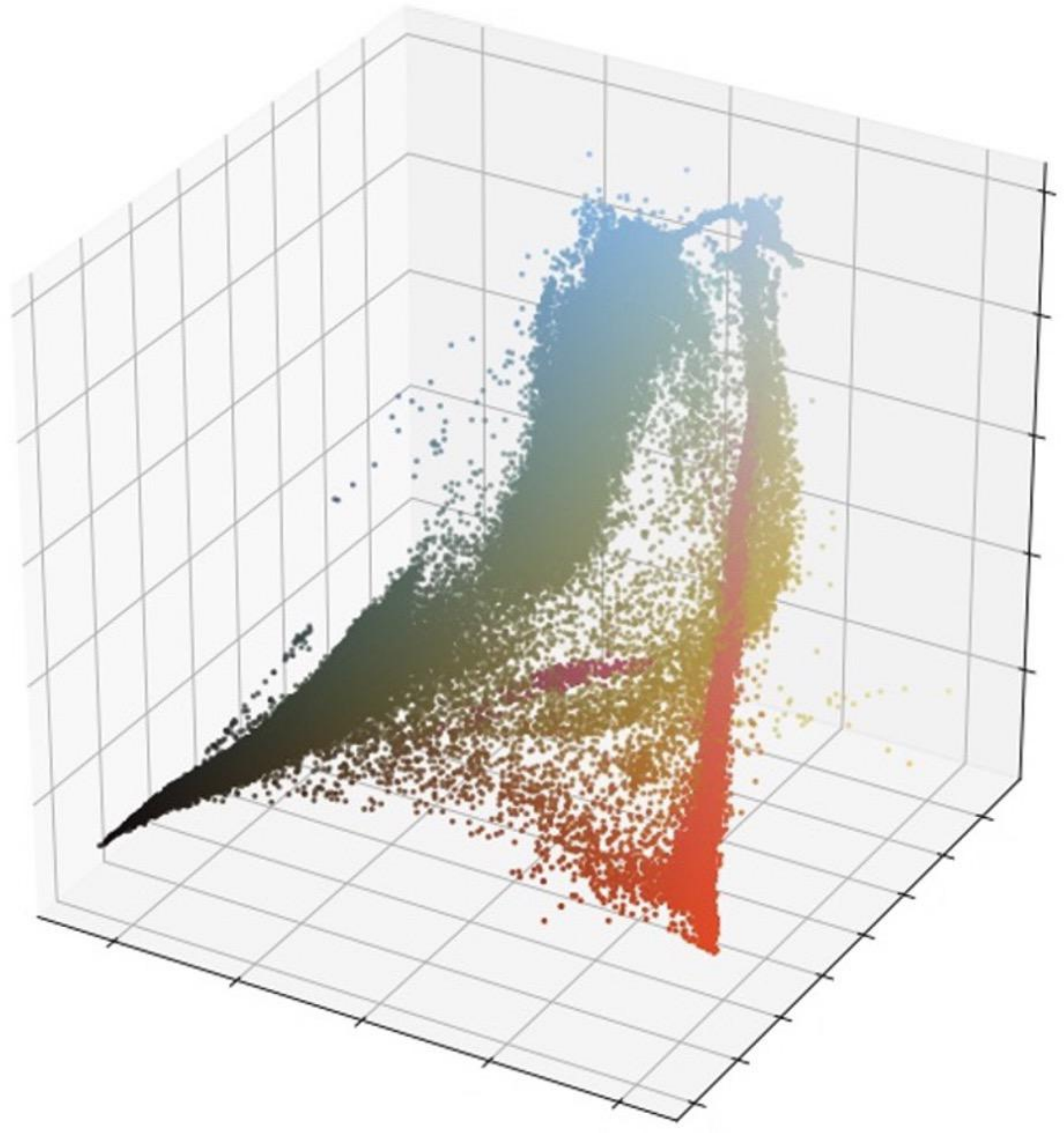} 
\caption{}
\label{subfig:cts_transcl}
\end{subfigure}
\begin{subfigure}{.23\columnwidth}
\includegraphics[width=\linewidth]{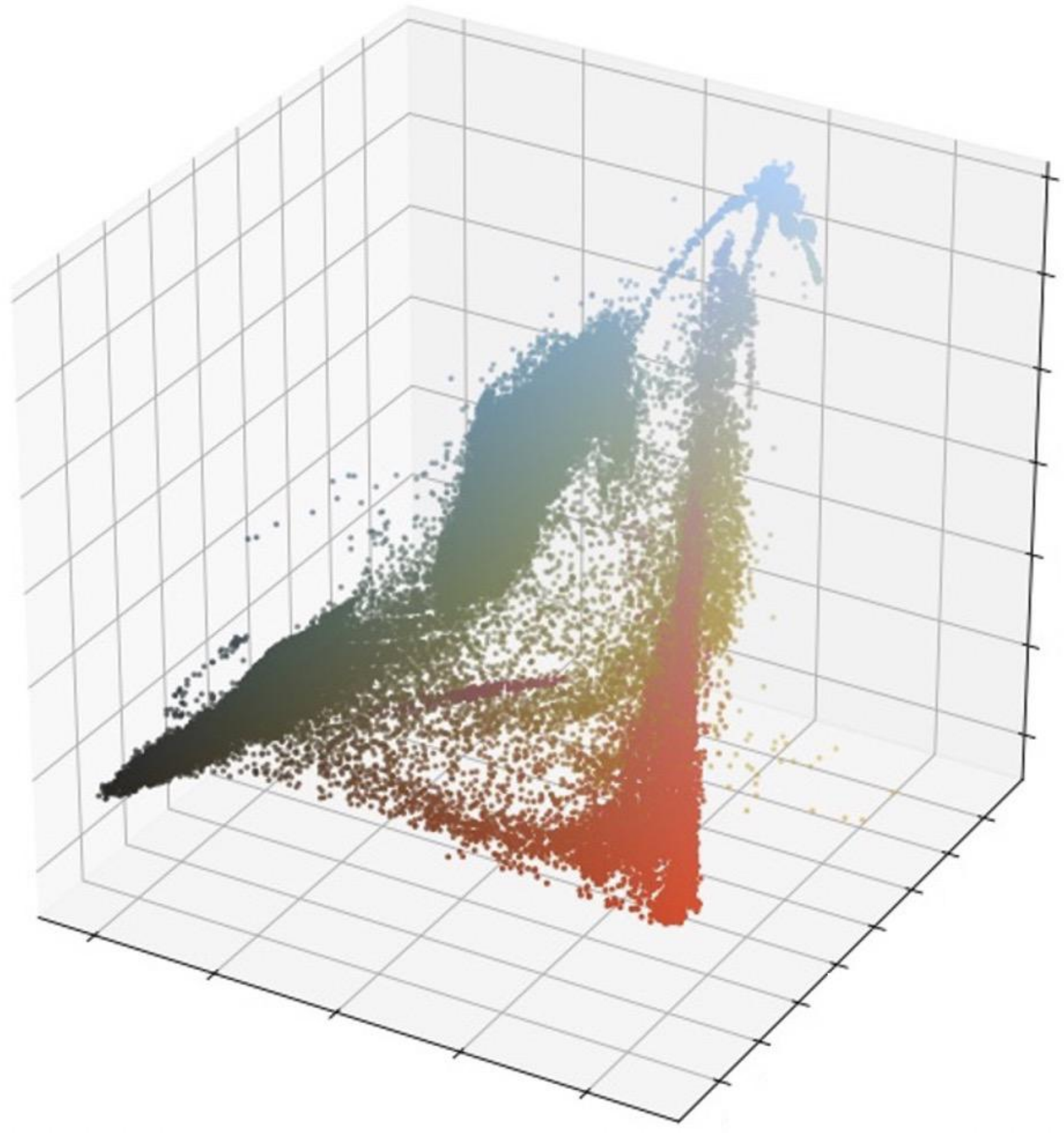}
\caption{}
\label{subfig:cts_learndcl}
\end{subfigure}
\begin{subfigure}{.23\columnwidth}
\includegraphics[width=\linewidth]{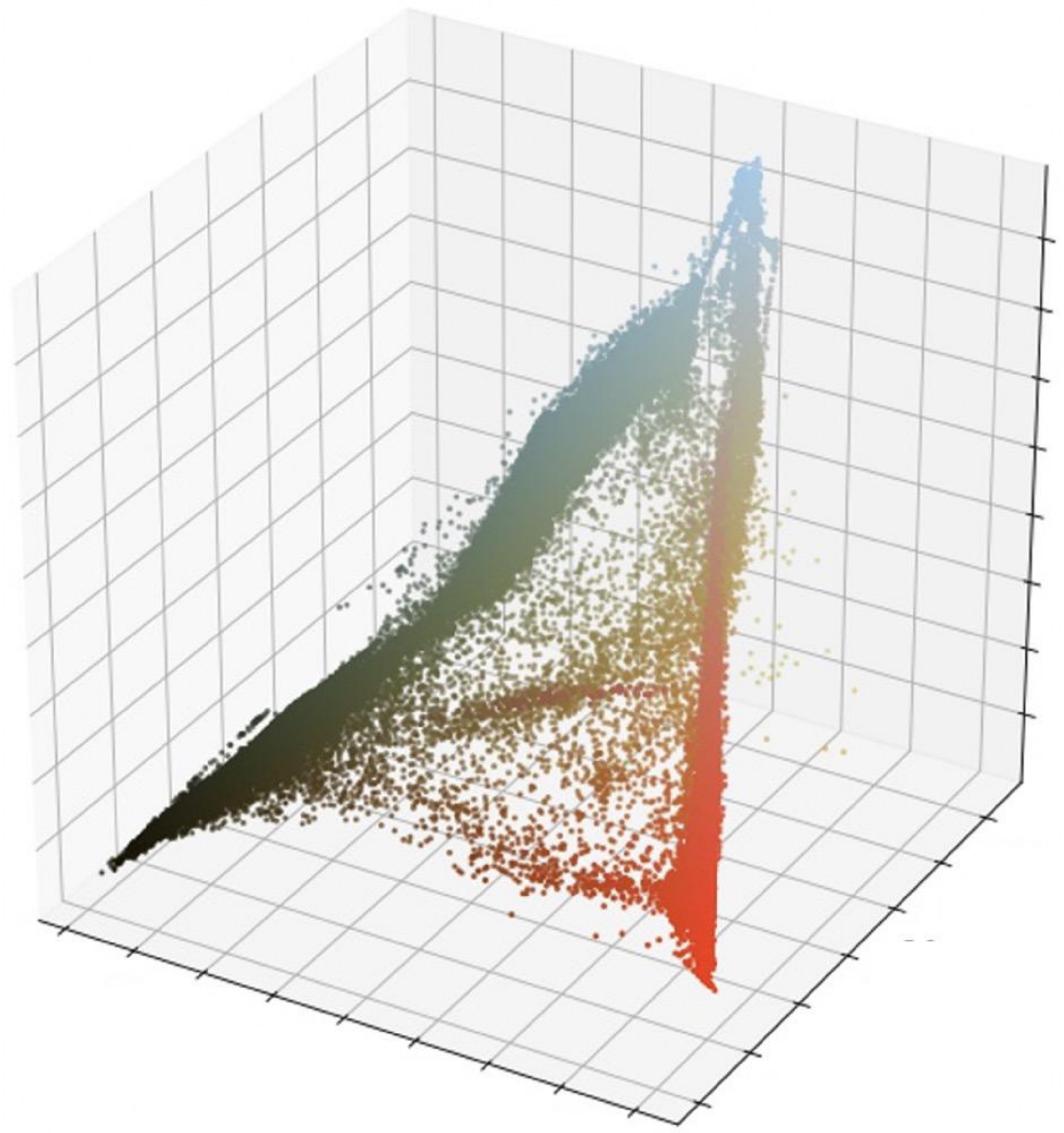} 
\caption{}
\label{subfig:cts_misscl}
\end{subfigure}
\caption{Result of color transfer using the cost funciton learned by Algorithm \ref{alg:cts_irot}. (a) Source image; (b) Target color transferred image; (c) Color transferred image using the cost function learned by Algorithm \ref{alg:cts_irot}; (d) Color transferred image using mis-specified cost function. Images in the bottom row show the point clouds of color pixels of the images above. The color in image (c) is much more faithful to (b), whereas (d) renders noticeable bias in color fading.}
\label{fig:cts_real}
% \vspace{-1em}
\end{figure}

\subsection{Continuous Inverse OT on Color Transfer}
In this test, we consider the inverse OT problem in color transfer between images. Given two RGB color images, the goal of color transfer is to impose the color palette of one image (target) onto the other (source). It is natural to use 3D points to encode the RGB color of pixels, then each image can be viewed as a point cloud in $\mathbb{R}^3$, thus forming a pairing data with $d_X=d_Y=3$ using optimal transport under certain ground cost $c$. Specifically, given a cost function $c$, we can learn the pairing that transfers the point cloud of the source image to the one of the target image by solving a forward OT problem \citep{seguy2018large}.
However, the cost function is critical in shaping the the color transfer result. This experiment is to show how an adaptively learned cost using Algorithm \ref{alg:cts_irot} can help to overcome the issue with mis-specified cost and avoid inaccurate color transfer.

%In this experiment, we generate true transferred image and its color point cloud (Figure \ref{subfig:cts_trans}, \ref{subfig:cts_transcl}) from two marginal images by applying a two-step approach \citep{seguy2018large}. The approach falls into a Sinkhorn based regularized OT framework. Then we apply Algorithm \ref{alg:cts_irot} to learn the cost function based on the pairing data. To verify that the cost function is learned accurately, we recompute the OT mapping using the learned cost function and see if the color transferred image is visually similar to the ground truth than the image induced by $L^1$ cost function. Notice that we still apply the approach \citep{seguy2018large} to compute the color transfer map with learned cost.

We obtain a pair of source and target images the USC-SIPI image database Volumn 3 \citep{uscsipi}. We set the ground truth cost as $\|x-y\|^2$, and follow \citep{seguy2018large} to generate the color transfer map. The original and the color transferred images are shown in Figure \ref{fig:cts_real}. The pairing of point clouds of these two images are used as the samples of $\pidata$ and fed into Algorithm \ref{alg:cts_irot}. In Algorithm \ref{alg:cts_irot}, we parameterize the cost function in the form of $c(x,y) = g((|x_1-y_1|, |x_2-y_2|, |x_3 - y_3|))$, where $g: \mathbb{R}^3 \to \mathbb{R}$ is a 5-layer neural network (including 3 hidden layers) with 32 neurons per hidden layer. The activation function is set to tanh. We use ADAM optimizer with learning rate $10^{-3}$. After the cost $c$ is learned using Algorithm \ref{alg:cts_irot}, we test the effect of the learned cost by applying it to the two given point clouds, and show the color transferred image using this learned cost in Figure \ref{subfig:cts_learned_img} and \ref{subfig:cts_learndcl}. For comparison, we also use a mis-specified cost function $c(x,y) = \|x - y\|$ to generate another color transferred image, as shown in Figure \ref{subfig:cts_miss_img} and its point cloud as \ref{subfig:cts_misscl}. As we can see, the image obtained using the cost learned by Algorithm \ref{alg:cts_irot} (Figure \ref{subfig:cts_learned_img}) is much more faithful to the true color (Figure \ref{subfig:cts_mar1}), whereas a mis-specified cost function yields an image (Figure \ref{subfig:cts_miss_img}) with clearly noticeable bias in color tone.

\section{Conclusion}
\label{sec:conclusion}
In this paper, we conduct a comprehensive study of the inverse problem for OT, i.e., learning the cost function given transport plan observations. We propose a novel inverse OT approach to learn the cost functions such that the induced OT plan is close to the observed plan or its samples. Unlike the bi-level optimization in the literature, we derive a novel formulation to learn the cost function by minimizing an unconstrained convex functional, which can be further augmented by customizable regularization on the cost. We provide a comprehensive characterization of the inverse problem, including the structure of its solution and mild conditions that yield solution uniqueness. We also developed two prototype numerical algorithms to recover the cost in the discrete and continuous settings separately. Numerical results show very promising efficiency and accuracy of our approach.

%\acks{We would like to acknowledge support for this project from the National Science Foundation (NSF grant IIS-9988642) and the Multidisciplinary Research Program of the Department of Defense (MURI N00014-00-1-0637). }

\newpage 

\appendix

\newpage
\section{Block Coordinate Descent for Discrete Inverse OT}
\label{appsec:bcd}

In Section \ref{subsec:dis_iot}, we mentioned that block coordinate descent (BCD) \citep{beck2015convergence} is a widely used approach for solving convex minimization with multiple variables, such as \eqref{eq:dis_irot_final}, when the closed form solution to subproblems are available.
However, convergence of BCD requires two key assumptions: the gradient of the objective function is Lipschitz continuous and the iterates generated by BCD are bounded. However, neither of these two assumptions holds for \eqref{eq:dis_irot_final}. 
To overcome these issues and ensure convergence of BCD, we can reformulate the minimization problem \eqref{eq:dis_irot_final} into an equivalent form, and show that the convergence can be guaranteed for this new variant of Algorithm \ref{alg:dis_irot}. This variant of BCD for \eqref{eq:dis_irot_final} is summarized in Algorithm \ref{alg:bcm}. The reformulation and its equivalency to \eqref{eq:dis_irot_final} is given in the following lemma.
\begin{lemma}
\label{lem:equiv}
The inverse OT minimization \eqref{eq:dis_irot_final} is equivalent to the following minimization:
\begin{equation} \label{eq:equiv}
    \min_{\alpha,\beta, c}\ \Psi(\alpha,\beta,c): = F(\alpha,\beta,c) + R(c),
\end{equation}
where the function $E(\alpha,\beta,c)$ in \eqref{eq:dis_irot_final} is replaced with
\[
F(\alpha,\beta,c) := - \langle \alpha, \mu \rangle - \langle \beta, \nu \rangle + \langle c, \pidata \rangle  + \varepsilon \log(\langle e^{(\alpha + \beta - c)/\varepsilon}, 1\rangle).
\]
The equivalence is in the sense that \eqref{eq:dis_irot_final} and \eqref{eq:equiv} share exactly the same set of solutions.
\end{lemma}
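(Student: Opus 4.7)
My plan is to establish the equivalence by analytically carrying out the inner minimization over $(\alpha,\beta)$ in both problems, and showing that the two resulting value functions of $c$ differ by only an additive constant, so $R(c)$ picks out the same optimal $c$.

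\textbf{Step 1 (inner minimum of $E$).} Fix any $c$. The subproblem $\min_{\alpha,\beta} E(\alpha,\beta,c)$ is precisely the negative of the dual entropy-regularized OT \eqref{eq:rot_dual}. Let $(\alpha^c,\beta^c)$ denote its (unique-up-to-the-usual-translation) minimizer. By the primal-dual formula \eqref{eq:rot_primal_var}, the induced $\pi^c_{ij} = \exp((\alpha^c_i + \beta^c_j - c_{ij})/\varepsilon)$ is an honest joint probability, so in particular $S(\alpha^c,\beta^c,c) := \sum_{ij}\exp((\alpha^c_i+\beta^c_j-c_{ij})/\varepsilon)=1$. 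Consequently, as in the proof of the main theorem, $E(\alpha^c,\beta^c,c) = \langle c,\pidata\rangle - \langle\alpha^c,\mu\rangle - \langle\beta^c,\nu\rangle + \varepsilon$.

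\textbf{Step 2 (same point minimizes $F$).} Next I would show that the same $(\alpha^c,\beta^c)$ also minimizes $F(\cdot,\cdot,c)$. The function $F$ is convex in $(\alpha,\beta)$: the log-sum-exp term is convex as the composition of log-sum-exp with an affine map, and the remaining terms are linear. So it suffices to check the first-order conditions. A direct computation gives
\begin{equation*}
\nabla_\alpha F = -\mu + \tfrac{1}{S}\,\pi\mathbf{1}_n,\qquad \nabla_\beta F = -\nu + \tfrac{1}{S}\,\pi^\top\mathbf{1}_m,
\end{equation*}
where $\pi_{ij} := \exp((\alpha_i+\beta_j-c_{ij})/\varepsilon)$. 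At $(\alpha^c,\beta^c)$ we have $S=1$ and $\pi=\pi^c$ has marginals $\mu,\nu$, so both gradients vanish. Therefore $(\alpha^c,\beta^c)$ is also an inner minimizer for $F$, and $F(\alpha^c,\beta^c,c) = \langle c,\pidata\rangle - \langle\alpha^c,\mu\rangle - \langle\beta^c,\nu\rangle + \varepsilon\log 1$.

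\textbf{Step 3 (matching outer problem).} Combining Steps 1 and 2, for every $c$ we get $\min_{\alpha,\beta} F(\alpha,\beta,c) = \min_{\alpha,\beta} E(\alpha,\beta,c) - \varepsilon$. Adding $R(c)$ and minimizing over $c$ yields
\begin{equation*}
\min_{\alpha,\beta,c}\bigl\{F(\alpha,\beta,c)+R(c)\bigr\} = \min_{\alpha,\beta,c}\bigl\{E(\alpha,\beta,c)+R(c)\bigr\}-\varepsilon.
\end{equation*}
Since the two objectives differ by a constant, the sets of optimal $c$ coincide, and for any optimal $c^*$ the corresponding optimal dual pair $(\alpha^{c^*},\beta^{c^*})$ is the same for both problems. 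This gives the claimed equivalence.

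\textbf{Anticipated obstacle.} The main subtlety is the translation symmetry of $F$: the log-sum-exp makes $F(\alpha+t\mathbf{1}_m,\beta+s\mathbf{1}_n,c) = F(\alpha,\beta,c)$ for any $s,t\in\mathbb{R}$, whereas $E$ only enjoys the one-parameter invariance $(\alpha,\beta)\mapsto(\alpha+t\mathbf{1}_m,\beta-t\mathbf{1}_n)$. Strictly speaking, $F$ therefore admits a larger orbit of minimizing triples than $E$. I would handle this by phrasing the ``same set of solutions'' either (i) as equality of the optimal $c$, which is what matters for cost learning, or (ii) by restricting to the normalized slice $\{\sum_{ij}\exp((\alpha_i+\beta_j-c_{ij})/\varepsilon)=1\}$, which is automatic at the inner $E$-optimum per Step 1 and can be imposed without loss for $F$ using its translation freedom. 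Under either convention Steps 1--3 yield the stated equivalence, and no further work beyond careful bookkeeping of these normalizations is needed.
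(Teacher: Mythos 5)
Your proof is correct, but it takes a more explicit route than the paper's. The paper obtains $F$ by re-deriving the dual of the entropy-regularized OT after appending the redundant normalization constraint $1^{\top}\pi 1=1$ with its own multiplier $\gamma$ (which is then eliminated, producing the log-sum-exp term), and then simply asserts that the statements of Theorem \ref{thm:convex} carry over, so that both \eqref{eq:dis_irot_final} and \eqref{eq:equiv} are equivalent to the same bi-level problem \eqref{eq:irot}; the details are omitted. You instead bypass the bi-level problem and compare the two inner value functions directly: you show that the optimal dual pair $(\alpha^c,\beta^c)$ for $E(\cdot,\cdot,c)$ satisfies $S=1$ and the marginal conditions, hence is also a stationary (and by convexity global) minimizer of $F(\cdot,\cdot,c)$, so $\min_{\alpha,\beta}F(\alpha,\beta,c)=\min_{\alpha,\beta}E(\alpha,\beta,c)-\varepsilon$ for every $c$ and the two outer problems in $c$ coincide. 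Your version buys a self-contained verification of exactly the step the paper waves away, and your ``anticipated obstacle'' is a genuine observation the paper glosses over: $F$ is invariant under the two-parameter shift $(\alpha,\beta)\mapsto(\alpha+t 1_m,\beta+s 1_n)$ while $E$ only under the one-parameter shift with $s=-t$, so the two problems share the same optimal costs $c^*$ (and the same equivalence classes in the sense of Definition \ref{def:equiv}) but not literally the same set of minimizing triples; reading ``same set of solutions'' as equality of optimal $c$, or normalizing to the slice $S=1$, is the right fix and is all that the subsequent BCD analysis needs.
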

\begin{proof}
For any fixed $c$, we introduce Lagrange multipliers $\alpha$, $\beta$ and $\gamma$ for the equality constraints $\pi1 = \mu$, $\pi^{\top}1 = \nu$, $1^{\top}\pi 1 = 1$ respectively. Then we can form the dual problem of the entropy regularized OT:
\[
\max_{\alpha,\beta} \ \langle \alpha, \mu \rangle + \langle \beta, \nu \rangle - \varepsilon \log(\langle e^{(\alpha + \beta - c)/\varepsilon}, 1\rangle).
\]
The other parts of \eqref{eq:dis_irot_final} remain the same.
Then it is easy to verify that all statements of Theorem \ref{thm:convex} (for discrete setting here) still hold true. Hence \eqref{eq:dis_irot_final} and \eqref{eq:equiv} are equivalent. We omit the details here.
\end{proof}

The main advantages of \eqref{eq:equiv} are that the function $F$ is still smooth and convex in $(\alpha,\beta,c)$, the minimization subproblems (of $\alpha$ and $\beta$) still have closed form solutions, and that $\nabla_{\alpha} F, \nabla_{\beta} F$, and $\nabla_{c} F$ are all 1-Lipschitz continuous. The Lipschitiz continuity is a consequence of the following lemma.
\begin{lemma} \label{lem:lipschitz}
For any $a\in \mathbb{R}^n$ and $b \in \mathbb{R}_{+}^n$, the function $f(x):=\langle a, x\rangle + \log(\sum_{i=1}^n b_i e^{x_i})$ is convex in $x$, and $\nabla f$ is 1-Lipschitz.
\end{lemma}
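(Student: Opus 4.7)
The plan is to reduce both claims to a direct analysis of the Hessian of $f$, which has a clean probability-simplex structure.

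First I would note that the linear term $\langle a, x \rangle$ is irrelevant for both convexity and the Lipschitz constant of $\nabla f$, so it suffices to study $g(x) := \log(\sum_{i=1}^n b_i e^{x_i})$. Writing $\tilde a_i := \log b_i$ (which is well defined since $b \in \mathbb{R}_+^n$), we have $g(x) = \log(\sum_i e^{x_i + \tilde a_i})$, i.e., $g$ is the classical log-sum-exp composed with a translation. Then I would compute the gradient and Hessian directly. Introducing the weights
\[
w_i(x) := \frac{b_i e^{x_i}}{\sum_{j=1}^n b_j e^{x_j}}, \qquad i=1,\dots,n,
\]
so that $w(x) \in \Delta^{n-1}$, a short calculation gives $\nabla g(x) = w(x)$ and
\[
\nabla^2 g(x) = \mathrm{diag}(w(x)) - w(x) w(x)^{\top}.
\]

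Next I would establish the two-sided bound $0 \preceq \nabla^2 g(x) \preceq I$. For any $v \in \mathbb{R}^n$,
\[
v^{\top} \nabla^2 g(x)\, v = \sum_{i=1}^n w_i v_i^2 - \Bigl(\sum_{i=1}^n w_i v_i\Bigr)^2.
\]
Since $w \in \Delta^{n-1}$, Jensen's inequality (equivalently, Cauchy--Schwarz) yields $(\sum_i w_i v_i)^2 \le \sum_i w_i v_i^2$, giving $\nabla^2 g(x) \succeq 0$ and hence convexity of $g$ (and $f$). For the upper bound, discarding the nonpositive term $-(\sum_i w_i v_i)^2$ and using $w_i \ge 0$, $\sum_i w_i = 1$,
\[
v^{\top} \nabla^2 g(x)\, v \le \sum_{i=1}^n w_i v_i^2 \le \max_i v_i^2 \le \|v\|_2^2,
\]
so $\nabla^2 g(x) \preceq I$.

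Finally, since $\nabla^2 f(x) = \nabla^2 g(x)$ and the spectral norm of the Hessian is uniformly bounded by $1$, the standard mean-value argument
\[
\nabla f(x) - \nabla f(y) = \int_0^1 \nabla^2 f\bigl(y + t(x-y)\bigr)\,(x-y)\, dt
\]
gives $\|\nabla f(x) - \nabla f(y)\|_2 \le \|x-y\|_2$, i.e., $\nabla f$ is $1$-Lipschitz. I do not anticipate any serious obstacle here; the only point that deserves a second look is the derivation of the Hessian and the verification that the discarded quadratic term carries the correct sign, both of which are routine.
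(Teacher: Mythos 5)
Your proposal is correct and follows essentially the same route as the paper: both compute the Hessian of the log-sum-exp part (the paper writes it with unnormalized weights $w_i = b_i e^{x_i}$ and a $\|\sqrt{w}\|_2^{-4}$ prefactor, which is equivalent to your normalized $\mathrm{diag}(w) - ww^{\top}$ form), establish $0 \preceq \nabla^2 f \preceq I$ via Cauchy--Schwarz, and conclude. Your explicit mean-value integral for the Lipschitz step is a minor addition the paper leaves implicit, but there is no substantive difference in approach.
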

\begin{proof}
It is straightforward to verify that $\partial_i f(x) = a_i + \frac{b_ie^{x_i}}{\sum_{j=1}^n b_j e^{x_j}}$. Furthermore, there is
\[
\partial_{ij}^2 f(x) = 
\begin{cases}
\frac{1}{\|\sqrt{w}\|_2^4}(\|\sqrt{w}\|_2^2 w_i - w_i^2), & \mbox{if } i = j, \\
-\frac{1}{\|\sqrt{w}\|_2^4}(w_iw_j), & \mbox{if } i \ne j,
\end{cases}
\]
where $w_i : = b_i e^{x_i}$ for $i=1,\dots,n$ and we adopted a slightly misused notation $\sqrt{w}:=(\sqrt{w_1},\dots,\sqrt{w_n})$. Then for any $\xi \in \mathbb{R}^n$, we can show that
\[
\xi^{\top} \nabla^2 f(x) \xi = \frac{1}{\|\sqrt{w}\|_2^4}( \|\sqrt{w}\|_2^2 \| \sqrt{w} \xi \|_2^2 - |\langle \sqrt{w}, \sqrt{w} \xi \rangle |^2 ),
\]
where $\sqrt{w}\xi := (\sqrt{w_1}\xi_1,\dots,\sqrt{w_n}\xi_n)$ stands for the componentwise product between $\sqrt{w}$ and $\xi$.
By Cauchy-Schwarz inequality, we have $\langle \sqrt{w}, \sqrt{w} \xi \rangle \le \|\sqrt{w}\|_2 \cdot \|\sqrt{w}\xi\|_2$, from which it is clear that $\xi^{\top} \nabla f(x) \xi \ge 0$. Hence $f$ is convex in $x$.
Furthermore, there is $\| \sqrt{w} \xi \|_2^2 \le  \| \sqrt{w}\|_2^2 \cdot \|  \xi \|_2^2$, from which we can see that 
\[
\xi^{\top} \nabla^2 f(x) \xi \le \frac{\|\sqrt{w}\xi\|_2^2}{\|\sqrt{w}\|_2^2} \le \|\xi\|_2^2,
\]
which implies that $\nabla f$ is 1-Lipschitz.
\end{proof}

To apply BCD with guaranteed convergence, we also need the boundedness of the iterates $x = (\alpha_k,\beta_k,c_k)$. 
In the literature of BCD or alternating minimization (AM), an assumption on the boundedness of the sub-level set $\{x: \Psi(x) \le \Psi(x_0)\}$ or that $\Psi$ is coercive is needed. However, neither of these holds for the \eqref{eq:equiv}. 
To ensure boundedness of the iterates, we can restrict our search of the cost matrix $c$ such that $0 \le c_{ij} \le M_c$ for some $M_c>0$ in addition to the constraint or regularization enforced by $R(c)$.
However, we do not have similar bounded restrictions on $\alpha$ and $\beta$.
To overcome this issue, we need to shift the solution of each minimization subproblem of BCD for \eqref{eq:equiv} without affecting its optimality.
To this end, we need the following definition.
\begin{definition}
A set $S \subset \mathbb{R}^n$ is said to have \emph{bounded variation} $M \in[0,\infty)$ if \[
\sup_{x\in S}\max_{1\le i,j \le n}|x_i - x_j| \le M.
\]
\end{definition}
Note that the requirement of bounded variation of $S$ is weaker than the boundedness of $S$.
Now we can show that the solution sets of the minimization subproblems in $\alpha$ and $\beta$ both have bounded variations in the following lemma. Note that we can always eliminate the zero components of $\mu$ and $\nu$ and regard them as strictly positive probability vectors.
\begin{lemma}
\label{lem:variation}
For any $c$ and any $\beta$, the set $\argmin_{\alpha} F(\alpha,\beta,c)$ has bounded variation $M_{\alpha}:=M_c + \varepsilon \log (\mu_{\max}/\mu_{\min})$. Similarly, for any $\alpha$, the set $\argmin_{\alpha} F(\alpha,\beta,c)$ has bounded variation $M_{\beta}:=M_c + \varepsilon \log (\nu_{\max}/\nu_{\min})$. Here $\mu_{\max}$ and $\mu_{\min}$ stand for the largest and smallest components of $\mu$ respectively. 
\end{lemma}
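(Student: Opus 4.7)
The plan is to use the first-order optimality conditions for the minimization subproblem in $\alpha$ (and symmetrically in $\beta$), solve them explicitly up to an additive constant, and then estimate the pairwise differences $\alpha_i-\alpha_{i'}$ directly using $0\le c_{ij}\le M_c$ together with the positivity of $\mu$. Since bounded variation is shift-invariant, the fact that $F$ is invariant under $\alpha \mapsto \alpha + t\mathbf{1}$ (because $\sum_i \mu_i = 1$) does not matter — every minimizer yields the same values of $\alpha_i-\alpha_{i'}$.

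More concretely, first I would differentiate
\[
F(\alpha,\beta,c) = -\langle \alpha,\mu\rangle - \langle\beta,\nu\rangle + \langle c,\pidata\rangle + \varepsilon\log\bigl\langle e^{(\alpha+\beta-c)/\varepsilon},1\bigr\rangle
\]
with respect to $\alpha_i$ and set it to zero. Writing $Z:=\sum_{k,l}e^{(\alpha_k+\beta_l-c_{kl})/\varepsilon}$, the optimality condition becomes
\[
e^{\alpha_i/\varepsilon}\sum_{j}e^{(\beta_j-c_{ij})/\varepsilon} = \mu_i\,Z, \qquad i=1,\dots,m.
\]
Taking the ratio at two indices $i,i'$ eliminates $Z$ and yields
\[
\alpha_i - \alpha_{i'} = \varepsilon\log\frac{\mu_i}{\mu_{i'}} + \varepsilon\log\frac{\sum_j e^{(\beta_j-c_{i'j})/\varepsilon}}{\sum_j e^{(\beta_j-c_{ij})/\varepsilon}}.
\]
This identity holds for every minimizer, so it suffices to bound its right-hand side uniformly over $i,i'$.

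Next I would bound the two terms separately. The first is controlled by $\varepsilon\log(\mu_{\max}/\mu_{\min})$, using the assumption that $\mu$ has strictly positive components. For the second, the constraint $0\le c_{ij}\le M_c$ gives $e^{-M_c/\varepsilon}\le e^{-c_{ij}/\varepsilon}\le 1$, so for every row $i$,
\[
e^{-M_c/\varepsilon}\sum_j e^{\beta_j/\varepsilon} \;\le\; \sum_j e^{(\beta_j-c_{ij})/\varepsilon} \;\le\; \sum_j e^{\beta_j/\varepsilon}.
\]
Hence the ratio appearing in the log lies in $[e^{-M_c/\varepsilon},\,e^{M_c/\varepsilon}]$ and its $\varepsilon\log$ lies in $[-M_c,M_c]$. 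Adding the two estimates gives $|\alpha_i-\alpha_{i'}|\le M_c + \varepsilon\log(\mu_{\max}/\mu_{\min}) = M_\alpha$, which is exactly the claimed bounded variation of $\argmin_\alpha F(\alpha,\beta,c)$. The argument for $\argmin_\beta F(\alpha,\beta,c)$ is symmetric, swapping the roles of $\mu\leftrightarrow\nu$, $\alpha\leftrightarrow\beta$, and summing over rows instead of columns, yielding $M_\beta = M_c + \varepsilon\log(\nu_{\max}/\nu_{\min})$.

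There is no real obstacle; the only subtlety is being careful that $F$ is translation-invariant in $\alpha$ (so the $\argmin$ set is an unbounded line) yet its image under the pairwise-difference map is a single point, which is precisely what bounded variation tracks. The crucial computational step is the cancellation of the normalizing constant $Z$ when taking ratios of optimality conditions; once that is in place, the bounded variation bound falls out of elementary log/exp inequalities using $c\in[0,M_c]^{m\times n}$ and the positivity of the marginals.
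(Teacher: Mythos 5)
Your proposal is correct and follows essentially the same route as the paper: both start from the first-order optimality condition $e^{\alpha_i/\varepsilon}\sum_j e^{(\beta_j-c_{ij})/\varepsilon}=\mu_i Z$, use $0\le c_{ij}\le M_c$ to trap the row sums within a factor $e^{M_c/\varepsilon}$, and conclude the variation bound $M_c+\varepsilon\log(\mu_{\max}/\mu_{\min})$. The only cosmetic difference is that you cancel the normalizing constant by taking ratios at two indices, whereas the paper bounds each $\alpha_i^*$ above and below by a common ($i$-independent) quantity plus $\varepsilon\log\mu_i$ and then differences; the estimates are identical.
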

\begin{proof}
For any $c$ and $\beta$, we can check the optimality condition of an arbitrary $\alpha^* \in \argmin_{\alpha} F(\alpha,\beta,c)$. This condition is given by $\nabla_{\alpha} F(\alpha^*,\beta,c)=0$, which yields
\begin{equation}\label{eq:alpha_oc}
\frac{e^{\alpha_i^*/\varepsilon} \sum_{j} e^{(\beta_j - c_{ij})/\varepsilon}}{\sum_{i,j} e^{(\alpha^*_i +\beta_j - c_{ij})/\varepsilon}} = \mu_i.
\end{equation}
Taking logarithm of both sides and recalling the notation $s$ in \eqref{eq:def_s}, we obtain
\[
\frac{\alpha_i^*}{\varepsilon} = \log \mu_i + \log \varepsilon^{-1} s(\alpha^*,\beta,c) - \log \del[2]{ \sum_{j} e^{(\beta_j - c_{ij})/\varepsilon} }.
\]
Since $0\le c_{ij} \le M_c$, we know $e^{-M_c/\varepsilon} \le e^{-c_{ij}/\varepsilon} \le 1$, and hence from the equality above we obtain
\[
\varepsilon \log \del[2]{\frac{e^{\beta_j/\varepsilon}s(\alpha^*,\beta,c)}{\varepsilon}} + \varepsilon \log\mu_i \le 
\alpha_i^* \le
M_c +  \varepsilon \log \del[2]{\frac{e^{\beta_j/\varepsilon} s(\alpha^*,\beta,c)}{\varepsilon}} + \varepsilon \log\mu_i.
\]
Therefore the variation of $\alpha^*$, i.e., $\max_{1\le i, j \le n}|\alpha_i^* - \alpha_j^*|$, is bounded by $M_c + \varepsilon \log(\mu_{\max}/\mu_{\min})$. The proof for $\beta^*$ is similar and hence omitted.
\end{proof}

\begin{algorithm}[t]
\caption{Block Coordinate Descent (BCD) for Discrete Inverse OT \eqref{eq:dis_irot_final}}
\label{alg:bcm}
\begin{algorithmic}
\STATE {\bfseries Input:} Observed matching matrix $\hat{\pi} \in \mathbb{R}^{m\times n}$ and its marginals $\mu \in\mathbb{R}^m,\nu\in\mathbb{R}^n$. 
\STATE {\bfseries Initialize:} $\alpha \in \mathbb{R}^{m\times 1}, \beta\in\mathbb{R}^{n\times 1}, u=\exp(\alpha),v=\exp(\beta)$, $c\in\mathbb{R}^{m\times n}$, $c_{ij} \in [0,M_c]$.
\REPEAT
\STATE $K \gets e^{-c}$
\STATE $u \gets \mu/(K v)$ and rescale $u$ by $\kappa$ such that $e^{-M_\alpha} \le \lambda u \le e^{M_\alpha}$ %\COMMENT{Division is component-wise.}
\STATE $v \gets \nu/(K^{\top} u)$ and rescale $v$ by $\kappa$ such that $e^{-M_\beta} \le \kappa v \le e^{M_\beta}$  %\COMMENT{Division is component-wise.}
% \STATE $K \gets \hat{\pi}/(u v^{\top})$ %\COMMENT{Division is entry-wise.}
\STATE $c \in \argmin_{0 \le c_{ij} \le M_c} R(c) + F(\log u, \log v, c)$ 
\UNTIL{convergent}
\STATE {\bfseries Output:} $\alpha = \varepsilon \log u$, $\beta= \varepsilon \log v$, $c = \varepsilon c$.
\end{algorithmic}
\end{algorithm}

Now we are ready to establish the convergence of Algorithm \ref{alg:bcm}.
For simplicity, we directly apply rescaling of $\alpha \leftarrow \alpha/\varepsilon, \beta \leftarrow \beta/\varepsilon, c \leftarrow c/\varepsilon$ which results in an equivalent problem of \eqref{eq:equiv} before Algorithm \ref{alg:bcm} starts, and rescale them back once the computation is finished.
Due to Lemma \ref{lem:variation}, we can always perform a shifting $\alpha \leftarrow \alpha - t 1$ such that $\|\alpha\|_{\infty} \le M_c/2$. The shifting constant $t \in \mathbb{R}$ can be simply set to $(\alpha_{(1)} - \alpha_{(n)})/2$, where $\alpha_{(1)}$ and $\alpha_{(n)}$ stand for the largest and smallest components of $\alpha$, respectively. As we can see, such shifting does not alter the optimality of $\alpha$ and it still satisfies \eqref{eq:alpha_oc}. Also note that this shifting is equivalent to rescaling $u=e^{\alpha}$ into $[e^{-M_c}, e^{M_c}]$ by $\kappa = e^t$, as presented in Algorithm \ref{alg:bcm}. The convergence of Algorithm \ref{alg:bcm} is given in the following theorem.

\begin{theorem}
\label{thm:bcm}
Let $(\alpha_k,\beta_k,c_k)$ be the sequence generated by the BCD Algorithm \ref{alg:bcm} from any initial $(\alpha_0,\beta_0,c_0)$, then
\begin{equation}\label{eq:bcm_rate}
0 \le \Psi_k - \Psi^* \le \min\cbr[2]{\frac{2}{9D^2}-2, 2, \Psi_0 - \Psi^*}\frac{18D^2}{k},
\end{equation}
where $\Psi_k:=\Psi(\alpha_k,\beta_k,c_k)$ and $D^2 = m M_{\alpha}^2 + n M_{\beta}^2 + mn M_c^2$.
\end{theorem}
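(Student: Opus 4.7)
The plan is to recognize Algorithm \ref{alg:bcm} as a standard three-block cyclic BCD applied to the convex, block-smooth objective $\Psi = F + R$ on a compact convex domain, and then invoke the $O(1/k)$ sublinear rate of \citep{beck2015convergence} verbatim. By Lemma \ref{lem:equiv}, minimizing $\Psi$ is equivalent to the original problem \eqref{eq:dis_irot_final}, so it suffices to bound $\Psi_k - \Psi^*$.

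First I would verify the three hypotheses needed for the cited BCD rate. Joint convexity of $\Psi$ follows because $F$ is a convex smooth function (by the analogue of Theorem \ref{thm:convex} noted in Lemma \ref{lem:equiv}) and $R$ is convex by assumption. For block-Lipschitz smoothness, after the global rescaling $\alpha \leftarrow \alpha/\varepsilon$, $\beta \leftarrow \beta/\varepsilon$, $c \leftarrow c/\varepsilon$ (which reduces \eqref{eq:equiv} to an $\varepsilon = 1$ instance), the block-partial functions $F(\cdot,\beta,c)$, $F(\alpha,\cdot,c)$ and $F(\alpha,\beta,\cdot)$ all take the form $\langle a,x\rangle + \log(\sum_i b_i e^{x_i})$ studied in Lemma \ref{lem:lipschitz}, so each partial gradient is $1$-Lipschitz. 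The remaining hypothesis is compactness of the feasible set, which requires a little care because the raw problem is unbounded in $\alpha$ and $\beta$.

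The key observation is that $F$ is \emph{invariant under constant shifts} $\alpha \leftarrow \alpha + t\mathbf{1}_m$ and $\beta \leftarrow \beta + s\mathbf{1}_n$, since $-t\langle \mathbf{1}_m,\mu\rangle$ exactly cancels the $t$ that appears inside $\log(\sum_{ij} e^{\alpha_i + \beta_j - c_{ij}})$ (and similarly for $\beta$). Thus the rescalings of $u$ by $\kappa$ and $v$ by another $\kappa$ in Algorithm \ref{alg:bcm} neither increase nor decrease the objective, i.e.\ they pick a different representative of the same equivalence class. By Lemma \ref{lem:variation}, each block minimizer has variation at most $M_\alpha$ (respectively $M_\beta$), so we can always choose the representative with $\|\alpha\|_\infty \le M_\alpha/2$ (resp.\ $\|\beta\|_\infty \le M_\beta/2$). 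Combined with the explicit projection of the $c$-update onto $[0,M_c]^{m\times n}$, this confines the iterates to the compact convex box
\begin{equation*}
\Omega := [-\tfrac{M_\alpha}{2},\tfrac{M_\alpha}{2}]^m \times [-\tfrac{M_\beta}{2},\tfrac{M_\beta}{2}]^n \times [0,M_c]^{m\times n},
\end{equation*}
whose squared Euclidean diameter is exactly $D^2 = m M_\alpha^2 + n M_\beta^2 + mn M_c^2$.

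With joint convexity, unit block-Lipschitz constants, and the bounded domain $\Omega$ in hand, the cyclic BCD convergence theorem of \citep{beck2015convergence} applies directly and delivers a bound of the form $\Psi_k - \Psi^* \le C \cdot D^2 / k$. Carrying the universal constants from that analysis through a three-block cycle with $L=1$ produces the prefactor $18 D^2/k$, and the leading $\min\{2/(9D^2) - 2,\, 2,\, \Psi_0 - \Psi^*\}$ arises from the standard device of interpolating the $O(1/k)$ tail bound with the trivial one-step bound and with the initial suboptimality. The main obstacle in this plan is the shifting/compactness reduction: one must verify that replacing the unconstrained Sinkhorn-style block minimizer by its shifted representative inside $\Omega$ leaves the block subproblem optimum unchanged and preserves the sufficient-decrease inequality that underlies Beck--Tetruashvili's analysis. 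Once this equivalence is established, the rest of the proof is a black-box invocation of the cited convergence rate.
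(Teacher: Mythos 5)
Your proposal follows essentially the same route as the paper's proof: establish $1$-Lipschitz block gradients via Lemma \ref{lem:lipschitz}, use Lemma \ref{lem:variation} together with the shift-invariance of $F$ to confine the iterates to a compact box of squared diameter $D^2$, and then invoke a known sublinear cyclic-BCD rate as a black box. The only substantive difference is the citation: the paper derives the exact constants $\min\cbr{2/(9D^2)-2,\,2,\,\Psi_0-\Psi^*}\cdot 18D^2/k$ from Theorem 2(a) of \citet{hong2017iteration}, which covers multi-block BCD with a nonsmooth block regularizer, rather than from the two-block alternating-minimization analysis of \citet{beck2015convergence} that you cite.
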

\begin{proof}
By Lemma \ref{lem:lipschitz} we know $\nabla_{\beta}F(\alpha,\beta,c)$, $\nabla_{\alpha}F(\alpha,\beta,c)$ and $\nabla_{c}F(\alpha,\beta,c)$ are 1-Lipschitz continuous. Moreover, Algorithm \ref{alg:bcm} is equivalent to the standard BCD with where the iterates lie in the bounded set $\{(\alpha,\beta,c): \|\alpha\|_{\infty} \le M_\alpha,\ \|\beta\|_{\infty} \le M_\beta,\ \|c\|_{\infty} \le M_c\}$ due to Lemma \ref{lem:variation}. By invoking \cite[Theorem 2(a)]{hong2017iteration}, we obtain \eqref{eq:bcm_rate}.
\end{proof}

\vskip 0.2in
\bibliography{iot}

\begin{thebibliography}{47}
\providecommand{\natexlab}[1]{#1}
\providecommand{\url}[1]{\texttt{#1}}
\expandafter\ifx\csname urlstyle\endcsname\relax
  \providecommand{\doi}[1]{doi: #1}\else
  \providecommand{\doi}{doi: \begingroup \urlstyle{rm}\Url}\fi

\bibitem[Abadeh et~al.(2018)Abadeh, Nguyen, Kuhn, and
  Esfahani]{abadeh2018wasserstein}
S.~S. Abadeh, V.~A. Nguyen, D.~Kuhn, and P.~M.~M. Esfahani.
\newblock Wasserstein distributionally robust kalman filtering.
\newblock In \emph{Advances in Neural Information Processing Systems}, pages
  8474--8483, 2018.

\bibitem[Abrishami et~al.(2019)Abrishami, Guillen, Rule, Schutzman, Solomon,
  Weighill, and Wu]{abrishami2019geometry}
T.~Abrishami, N.~Guillen, P.~Rule, Z.~Schutzman, J.~Solomon, T.~Weighill, and
  S.~Wu.
\newblock Geometry of graph partitions via optimal transport.
\newblock \emph{arXiv preprint arXiv:1910.09618}, 2019.

\bibitem[Amari et~al.(2018)Amari, Karakida, and Oizumi]{amari2018information}
S.-i. Amari, R.~Karakida, and M.~Oizumi.
\newblock Information geometry connecting wasserstein distance and
  kullback--leibler divergence via the entropy-relaxed transportation problem.
\newblock \emph{Information Geometry}, 1\penalty0 (1):\penalty0 13--37, 2018.

\bibitem[Amari et~al.(2019)Amari, Karakida, Oizumi, and
  Cuturi]{amari2019information}
S.-i. Amari, R.~Karakida, M.~Oizumi, and M.~Cuturi.
\newblock Information geometry for regularized optimal transport and
  barycenters of patterns.
\newblock \emph{Neural computation}, 31\penalty0 (5):\penalty0 827--848, 2019.

\bibitem[Ambrosio et~al.(2003)Ambrosio, Caffarelli, Brenier, Buttazzo, and
  Villani]{ambrosio2003optimal}
L.~Ambrosio, L.~A. Caffarelli, Y.~Brenier, G.~Buttazzo, and C.~Villani.
\newblock \emph{Optimal transportation and applications}, volume 1813 of
  \emph{Lecture Notes in Mathematics}.
\newblock Springer-Verlag, Berlin, 2003.

\bibitem[Aude et~al.(2016)Aude, Cuturi, Peyr{\'e}, and
  Bach]{aude2016stochastic}
G.~Aude, M.~Cuturi, G.~Peyr{\'e}, and F.~Bach.
\newblock Stochastic optimization for large-scale optimal transport.
\newblock \emph{arXiv preprint arXiv:1605.08527}, 2016.

\bibitem[Beck(2015)]{beck2015convergence}
A.~Beck.
\newblock On the convergence of alternating minimization for convex programming
  with applications to iteratively reweighted least squares and decomposition
  schemes.
\newblock \emph{SIAM Journal on Optimization}, 25\penalty0 (1):\penalty0
  185--209, 2015.

\bibitem[Bellet et~al.(2013)Bellet, Habrard, and Sebban]{bellet2013survey}
A.~Bellet, A.~Habrard, and M.~Sebban.
\newblock A survey on metric learning for feature vectors and structured data.
\newblock \emph{arXiv preprint arXiv:1306.6709}, 2013.

\bibitem[Courty et~al.(2016)Courty, Flamary, Tuia, and
  Rakotomamonjy]{courty2016optimal}
N.~Courty, R.~Flamary, D.~Tuia, and A.~Rakotomamonjy.
\newblock Optimal transport for domain adaptation.
\newblock \emph{IEEE transactions on pattern analysis and machine
  intelligence}, 39\penalty0 (9):\penalty0 1853--1865, 2016.

\bibitem[Cremonesi et~al.(2010)Cremonesi, Koren, and
  Turrin]{cremonesi2010performance}
P.~Cremonesi, Y.~Koren, and R.~Turrin.
\newblock Performance of recommender algorithms on top-n recommendation tasks.
\newblock In \emph{Proceedings of the fourth ACM conference on Recommender
  systems}, pages 39--46, 2010.

\bibitem[Cuturi(2013)]{cuturi2013sinkhorn}
M.~Cuturi.
\newblock Sinkhorn distances: Lightspeed computation of optimal transport.
\newblock In \emph{Advances in Neural Information Processing Systems}, pages
  2292--2300, 2013.

\bibitem[Cuturi and Avis(2014)]{cuturi2014ground}
M.~Cuturi and D.~Avis.
\newblock Ground metric learning.
\newblock \emph{The Journal of Machine Learning Research}, 15\penalty0
  (1):\penalty0 533--564, 2014.

\bibitem[Dagnew and Castellani(2015)]{dagnew2015supervised}
T.~M. Dagnew and U.~Castellani.
\newblock Supervised learning of diffusion distance to improve histogram
  matching.
\newblock In \emph{International Workshop on Similarity-Based Pattern
  Recognition}, pages 28--37. Springer, 2015.

\bibitem[Dessein et~al.(2017)Dessein, Papadakis, and
  Deledalle]{dessein2017parameter}
A.~Dessein, N.~Papadakis, and C.-A. Deledalle.
\newblock Parameter estimation in finite mixture models by regularized optimal
  transport: A unified framework for hard and soft clustering.
\newblock \emph{arXiv preprint arXiv:1711.04366}, 2017.

\bibitem[Dessein et~al.(2018)Dessein, Papadakis, and
  Rouas]{dessein2018regularized}
A.~Dessein, N.~Papadakis, and J.-L. Rouas.
\newblock Regularized optimal transport and the rot mover's distance.
\newblock \emph{The Journal of Machine Learning Research}, 19\penalty0
  (1):\penalty0 590--642, 2018.

\bibitem[Dupuy and Galichon(2014)]{dupuy2014personality}
A.~Dupuy and A.~Galichon.
\newblock Personality traits and the marriage market.
\newblock \emph{Journal of Political Economy}, 122\penalty0 (6):\penalty0
  1271--1319, 2014.

\bibitem[Dvurechensky et~al.(2018)Dvurechensky, Gasnikov, and
  Kroshnin]{dvurechensky2018computational}
P.~Dvurechensky, A.~Gasnikov, and A.~Kroshnin.
\newblock Computational optimal transport: Complexity by accelerated gradient
  descent is better than by sinkhorn's algorithm.
\newblock In \emph{International conference on machine learning}, pages
  1367--1376. PMLR, 2018.

\bibitem[Dvurechensky et~al.(2020)Dvurechensky, Gasnikov, Omelchenko, and
  Tiurin]{dvurechensky2020stable}
P.~Dvurechensky, A.~Gasnikov, S.~Omelchenko, and A.~Tiurin.
\newblock A stable alternative to sinkhorn's algorithm for regularized optimal
  transport.
\newblock In \emph{International Conference on Mathematical Optimization Theory
  and Operations Research}, pages 406--423. Springer, 2020.

\bibitem[Galichon and Salani{\'e}(2015)]{galichon2015cupids}
A.~Galichon and B.~Salani{\'e}.
\newblock Cupid's invisible hand: Social surplus and identification in matching
  models.
\newblock \emph{Available at SSRN 1804623}, 2015.

\bibitem[Genevay(2019)]{genevay2019entropy}
A.~Genevay.
\newblock \emph{Entropy-regularized optimal transport for machine learning}.
\newblock PhD thesis, Paris Sciences et Lettres, 2019.

\bibitem[Genevay et~al.(2018)Genevay, Chizat, Bach, Cuturi, and
  Peyr{\'e}]{genevay2018sample}
A.~Genevay, L.~Chizat, F.~Bach, M.~Cuturi, and G.~Peyr{\'e}.
\newblock Sample complexity of sinkhorn divergences.
\newblock \emph{arXiv preprint arXiv:1810.02733}, 2018.

\bibitem[Glorot and Bengio(2010)]{xvaier}
X.~Glorot and Y.~Bengio.
\newblock Understanding the difficulty of training deep feedforward neural
  networks.
\newblock In \emph{International Conference on Artificial Intelligence and
  Statistics}, 2010.

\bibitem[Hong et~al.(2017)Hong, Wang, Razaviyayn, and Luo]{hong2017iteration}
M.~Hong, X.~Wang, M.~Razaviyayn, and Z.-Q. Luo.
\newblock Iteration complexity analysis of block coordinate descent methods.
\newblock \emph{Mathematical Programming}, 163\penalty0 (1-2):\penalty0
  85--114, 2017.

\bibitem[Huang et~al.(2016)Huang, Guo, Kusner, Sun, Sha, and
  Weinberger]{huang2016supervised}
G.~Huang, C.~Guo, M.~J. Kusner, Y.~Sun, F.~Sha, and K.~Q. Weinberger.
\newblock Supervised word mover's distance.
\newblock In \emph{Advances in Neural Information Processing Systems}, pages
  4862--4870, 2016.

\bibitem[Janati et~al.(2020)Janati, Muzellec, Peyr{\'e}, and
  Cuturi]{janati2020entropic}
H.~Janati, B.~Muzellec, G.~Peyr{\'e}, and M.~Cuturi.
\newblock Entropic optimal transport between unbalanced gaussian measures has a
  closed form.
\newblock \emph{Advances in Neural Information Processing Systems}, 33, 2020.

\bibitem[Kingma and Ba(2014)]{kingma2014adam}
D.~P. Kingma and J.~Ba.
\newblock Adam: A method for stochastic optimization, 2014.

\bibitem[Koren et~al.(2009)Koren, Bell, and Volinsky]{koren2009matrix}
Y.~Koren, R.~Bell, and C.~Volinsky.
\newblock Matrix factorization techniques for recommender systems.
\newblock \emph{Computer}, 42\penalty0 (8):\penalty0 30--37, 2009.

\bibitem[Le and Cuturi(2015)]{le2015unsupervised}
T.~Le and M.~Cuturi.
\newblock Unsupervised riemannian metric learning for histograms using
  aitchison transformations.
\newblock In \emph{International Conference on Machine Learning}, pages
  2002--2011, 2015.

\bibitem[Li et~al.(2019)Li, Ye, Zhou, and Zha]{li2019learning}
R.~Li, X.~Ye, H.~Zhou, and H.~Zha.
\newblock Learning to match via inverse optimal transport.
\newblock \emph{Journal of Machine Learning Research}, 20\penalty0
  (80):\penalty0 1--37, 2019.

\bibitem[Liu et~al.(2019)Liu, Balsubramani, and Zou]{liu2019learning}
R.~Liu, A.~Balsubramani, and J.~Zou.
\newblock Learning transport cost from subset correspondence.
\newblock \emph{arXiv preprint arXiv:1909.13203}, 2019.

\bibitem[Mnih and Salakhutdinov(2008)]{mnih2008probabilistic}
A.~Mnih and R.~R. Salakhutdinov.
\newblock Probabilistic matrix factorization.
\newblock In \emph{Advances in neural information processing systems}, pages
  1257--1264, 2008.

\bibitem[Papadakis(2015)]{papadakis2015optimal}
N.~Papadakis.
\newblock \emph{Optimal transport for image processing}.
\newblock PhD thesis, Universit\'{e} de Bordeaux, 2015.

\bibitem[Paszke et~al.(2017)Paszke, Gross, Chintala, Chanan, Yang, DeVito, Lin,
  Desmaison, Antiga, and Lerer]{paszke2017automatic}
A.~Paszke, S.~Gross, S.~Chintala, G.~Chanan, E.~Yang, Z.~DeVito, Z.~Lin,
  A.~Desmaison, L.~Antiga, and A.~Lerer.
\newblock Automatic differentiation in pytorch.
\newblock In \emph{NIPS-W}, 2017.

\bibitem[Paty and Cuturi(2020)]{paty2020regularized}
F.-P. Paty and M.~Cuturi.
\newblock Regularized optimal transport is ground cost adversarial.
\newblock In \emph{International Conference on Machine Learning}, pages
  7532--7542. PMLR, 2020.

\bibitem[Peyr{\'e} and Cuturi(2019)]{peyre2019computational}
G.~Peyr{\'e} and M.~Cuturi.
\newblock Computational optimal transport.
\newblock \emph{Foundations and Trends{\textregistered} in Machine Learning},
  11\penalty0 (5-6):\penalty0 355--607, 2019.

\bibitem[Rendle(2012)]{rendle2012factorization}
S.~Rendle.
\newblock Factorization machines with libfm.
\newblock \emph{ACM Transactions on Intelligent Systems and Technology (TIST)},
  3\penalty0 (3):\penalty0 1--22, 2012.

\bibitem[Rolet et~al.(2016)Rolet, Cuturi, and Peyr{\'e}]{rolet2016fast}
A.~Rolet, M.~Cuturi, and G.~Peyr{\'e}.
\newblock Fast dictionary learning with a smoothed wasserstein loss.
\newblock In \emph{Artificial Intelligence and Statistics}, pages 630--638,
  2016.

\bibitem[Schmitzer(2019)]{schmitzer2019stabilized}
B.~Schmitzer.
\newblock Stabilized sparse scaling algorithms for entropy regularized
  transport problems.
\newblock \emph{SIAM Journal on Scientific Computing}, 41\penalty0
  (3):\penalty0 A1443--A1481, 2019.

\bibitem[Seguy et~al.(2018)Seguy, Damodaran, Flamary, Courty, Rolet, and
  Blondel]{seguy2018large}
V.~Seguy, B.~B. Damodaran, R.~Flamary, N.~Courty, A.~Rolet, and M.~Blondel.
\newblock Large scale optimal transport and mapping estimation.
\newblock In \emph{International Conference on Learning Representations}, 2018.

\bibitem[Shafieezadeh-Abadeh et~al.(2017)Shafieezadeh-Abadeh, Kuhn, and
  Esfahani]{shafieezadeh2017regularization}
S.~Shafieezadeh-Abadeh, D.~Kuhn, and P.~M. Esfahani.
\newblock Regularization via mass transportation.
\newblock \emph{arXiv preprint arXiv:1710.10016}, 2017.

\bibitem[Stuart and Wolfram(2019)]{stuart2019inverse}
A.~M. Stuart and M.-T. Wolfram.
\newblock Inverse optimal transport.
\newblock \emph{arXiv:1905.03950}, 2019.

\bibitem[Villani(2008)]{villani2008optimal}
C.~Villani.
\newblock \emph{Optimal transport: old and new}, volume 338.
\newblock Springer Science \& Business Media, 2008.

\bibitem[Wang and Guibas(2012)]{wang2012supervised}
F.~Wang and L.~J. Guibas.
\newblock Supervised earth mover's distance learning and its computer vision
  applications.
\newblock In \emph{European Conference on Computer Vision}, pages 442--455.
  Springer, 2012.

\bibitem[Weber(1997)]{uscsipi}
A.~Weber.
\newblock The usc-sipi image database version 5.
\newblock \emph{USC-SIPI Report}, 315(1), 1997.

\bibitem[Xu et~al.(2019)Xu, Sun, and Liu]{xu2019learning}
L.~Xu, H.~Sun, and Y.~Liu.
\newblock Learning with batch-wise optimal transport loss for 3d shape
  recognition.
\newblock In \emph{Proceedings of the IEEE Conference on Computer Vision and
  Pattern Recognition}, pages 3333--3342, 2019.

\bibitem[Yang et~al.(2018)Yang, Wu, Zhan, Liu, and Jiang]{yang2018complex}
Y.~Yang, Y.-F. Wu, D.-C. Zhan, Z.-B. Liu, and Y.~Jiang.
\newblock Complex object classification: A multi-modal multi-instance
  multi-label deep network with optimal transport.
\newblock In \emph{Proceedings of the 24th ACM SIGKDD International Conference
  on Knowledge Discovery \& Data Mining}, pages 2594--2603. ACM, 2018.

\bibitem[Zhao and Zhou(2018)]{zhao2018label}
P.~Zhao and Z.-H. Zhou.
\newblock Label distribution learning by optimal transport.
\newblock In \emph{Thirty-Second AAAI Conference on Artificial Intelligence},
  2018.

\end{thebibliography}

\end{document}